\documentclass[a4paper,fleqn]{cas-sc}

\usepackage[authoryear,longnamesfirst]{natbib}

\usepackage[utf8]{inputenc}
\usepackage[T1]{fontenc}
\usepackage{amsmath,amssymb,mathtools}
\usepackage{enumitem}
\usepackage{algorithm}
\usepackage{algorithmic}
\usepackage{graphicx}
\usepackage{subcaption}
\graphicspath{{figs/}}
\usepackage{booktabs}
\usepackage{multirow}
\usepackage{bm}
\usepackage{placeins}

\newtheorem{theorem}{Theorem}[section]
\newtheorem{lemma}[theorem]{Lemma}
\newtheorem{proposition}[theorem]{Proposition}
\newtheorem{corollary}[theorem]{Corollary}
\newdefinition{definition}{Definition}[section]
\newdefinition{assumption}{Assumption}
\newdefinition{remark}{Remark}[theorem]
\newproof{proof}{Proof}

\newcommand{\R}{\mathbb{R}}
\newcommand{\N}{\mathbb{N}}
\newcommand{\calF}{\mathcal{F}}
\newcommand{\calL}{\mathcal{L}}
\newcommand{\ip}[2]{\langle #1,\, #2 \rangle}
\newcommand{\norm}[1]{\| #1 \|}
\newcommand{\dist}{\mathrm{dist}}
\newcommand{\argmin}{\operatorname{arg\,min}}
\newcommand{\argmax}{\operatorname{arg\,max}}
\newcommand{\lb}{\mathrm{lb}}

\hypersetup{hypertexnames=false}
\begin{document}
\let\WriteBookmarks\relax
\def\floatpagepagefraction{1}
\def\textpagefraction{.001}

\shorttitle{Two-Phase Adaptive Balanced Penalty for Controllable Pareto Front Learning}
\shortauthors{N.V. Hoang, D.D. Le and T.N. Thang}

\title[mode=title]{A Two-Phase Adaptive Balanced Penalty Method for Controllable
  Pareto Front Learning under Split Feasibility Conditions}

\author[1]{Nguyen Viet Hoang}[orcid=0009-0005-5535-3182]
\ead{Hoang.NV227019@sis.hust.edu.vn}

\credit{Methodology, Software, Validation, Writing -- Original Draft}

\author[2]{Dung D. Le}
\ead{dung.ld@vinuni.edu.vn}

\credit{Supervision, Writing -- Review \& Editing}

\author[1]{Tran Ngoc Thang}
\cormark[1]
\ead{thang.tranngoc@hust.edu.vn}

\credit{Methodology, Conceptualization, Formal Analysis, Supervision,
  Writing -- Review \& Editing}

\affiliation[1]{organization={Faculty of Applied Mathematics and
    Informatics, Hanoi University of Science and Technology},
  city={Hanoi},
  country={Vietnam}}

\affiliation[2]{organization={College of Engineering and Computer Science,
    VinUniversity},
  city={Hanoi},
  country={Vietnam}}

\cortext[1]{Corresponding author}

\begin{abstract}
We address the open problem of training hypernetworks for Controllable
Pareto Front Learning (CPFL) under split feasibility conditions with
rigorous theoretical guarantees.
We reformulate the constrained Pareto problem as a \emph{Bi-Level
Scalarized Split Problem} (BSSP) and propose the \emph{Adaptive
Balanced Penalty} (ABP) algorithm, whose three gradient
components---optimality, set feasibility, and image feasibility---are
blended through an adaptive indicator driven by a computable lower
bound.
Using a novel convex surrogate technique, we prove full-sequence
convergence under standard convexity and Robbins--Monro step-size
assumptions.
The ABP penalty structure is then translated into a two-phase,
feasibility-first training strategy for Hyper-MLP and HyperTrans
architectures (ABP-HyperNet).
To evaluate constrained CPFL, we introduce the Expected Feasible
Hypervolume (EFHV), which jointly captures solution quality
and constraint satisfaction.
Experiments on five multi-objective benchmarks validate the ABP solver
against ground truth, while three multi-task learning datasets
demonstrate that ABP-HyperNet achieves up to $2.3\times$ higher EFHV
than unconstrained baselines by raising feasibility from
36--49\% to 87--100\%.
\end{abstract}

\begin{highlights}
\item Bi-level scalarized split problem (BSSP) formulation for constrained CPFL.
\item Adaptive balanced penalty (ABP) algorithm with proven full-sequence convergence.
\item Two-phase feasibility-first training strategy for ABP-HyperMLP and ABP-HyperTrans.
\item Expected feasible hypervolume (EFHV) metric, combining feasibility rate with feasible-subset hypervolume, shows up to $2.3\times$ improvement over baselines on three MTL datasets.
\end{highlights}

\begin{keywords}
Multi-objective optimization \sep
Controllable Pareto front learning \sep
Split feasibility problem \sep
Hypernetwork \sep
Transformer \sep
Penalty method
\end{keywords}

\begin{graphicalabstract}
\includegraphics[width=0.9\textwidth]{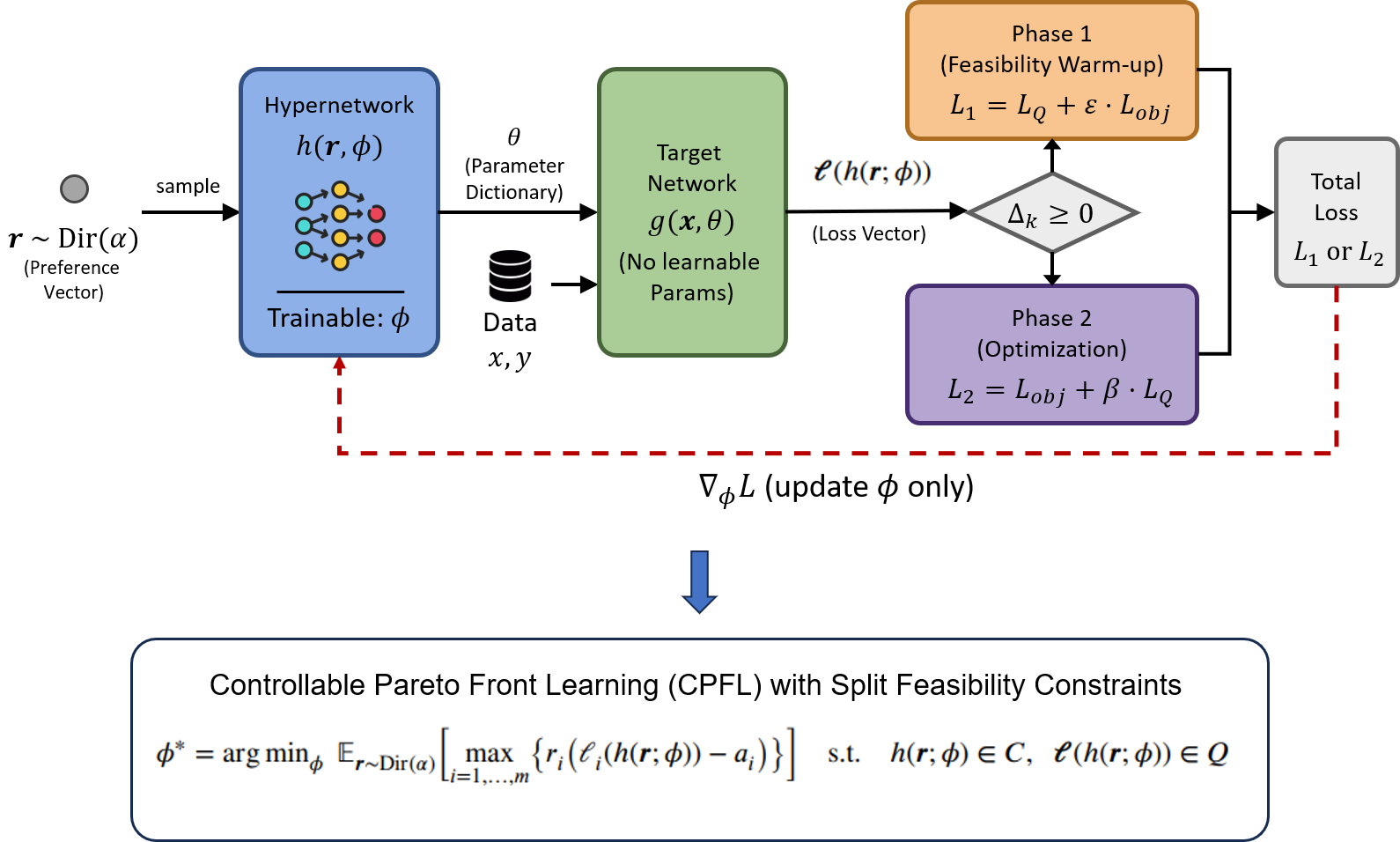}

\bigskip\noindent
\textbf{Graphical Abstract:} \textit{Controllable Pareto Front Learning} (CPFL) \textit{under Split Feasibility Conditions}
uses a hypernetwork to map preference vectors to Pareto optimal
solutions that strictly reside within a decision maker's specified
target region~$Q$.
We reformulate this constrained task as a \textit{Bi-Level Scalarized
Split Problem} (BSSP) and propose an \textit{Adaptive Balanced Penalty}
algorithm that systematically blends three gradient
components---optimality, set feasibility, and image feasibility.
The resulting principled penalty structure is translated into a
two-phase, feasibility-first training strategy for Hyper-MLP and
HyperTrans architectures, enabling reliable convergence to optimal
trade-offs while consistently satisfying box and sphere constraints.
\end{graphicalabstract}

\maketitle

\section{Introduction}\label{sec:intro}

Multi-objective optimization (MOO) addresses problems with several
conflicting objectives that must be optimized simultaneously over a
shared decision space.
Its applications span chemistry~\citep{cao2019}, finance~\citep{vuong2023},
and, notably, deep multi-task learning (MTL)~\citep{sener2018}.
In the MOO setting, the notion of optimality is captured by the Pareto
front: the set of all non-dominated trade-off points in the objective
space.

\emph{Controllable Pareto Front Learning} (CPFL) trains a single
model---typically a hypernetwork---that, given an arbitrary preference
vector at inference time, outputs a Pareto-optimal solution corresponding
to that preference.
Traditional CPFL methods approximate the \emph{entire} Pareto
front~\citep{navon2021,lin2019}, using evolutionary
algorithms~\citep{jangir2021} or Pareto set learning~\citep{lin2022}.
\citet{tuan2023} introduced a multi-layer-perceptron-based hypernetwork
(Hyper-MLP) for this task, and \citet{tuan2024} proposed a
transformer-based variant (HyperTrans) that leverages self-attention to
model pairwise trade-offs between objectives.
In the multi-task learning (MTL) setting, CPFL-based approaches differ
from classical gradient-balancing strategies~\citep{sener2018} in that a
single hypernetwork produces task-specific weights for any given
preference, enabling real-time trade-off control at inference.
Crucially, all existing CPFL methods---including those
of~\citep{tuan2023,tuan2024}---learn the \emph{entire unconstrained}
Pareto front;
no mechanism is provided to restrict solutions to a decision-maker-specified
region of the objective space.

The need for incorporating constraints into learning algorithms extends
far beyond multi-objective optimization.
In supervised learning, constraints enforce domain knowledge---such as
physical laws in physics-informed neural
networks~\citep{raissi2019}---or fairness requirements across
demographic groups~\citep{agarwal2018}, while in reinforcement
learning, Constrained Markov Decision Processes guarantee safety
during exploration~\citep{altman1999,achiam2017}.
More broadly, penalty and augmented Lagrangian techniques have become
standard tools for embedding functional constraints into stochastic
gradient training~\citep{bertsekas1999,lan2013}.

In the CPFL setting specifically, the relevant constraint is an
\emph{objective-space} region~$Q$ specifying acceptable trade-off
outcomes---for example, a budget constraint, a quality threshold,
or a regulatory bound.
This motivates \emph{constrained} Controllable Pareto Front Learning,
which seeks to learn only the portion of the Pareto front inside~$Q$.
The mathematical framework for this type of constraint comes from
\emph{Split Feasibility Problems} (SFP), introduced by
\citet{censor1994}, which seek a point in a closed convex set whose
image under a given operator lies in another closed convex set.
SFP has found applications in signal processing, image
reconstruction~\citep{byrne2003}, and radiation
therapy~\citep{censor2005,brooke2021}; extensions to variational
inequality settings were studied in~\citep{censor2012}, and the CQ
algorithm of \citet{byrne2002} remains the most widely used solver.
\citet{tuan2024} established the theoretical connection between SFP
and Pareto front learning by formulating the Split Multi-Objective
Optimization Problem (SMOP) and its outcome-space relaxation.
However, the practical training procedure of~\citep{tuan2024} does
not enforce the constraint $\calF(x)\in Q$: the constraint region~$Q$
appears only as a heuristic regularization term in the loss function,
and the resulting hypernetwork is trained to approximate the full,
unconstrained Pareto front.
To date, \emph{no existing method formulates or solves constrained
CPFL as a well-defined optimization problem with formal feasibility
guarantees}.

In the broader context of constrained optimization, penalty-based
methods have a long classical history~\citep{bertsekas1999,rockafellar1970},
and their adaptation to the stochastic gradient setting---including
convergence rate analysis under Robbins--Monro
schedules---was established by \citet{lan2013}.
In this work, we address the above gap in two complementary steps.
First, we introduce the \emph{Bi-Level Scalarized Split Problem}
(BSSP), a rigorous formulation that combines Chebyshev scalarization
with split feasibility on the extended downward hull~$Q^+$, and
we develop the \emph{Adaptive Balanced Penalty} (ABP) algorithm
(Algorithm~\ref{alg:main}) with proven full-sequence convergence.
Second, we translate the penalty structure identified in this
analysis---not the convergence proof itself---into a two-phase
training strategy for hypernetwork-based CPFL.
This strategy inherits the feasibility-first philosophy of the ABP
algorithm and operates as a
\emph{convergence-theory-inspired heuristic} in the stochastic,
non-convex neural network setting; its effectiveness is validated
empirically in Sections~\ref{sec:exp-mop}--\ref{sec:exp-mtl}.
The key insight is that the three gradient components of the ABP
algorithm (optimality, set feasibility, image feasibility) map
directly onto the two training phases: a feasibility-first warm-up
phase where the hypernetwork is driven into the constraint
region~$Q$, and a trade-off optimization phase where Chebyshev
quality is maximized under an adaptive feasibility restoring force.
Figure~\ref{fig:concept} provides a visual overview of this
contrast.

\begin{figure}[pos=h]
  \centering
  \begin{subfigure}[b]{0.45\textwidth}
    \centering
    \includegraphics[width=\textwidth]{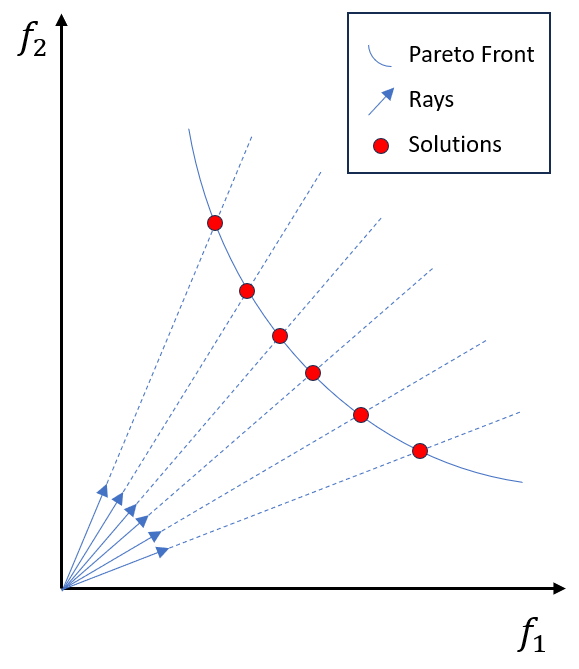}
    \caption{Unconstrained CPFL}
    \label{fig:concept_a}
  \end{subfigure}
  \hfill
  \begin{subfigure}[b]{0.45\textwidth}
    \centering
    \includegraphics[width=\textwidth]{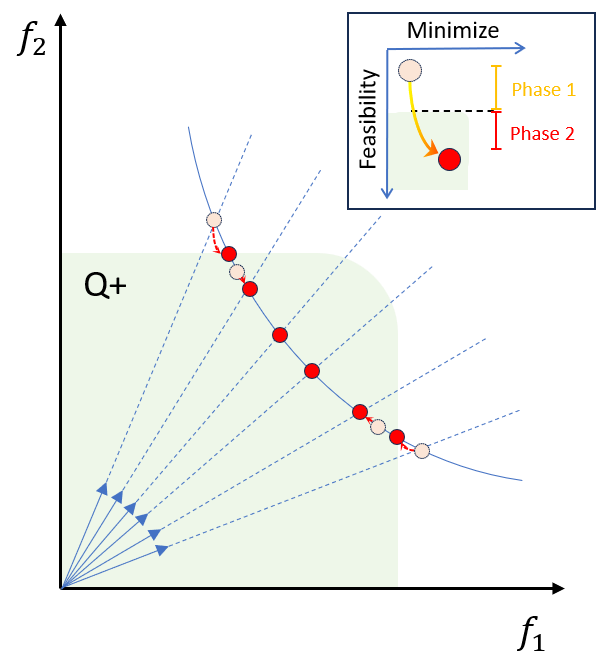}
    \caption{Constrained CPFL (proposed)}
    \label{fig:concept_b}
  \end{subfigure}
  \caption{Conceptual overview of Controllable Pareto Front Learning.
  (a)~Existing CPFL methods~\citep{navon2021,tuan2023,tuan2024}
  approximate the entire unconstrained Pareto front; solutions may
  lie anywhere in the objective space.
  (b)~Our BSSP two-phase training strategy restricts solutions to
  a decision-maker-specified region~$Q$, systematically driving them
  into~$Q$ while optimizing the trade-off.}
  \label{fig:concept}
\end{figure}

\paragraph{Contributions.}
The main contributions of this paper are threefold:

\begin{enumerate}[label=(\roman*),leftmargin=*,itemsep=3pt]
\item \textbf{Bi-level scalarized split problem and algorithm.}
  We introduce the BSSP formulation that combines Chebyshev scalarization
  with split feasibility on the extended downward hull~$Q^+$, eliminating
  the auxiliary variable required by the formulation of~\citep{tuan2024}.
  An Adaptive Balanced Penalty algorithm is proposed that blends three
  gradient components---optimality, set feasibility, and image
  feasibility---using an adaptive indicator driven by a computable
  lower bound, and whose structure directly informs the two-phase
  training strategy.

\item \textbf{Convergence guarantee.}
  We establish full-sequence convergence of the ABP algorithm under
  standard convexity and Robbins--Monro step-size assumptions, together
  with a zero-gap condition for which we provide verifiable sufficient
  conditions.
  The central technical device is a convex surrogate construction:
  at each iteration, a linearized half-space converts the non-convex
  image-feasibility function into a global subgradient inequality,
  yielding the quasi-Fej\'{e}r property that drives the convergence
  analysis.
  This result supplies the theoretical justification for the two-phase
  training strategy described below.

\item \textbf{Two-phase feasibility-first training strategy for
  hypernetwork-based CPFL.}
  Guided by the penalty structure identified in the BSSP convergence
  analysis, we design a practical two-phase training strategy for
  hypernetwork-based constrained Pareto front learning.
  In Phase~1 (feasibility warm-up), the hypernetwork is trained
  exclusively to satisfy the constraint region~$Q$, driving all
  generated solutions into the feasible set before any trade-off
  optimization begins.
  In Phase~2 (trade-off optimization), the Chebyshev objective is
  maximized under an adaptive feasibility-restoring force that
  prevents constraint violations accumulated during optimality
  pursuit.
  We instantiate this strategy on two hypernetwork
  architectures---Hyper-MLP and HyperTrans---and evaluate it on five
  MOP benchmarks and three MTL datasets.
  The constrained configurations raise feasibility rates from
  36--49\% to 87--100\% and achieve up to $2.3\times$ higher EFHV
  than the unconstrained baselines of~\citep{tuan2023,tuan2024},
  with only a modest and controlled reduction in global hypervolume.
  While the convergence guarantee of Theorem~\ref{thm:main} applies
  to the deterministic convex BSSP, the training strategy inherits
  the same feasibility-first philosophy and delivers consistent
  empirical gains across all tested settings.
\end{enumerate}

\noindent\textit{Organization.}
The paper is organized as follows.
Section~\ref{sec:prelim} collects mathematical preliminaries.
Section~\ref{sec:problem} develops the problem formulation from
the split feasibility framework to the BSSP\@.
Section~\ref{sec:convergence} presents the Adaptive Balanced Penalty
algorithm and states the main convergence results, with full
proofs deferred to Appendix~\ref{app:proofs}.
Section~\ref{sec:cpfl} describes the CPFL formulation and
hypernetwork architectures for multi-task learning.
Sections~\ref{sec:exp-mop} and~\ref{sec:exp-mtl} present
experimental results on MOP benchmarks and MTL datasets,
respectively.
Section~\ref{sec:conclusion} concludes.

\section{Preliminaries}\label{sec:prelim}

We collect definitions and tools used throughout the paper.

\begin{definition}[Multi-objective optimization]\label{def:mop}
Consider the multi-objective optimization problem
\begin{equation}\label{eq:MOP}\tag{MOP}
  \min_{x\in C}\;\calF(x),
  \qquad
  \calF(x)=\bigl(f_1(x),\dots,f_m(x)\bigr),
\end{equation}
where $C\subset\R^n$ is a nonempty closed convex set and each
$f_i\colon\R^n\to\R$ is continuously differentiable.
We denote the \textit{outcome set} by $Y:=\calF(C)=\{\calF(x):x\in C\}$.
\end{definition}

\begin{definition}[Pareto optimality]\label{def:pareto}
A point $x^*\in C$ is \emph{weakly Pareto optimal} for~\eqref{eq:MOP}
if there is no $x\in C$ with $\calF(x)\prec\calF(x^*)$
(i.e., $f_i(x)<f_i(x^*)$ for all~$i$).
The set of weakly Pareto optimal solutions is denoted $X_{WE}$,
and its image $Y_{WE}:=\calF(X_{WE})$ is the \emph{weakly Pareto
front}.
A point $x^*\in C$ is \emph{Pareto stationary} if
$\max_{i=1,\ldots,m}\nabla f_i(x^*)^\top d\ge 0$ for all
$d\in\R^n$.
When each $f_i$ is convex, every Pareto stationary point is weakly
Pareto optimal~\citep{ehrgott2005}.
\end{definition}

\begin{definition}[Chebyshev scalarization]\label{def:cheby}
Let $z^*=(z_1^*,\dots,z_m^*)$ be the \emph{ideal point} defined by
$z_i^*:=\inf_{x\in C}f_i(x)$, assumed finite for every~$i$.
Given a preference vector
$\bm{r}=(r_1,\dots,r_m)\in\R^m_{>0}$ with $\sum_{i=1}^m r_i=1$,
the \emph{weighted Chebyshev scalarization} is
\begin{equation}\label{eq:cheb}
  s\bigl(\calF(x),\bm{r}\bigr)
  :=\max_{i=1,\dots,m}\bigl\{r_i\bigl(f_i(x)-z_i^*\bigr)\bigr\}.
\end{equation}
When each $f_i$ is convex, $s(\calF(\cdot),\bm{r})$ is convex as
the pointwise maximum of convex functions.
A classical result~\citep[Theorem~3.4.5]{miettinen1999} states that
every minimizer of~\eqref{eq:cheb} over~$C$ is weakly
Pareto optimal for~\eqref{eq:MOP}.
\end{definition}

\begin{definition}[Extended downward hull]\label{def:downhull}
Let $Q\subset\R^m$ be a nonempty closed convex set representing a
desired region in the objective space.
The \emph{extended downward hull} is
\begin{equation}\label{eq:Qplus}
  Q^+:=Q-\R^m_+=\{y-u:y\in Q,\;u\in\R^m_+\}.
\end{equation}
The set $Q^+$ enlarges $Q$ by ``free disposal'': if a point belongs
to $Q^+$, then so does any point that is componentwise no larger.
This construction plays a role analogous to the extended outcome set
$Y^+=Y+\R^m_+$ used in~\citep{tuan2024}.
\end{definition}

\begin{lemma}\label{lem:Qplus}
If $Q\subset\R^m$ is nonempty, closed and convex, then $Q^+$ is
nonempty, closed and convex.
\end{lemma}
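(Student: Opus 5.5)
Nonemptiness and convexity follow directly from the definition~\eqref{eq:Qplus}, and closedness is the only step that needs real care. For nonemptiness, pick any $y\in Q$; taking $u=0\in\R^m_+$ gives $y=y-0\in Q^+$, so $Q\subseteq Q^+\neq\emptyset$. For convexity, let $a=y_1-u_1$ and $b=y_2-u_2$ with $y_j\in Q$, $u_j\in\R^m_+$, and let $\lambda\in[0,1]$. Then
\[
\lambda a+(1-\lambda)b=\bigl(\lambda y_1+(1-\lambda)y_2\bigr)-\bigl(\lambda u_1+(1-\lambda)u_2\bigr).
\]
Here the first term lies in $Q$ because $Q$ is convex, and the second lies in $\R^m_+$ because it is a convex cone. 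In other words, $Q^+$ is a Minkowski sum of two convex sets.

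Closedness is the main obstacle, since a Minkowski sum of two closed convex sets need not be closed in general; $Q$ may be unbounded. I would argue sequentially. Take $z_k=y_k-u_k\to z$ with $y_k\in Q$ and $u_k\in\R^m_+$. The natural move is to replace $y_k$ by a better representative. Observe that $z_k\in Q^+$ if and only if $(z_k+\R^m_+)\cap Q\neq\emptyset$.

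If some subsequence of $(u_k)$ is bounded, the argument is easy. Extract convergent subsequences $u_k\to u\in\R^m_+$ (which is closed). Then $y_k=z_k+u_k\to z+u$, and this limit lies in $Q$ because $Q$ is closed. Hence $z=(z+u)-u\in Q^+$.

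The difficulty is the case $\|u_k\|\to\infty$. I would first try the recession-cone argument. Normalize $d_k=u_k/\|u_k\|\to d\in\R^m_+$ with $\|d\|=1$. Since $y_k/\|u_k\|=z_k/\|u_k\|+d_k\to d$, the direction $d$ lies in the recession cone of $Q$. Then $y_k-t\,d$ should remain in $Q$ for suitable $t$, which lets one reduce $u_k$ by $t\,d$ while staying in $\R^m_+$ and shrink the offsets iteratively. This is the standard criterion: $A+B$ is closed when there is no $d\neq0$ with $d\in 0^+A$ and $-d\in 0^+B$. Here $A=Q$ and $B=-\R^m_+$, so the condition asks for no nonzero $d\in 0^+Q\cap\R^m_+$.

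That criterion can fail. For example, $Q=\{(s,t):t\ge e^{s}\}$ contains the recession direction $(0,1)\in\R^2_+$. In that example, though, the hull is still closed, and I expect this generalizes, because any point $y-u$ can also be written as $y-u-\mathbb{R}_+d$. A cleaner route I would actually use is the sublevel description
\[
Q^+=\{z:\ \inf_{y\in Q}\|(z-y)_+\|=0\},
\]
that is, $Q^+=\{z:\dist(z,\cdot)\ \text{condition}\}$, and show that the infimum is attained. If attainment is delicate, I would fall back on the Rockafellar criterion~\citep{rockafellar1970}, Corollary~9.1.2, and handle the remaining directions by the reduction above. I flag this as the step I am least sure of. If it cannot be closed in full generality, the statement would need $0^+Q\cap\R^m_+=\{0\}$, which holds for instance when $Q$ is compact, as in the box and sphere experiments.
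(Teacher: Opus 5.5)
Your nonemptiness and convexity arguments are correct, and so is your treatment of the case where $(u_k)$ has a bounded subsequence. The gap is the case $\|u_k\|\to\infty$, and it cannot be filled. Your expectation that closedness survives when $0^+Q\cap\R^m_+\neq\{0\}$ is false, and so is the lemma as stated. Take
\[
  Q=\{(x,y)\in\R^2:\ x<0,\ y\ge -1/x\},
\]
the epigraph of the convex function $-1/x$ on $(-\infty,0)$. It is nonempty, convex, and closed (if $x_n\to 0^-$ then $y_n\to\infty$). A point $(p,q)$ is dominated by a point of $Q$ if and only if $p<0$: for $p<0$ use $(p,\max(q,-1/p))\in Q$, and for $p\ge 0$ no point of $Q$ has first coordinate $\ge p$. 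Hence $Q^+=\{(p,q):p<0\}$ is an open half-plane. The points $z_k=(-1/k,0)\in Q^+$ converge to $0\notin Q^+$. Every representative $z_k=y-u$ with $y\in Q$, $u\ge 0$ has $u_2\ge k$, so this is exactly your hard case, with $d=(0,1)\in 0^+Q\cap\R^2_+$. Both repairs you sketch fail here. Pushing $y_k$ back along $-d$ only works if $-d\in 0^+Q$ (i.e.\ $d$ lies in the lineality space of $Q$), which is false. And $\{z:\inf_{y\in Q}\|(z-y)_+\|=0\}$ is $\mathrm{cl}\,Q^+$ rather than $Q^+$: at $z=0$ the infimum is $0$ but is not attained.

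So an extra hypothesis is unavoidable. Your fallback $0^+Q\cap\R^m_+=\{0\}$ works. With it, your own normalization argument already finishes the proof, since the unbounded case would produce a unit vector in $0^+Q\cap\R^m_+$. It is also exactly the hypothesis of Rockafellar's Corollary~9.1.1 with $C_1=Q$, $C_2=-\R^m_+$. More generally, it suffices that $0^+Q\cap\R^m_+$ be contained in the lineality space $L$ of $Q$: apply that corollary to $Q\cap L^\perp$ and the polyhedral cone $L-\R^m_+$.

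The paper's proof does not get around this. It asserts that a closed convex set plus a polyhedral convex set is always closed, which is false: besides the example above, $\{x>0,\ xy\ge 1\}+\R_+(-1,0)$ is the open upper half-plane. The corollary it cites carries precisely your recession condition. Your hedged version is therefore the right statement. It covers the experiments, though not quite for the reason you give: balls are compact, while the box $\{\ell\le b\}$ is unbounded but has $0^+Q=-\R^m_+$ (indeed $Q^+=Q$).
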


\begin{proof}
Nonemptiness and convexity follow from $Q\subset Q^+$ and the
fact that $Q^+$ is the Minkowski sum of the convex sets $Q$ and
$-\R^m_+$.  Closedness holds because $-\R^m_+$ is a polyhedral
convex cone, and the Minkowski sum of a closed convex set with a
polyhedral convex set is closed; see, e.g.,
\citep[Corollary~9.1.1]{rockafellar1970}, \citep[Theorem~3.12]{rockafellar2009}, or
\citep[Theorem~6.5]{bauschke2017}.
\end{proof}

\begin{proposition}[Free disposal]\label{prop:free}
If $z\in Q^+$ and $z'\le z$ componentwise, then $z'\in Q^+$.
\end{proposition}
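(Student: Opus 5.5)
The plan is to unfold Definition~\ref{def:downhull} and absorb the extra slack into the cone $\R^m_+$. Since $z\in Q^+$, there exist $y\in Q$ and $u\in\R^m_+$ with $z=y-u$. I will exhibit an explicit decomposition of $z'$ of the same form, namely
\begin{equation*}
  z' = y - u', \qquad u' := u + (z - z').
\end{equation*}

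First I check that this identity holds. Indeed, $y-u' = y-u-(z-z') = z-(z-z') = z'$.

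Second I check that $u'\in\R^m_+$. The hypothesis $z'\le z$ componentwise gives $z-z'\in\R^m_+$. Moreover, $\R^m_+$ is a convex cone, hence closed under addition, so $u'=u+(z-z')\ge 0$.

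Since $y\in Q$ and $u'\in\R^m_+$, the point $z'$ has the required form $y-u'$, and therefore $z'\in Q^+$.

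No step is a genuine obstacle; the whole argument is the observation $Q^+ - \R^m_+ = Q-(\R^m_++\R^m_+) = Q-\R^m_+$. This uses only the additivity of the orthant and none of the closedness or convexity furnished by Lemma~\ref{lem:Qplus}.
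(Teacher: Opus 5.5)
Your proof is correct and is exactly the paper's argument: write $z=y-u$ with $y\in Q$, $u\ge 0$, and absorb the slack via $z'=y-(u+z-z')$ with $u+z-z'\ge 0$. Your closing remark is also accurate: the argument uses only that $\R^m_+$ is closed under addition, and none of Lemma~\ref{lem:Qplus}.
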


\begin{proof}
$z=y-u$ with $y\in Q$, $u\ge 0$.  Then $z'=y-(u+z-z')$ with
$u+z-z'\ge 0$.
\end{proof}

We now derive an explicit formula for $P_{Q^+}$ that is central to
the algorithm.

\begin{proposition}[Projection onto $Q^+$]\label{prop:proj}
For every $\tilde z\in\R^m$,
\begin{equation}\label{eq:proj}
  P_{Q^+}(\tilde z)=\min\bigl(\tilde z,\,y^*\bigr),
  \qquad
  y^*\in\argmin_{y\in Q}\;\bigl\|(\tilde z-y)_+\bigr\|^2,
\end{equation}
where $\min$ and $(\cdot)_+$ act componentwise.
\end{proposition}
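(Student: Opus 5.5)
Projecting onto $Q^+$ is a nested minimization, and I would split it into an inner problem over $u$ and an outer problem over $y$. By definition, computing $P_{Q^+}(\tilde z)$ means minimizing $\|\tilde z-(y-u)\|^2$ over $y\in Q$ and $u\in\R^m_+$. The minimizer is unique, because $Q^+$ is nonempty, closed and convex by Lemma~\ref{lem:Qplus}. The plan has three steps: solve the inner problem in $u$ explicitly for fixed $y$, identify the reduced problem in $y$ with the one in~\eqref{eq:proj}, and then read off the projection.

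\textbf{Step 1 (inner problem).} Fix $y\in Q$. The problem $\min_{u\ge 0}\|(\tilde z-y)+u\|^2$ separates across coordinates. In coordinate $i$ we minimize $(\tilde z_i-y_i+u_i)^2$ over $u_i\ge 0$. The minimizer is $u_i=(y_i-\tilde z_i)_+$, and the minimum value is $((\tilde z_i-y_i)_+)^2$. The optimal point is therefore $y-u=\min(\tilde z,y)$ componentwise: if $y_i\le\tilde z_i$ it equals $y_i$, and otherwise it equals $\tilde z_i$. Consequently
\begin{equation*}
  \dist(\tilde z,Q^+)^2=\inf_{y\in Q}\bigl\|(\tilde z-y)_+\bigr\|^2 .
\end{equation*}
The identity $\tilde z-\min(\tilde z,y)=(\tilde z-y)_+$ makes this step routine.

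\textbf{Step 2 (outer problem and attainment).} Let $y^*$ be any minimizer of $\phi(y)=\|(\tilde z-y)_+\|^2$ over $Q$. The point $p:=\min(\tilde z,y^*)=y^*-(y^*-\tilde z)_+$ lies in $Q^+$. It satisfies $\|\tilde z-p\|^2=\phi(y^*)=\dist(\tilde z,Q^+)^2$. Hence $p$ attains the distance, and uniqueness of the projection gives $P_{Q^+}(\tilde z)=p$. This argument also shows the formula is independent of which minimizer $y^*$ is chosen, even though $\phi$ is convex but not strictly convex, so the argmin may be a set.

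\textbf{Main obstacle.} The delicate point is the existence of $y^*$. The function $\phi$ is convex and continuous, but it is not coercive on unbounded $Q$, since it is constant along directions in $\R^m_+$. Therefore I cannot simply invoke Weierstrass. To settle existence I would use closedness of $Q^+$ from Lemma~\ref{lem:Qplus}. The projection $p=P_{Q^+}(\tilde z)$ exists, so $p=y-u$ for some $y\in Q$ and $u\ge 0$. By Step 1, any such $y$ satisfies $\phi(y)\le\|\tilde z-p\|^2=\dist(\tilde z,Q^+)^2=\inf_Q\phi$. So this $y$ is a minimizer and the argmin is nonempty. Combining this with Step 2 yields~\eqref{eq:proj}.
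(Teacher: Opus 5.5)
Your proof is correct and follows essentially the same route as the paper's: you solve the inner problem over $u$ componentwise, obtain attainment of the outer infimum from a decomposition $P_{Q^+}(\tilde z)=y-u$ of the projection, and conclude by uniqueness of the projection. The one small difference is that you verify $P_{Q^+}(\tilde z)=\min(\tilde z,y^*)$ for \emph{every} minimizer $y^*$. The paper's Step~2 establishes the formula only for the particular $y_0$ that arises from decomposing the projection.
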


\begin{proof}
The projection $z^*:=P_{Q^+}(\tilde z)$ exists and is unique
(Lemma~\ref{lem:Qplus}).

\textit{Step~1.}
Every $z\in Q^+$ has the form $z=y-u$ with $y\in Q$,
$u\in\R^m_+$.  For fixed~$y$, the separable minimization
$\min_{u\ge 0}\norm{y-u-\tilde z}^2$ is solved componentwise by
$u_j^*=\max(y_j-\tilde z_j,0)=(y_j-\tilde z_j)_+$
for each~$j$, giving residual $\norm{(\tilde z-y)_+}^2$.
Hence $\inf_{z\in Q^+}\norm{z-\tilde z}^2
=\inf_{y\in Q}\norm{(\tilde z-y)_+}^2$.

\textit{Step~2.}
Write $z^*=y_0-u_0$ with $y_0\in Q$, $u_0\ge 0$.
Set $\hat u:=(y_0-\tilde z)_+$ and $\hat z:=y_0-\hat u\in Q^+$.
By Step~1, $\hat u$ is the minimizer of
$\min_{u\ge 0}\norm{y_0-u-\tilde z}^2$, so
$\norm{\hat z-\tilde z}^2\le\norm{y_0-u_0-\tilde z}^2
=\norm{z^*-\tilde z}^2$.
On the other hand, $z^*$ is the nearest point in $Q^+$ to
$\tilde z$, so $\norm{z^*-\tilde z}^2\le\norm{\hat z-\tilde z}^2$.
Combining: $\norm{\hat z-\tilde z}^2=\norm{z^*-\tilde z}^2$.
Uniqueness of the projection forces $\hat z=z^*$.  Thus
$y^*:=y_0$ attains the infimum in Step~1.

\textit{Step~3.}
$z^*=y^*-(y^*-\tilde z)_+=\min(y^*,\tilde z)$.
\end{proof}

\begin{corollary}\label{cor:res}
$\rho:=\tilde z-P_{Q^+}(\tilde z)\ge 0$ componentwise for every
$\tilde z\in\R^m$.
\end{corollary}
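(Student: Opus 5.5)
The plan is to read the claim directly off the explicit projection formula in Proposition~\ref{prop:proj}. Fix $\tilde z\in\R^m$. By Lemma~\ref{lem:Qplus}, $Q^+$ is nonempty, closed and convex, so $P_{Q^+}(\tilde z)$ is well defined. Proposition~\ref{prop:proj} then provides some $y^*\in Q$ with
\begin{equation*}
  P_{Q^+}(\tilde z)=\min\bigl(\tilde z,\,y^*\bigr)
\end{equation*}
componentwise.

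The only step is a componentwise comparison. For each $j=1,\dots,m$ we have $\bigl(P_{Q^+}(\tilde z)\bigr)_j=\min(\tilde z_j,y^*_j)\le\tilde z_j$. Hence
\begin{equation*}
  \rho_j=\tilde z_j-\min(\tilde z_j,y^*_j)=(\tilde z_j-y^*_j)_+\ge 0 .
\end{equation*}
This gives the slightly sharper identity $\rho=(\tilde z-y^*)_+$, which may be worth recording for later use in the algorithm.

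As a sanity check independent of the formula, one could also argue via free disposal (Proposition~\ref{prop:free}). Let $p:=P_{Q^+}(\tilde z)$ and $p':=\min(p,\tilde z)$. Since $p'\le p$ componentwise, $p'\in Q^+$. Moreover, $|p'_j-\tilde z_j|\le|p_j-\tilde z_j|$ for every $j$, so $\norm{p'-\tilde z}\le\norm{p-\tilde z}$. Uniqueness of the projection then forces $p'=p$, that is, $p\le\tilde z$.

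There is no real obstacle here. The only point requiring care is that Proposition~\ref{prop:proj} holds for every $\tilde z$, including $\tilde z\in Q^+$. In that case $P_{Q^+}(\tilde z)=\tilde z$ and $\rho=0$, which is consistent with the claim.
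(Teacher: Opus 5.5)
Your main argument is correct, but it takes a different route from the paper. You read the claim off the explicit formula of Proposition~\ref{prop:proj}: from $P_{Q^+}(\tilde z)=\min(\tilde z,y^*)$ you get $\rho=(\tilde z-y^*)_+\ge 0$.

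The paper never uses that formula. It sets $p:=P_{Q^+}(\tilde z)$ and invokes the variational characterization $\langle\rho,w-p\rangle\le 0$ for all $w\in Q^+$. It then tests this against $w=p-te_j$, which lies in $Q^+$ by free disposal (Proposition~\ref{prop:free}), giving $-t\rho_j\le 0$. That argument needs only that $Q^+$ is closed, convex and downward closed, so it applies verbatim to any such set. It actually proves more: every normal vector to $Q^+$ at $p$ lies in $\R^m_+$. It is also logically independent of the three-step proof of Proposition~\ref{prop:proj}.

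Your route leans on that proposition, but in exchange it gives the explicit identity $\rho=(\tilde z-y^*)_+$. This ties the residual $\rho^k$ used in the algorithm directly to the inner problem $\min_{y\in Q}\|(\tilde z-y)_+\|^2$ from which $p^k$ is computed.

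Your ``sanity check'' is close in spirit to the paper's proof. It also relies only on free disposal plus uniqueness of the projection, replacing the variational inequality by a direct componentwise distance comparison between $p$ and $\min(p,\tilde z)$.
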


\begin{proof}
Set $p:=P_{Q^+}(\tilde z)$.  The characterization of the
metric projection onto a closed convex set gives
$\ip{\rho}{w-p}\le 0$ for all $w\in Q^+$.
By Proposition~\ref{prop:free}, $p-te_j\in Q^+$ for every $t>0$
and every standard basis vector~$e_j$, so $-t\rho_j\le 0$,
whence $\rho_j\ge 0$.
\end{proof}

\begin{definition}[Metric projection and subdifferential]
\label{def:proj-subdiff}
For a nonempty closed convex set $S\subset\R^n$, the
\emph{metric projection} $P_S(x):=\argmin_{s\in S}\norm{x-s}$
is well-defined and satisfies the variational characterization
$\ip{x-P_S(x)}{s-P_S(x)}\le 0$ for all $s\in S$.
The \emph{squared distance function}
$d_S(x):=\tfrac{1}{2}\dist^2(x,S)$ is convex and differentiable
with $\nabla d_S(x)=x-P_S(x)$.
For a convex function $\varphi\colon\R^n\to\R$, the
\emph{subdifferential} at~$x$ is
$\partial\varphi(x):=\{g\in\R^n:\varphi(y)\ge\varphi(x)+
\ip{g}{y-x}\;\forall\,y\}$.
If $\varphi=\max_{i}\varphi_i$ with each $\varphi_i$ differentiable,
then $\nabla\varphi_{i^*}(x)\in\partial\varphi(x)$ for any
$i^*\in\argmax_i\varphi_i(x)$.
\end{definition}

\section{Problem formulation}\label{sec:problem}

In this section, we develop the Bi-Level Scalarized Split Problem
(BSSP) through a systematic chain of reformulations, following and
extending the framework of~\citep{tuan2024}.

\subsection{From split feasibility to split multi-objective
optimization}\label{subsec:sfp-smop}

The \emph{Split Feasibility Problem} (SFP), introduced by Censor and
Elfving~\citep{censor1994}, seeks
\begin{equation}\label{eq:SFP}\tag{SFP}
  \text{Find } x^*\in C\;:\;\calF(x^*)\in Q,
\end{equation}
where $C\subset\R^n$ and $Q\subset\R^m$ are nonempty closed convex
sets and $\calF\colon\R^n\to\R^m$ is a smooth mapping.
The most widely used solver is the CQ algorithm of
\citet{byrne2002}, which iterates
\begin{equation}\label{eq:CQ}
  x^{k+1}=P_C\!\bigl(x^k-\gamma_k\,J_\calF(x^k)^T
  (\calF(x^k)-P_Q(\calF(x^k)))\bigr),
\end{equation}
where $P_C$ and $P_Q$ denote metric projections onto $C$ and $Q$,
respectively, and $\gamma_k>0$ is a step size.

The classical SFP~\eqref{eq:SFP} requires the constraint set~$C$
to be convex.
However, when $C$ is taken to be the weakly Pareto optimal solution
set~$X_{WE}$ of~\eqref{eq:MOP}, $C$ is in general \emph{non-convex},
even when $\calF$ is linear~\citep{kim2013}.
This motivates the \emph{Split Multi-Objective Optimization Problem}
(SMOP)~\citep{tuan2024}:
\begin{equation}\label{eq:SMOP}\tag{SMOP}
  \text{Find } x^*\in X_{WE}\;:\;\calF(x^*)\in Q.
\end{equation}
Problem~\eqref{eq:SMOP} seeks a Pareto-optimal solution whose
image in the objective space lies within the decision-maker's
preferred region~$Q$.
To select among multiple solutions of~\eqref{eq:SMOP},
\citet{tuan2024} considered optimizing a scalarization
function over the feasible set of~\eqref{eq:SMOP}:
\begin{equation}\label{eq:SP}\tag{SP}
  \min_{x}\;S(\calF(x))
  \quad\text{s.t.}\quad
  x\in X_{WE},\quad\calF(x)\in Q,
\end{equation}
where $S\colon\R^m\to\R$ is a monotonically increasing,
pseudoconvex function.

Since $X_{WE}$ is non-convex, directly solving~\eqref{eq:SP} is
intractable.
The \emph{outcome-space approach}~\citep{tuan2024,kim2013} transforms
the problem using the extended outcome set
$Y^+:=Y+\R^m_+=\{y+u:y\in Y,\;u\in\R^m_+\}$,
which enjoys several favourable properties:
(i) $Y^+$ is a closed convex set;
(ii) $\partial Y^+=Y^+_{WE}$ (the weakly efficient boundary equals the topological boundary);
(iii) $Y^+$ is a reverse-normal set.
These properties lead to the equivalent outcome-space problems:
\begin{align}
  &\min_y\;S(y) \quad\text{s.t.}\quad
  y\in Y^+_{WE}\cap Q,
  \tag{OSP$^+$}\label{eq:OSPplus}\\[4pt]
  &\min_y\;S(y) \quad\text{s.t.}\quad
  y\in Y^+\cap Q,
  \tag{$\overline{\text{OSP}}$}\label{eq:OSPbar}
\end{align}
whose equivalence is established in~\citep[Propositions~4.3
and~4.4]{tuan2024} when $Q$ is a normal set.
The explicit-form problem introduces the pair $(x,y)$:
\begin{equation}\label{eq:ESP}\tag{ESP}
  \min_{(x,y)}\;S(\calF(x))
  \quad\text{s.t.}\quad
  x\in C,\quad y\in Q,\quad\calF(x)\le y,
\end{equation}
which is a pseudoconvex program when each $f_i$ is convex
\citep[Proposition~4.5]{tuan2024}.

\subsection{From ESP to Bi-Level Scalarized Split Problem}\label{subsec:bssp}

Problem~\eqref{eq:ESP} is solvable by standard gradient methods
but requires the auxiliary variable~$y$, and the coupling constraint
$\calF(x)\le y$ can slow convergence in high dimensions.
We propose a more direct formulation by eliminating~$y$ through
the extended downward hull $Q^+$ (Definition~\ref{def:downhull}).

\begin{proposition}\label{prop:equiv}
The image-feasibility condition of~\eqref{eq:ESP}---namely,
$\exists\,y\in Q$ with $\calF(x)\le y$---is equivalent to
$\calF(x)\in Q^+$.
\end{proposition}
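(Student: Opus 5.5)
The plan is to prove the two implications directly from Definition~\ref{def:downhull}, writing $Q^+=Q-\R^m_+$. Only the Minkowski-difference description of $Q^+$ is needed; convexity and closedness (Lemma~\ref{lem:Qplus}) play no role. The argument is to rewrite $\calF(x)\le y$ as $\calF(x)=y-u$ with $u\in\R^m_+$.

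\emph{($\Rightarrow$)} Suppose there is $y\in Q$ with $\calF(x)\le y$ componentwise. Set $u:=y-\calF(x)$. The inequality $\calF(x)\le y$ says exactly that $u\in\R^m_+$. Since $\calF(x)=y-u$ with $y\in Q$ and $u\ge 0$, we get $\calF(x)\in Q^+$ by definition. Equivalently, we could invoke Proposition~\ref{prop:free}: $y\in Q\subset Q^+$ and $\calF(x)\le y$ give $\calF(x)\in Q^+$.

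\emph{($\Leftarrow$)} Suppose $\calF(x)\in Q^+$. Then by definition there exist $y\in Q$ and $u\in\R^m_+$ with $\calF(x)=y-u$. Hence $\calF(x)\le y$ componentwise, and this $y$ is the required witness in $Q$.

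There is no real obstacle. The only point worth stating is that the equivalence holds pointwise in $x$, so the constraint pair ``$y\in Q,\ \calF(x)\le y$'' in \eqref{eq:ESP} can be replaced by the single constraint $\calF(x)\in Q^+$. This replacement eliminates the auxiliary variable $y$, and the feasible $x$-sets of the two problems coincide.
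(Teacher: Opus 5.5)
Your proof is correct and is essentially the paper's argument: for ($\Rightarrow$) you write $\calF(x)=y-(y-\calF(x))$ with $y-\calF(x)\in\R^m_+$, and for ($\Leftarrow$) you read $\calF(x)\le y$ directly off a decomposition $\calF(x)=y-u$ with $y\in Q$, $u\ge 0$. Your alternative route through Proposition~\ref{prop:free} for ($\Rightarrow$) is also valid, and you are right that closedness and convexity of $Q^+$ are not used.
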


\begin{proof}
If $\calF(x)\le y$ with $y\in Q$, then
$\calF(x)=y-\bigl(y-\calF(x)\bigr)\in Q-\R^m_+=Q^+$.
Conversely, if $\calF(x)\in Q^+$, write
$\calF(x)=y_0-u_0$ with $y_0\in Q$, $u_0\ge 0$; then
$\calF(x)\le y_0\in Q$.
\end{proof}

Replacing $S$ by the Chebyshev scalarization~\eqref{eq:cheb} and
using Proposition~\ref{prop:equiv}, we arrive at the
\emph{Bi-Level Scalarized Split Problem}:
\begin{equation}\label{eq:BSSP}\tag{BSSP}
  \min_{x\in\R^n}\;s\bigl(\calF(x),\bm{r}\bigr)
  \quad\text{subject to}\quad
  x\in C,\quad\calF(x)\in Q^+.
\end{equation}

\begin{remark}[Relation to~\citep{tuan2024}]
Problem~\eqref{eq:BSSP} is equivalent to~\eqref{eq:ESP} with $S$
replaced by the Chebyshev function, but avoids the auxiliary
variable~$y$ by encoding the split feasibility constraint directly
through~$Q^+$.
This simplification enables a clean three-component penalty
decomposition (Section~\ref{sec:algo}) and a transparent
correspondence with the hypernetwork training loss
(Section~\ref{sec:cpfl}).
\end{remark}

We introduce three component functions that will drive the algorithm:
\begin{align}
  \varphi(x) &:= s\bigl(\calF(x),\bm{r}\bigr)
    =\max_{i=1,\dots,m}r_i\bigl(f_i(x)-z_i^*\bigr),
    \label{eq:phi}\\[4pt]
  H(x) &:= \tfrac{1}{2}\dist^2(x,C),
    \label{eq:H}\\[4pt]
  G(x) &:= \tfrac{1}{2}\dist^2\!\bigl(\calF(x),Q^+\bigr).
    \label{eq:G}
\end{align}
The feasible set of~\eqref{eq:BSSP} is
$\mathcal{S}:=\{x\in\R^n:H(x)=0,\;G(x)=0\}$,
the optimal value is
$\varphi^*:=\inf\{\varphi(x):x\in \mathcal{S}\}$, and the solution set is
$\Omega:=\{x\in \mathcal{S}:\varphi(x)=\varphi^*\}$.

A central ingredient of the algorithm is a computable
\emph{lower bound} for~$\varphi^*$.  Define
\begin{equation}\label{eq:philb}
  \varphi_{\lb}:=\inf_{x\in C}\,\varphi(x).
\end{equation}
Since $\mathcal{S}\subset C$, every feasible point of~\eqref{eq:BSSP}
belongs to~$C$, so
\begin{equation}\label{eq:lb-valid}
  \varphi_{\lb}\le\varphi^*.
\end{equation}
Problem~\eqref{eq:philb} is a convex program (minimization of the
pointwise maximum of convex functions over a convex set) and can be
solved---or approximated to arbitrary precision---by standard
methods~\citep{bertsekas1999,nesterov2004}.  We write
\begin{equation}\label{eq:sigma}
  \sigma:=\varphi^*-\varphi_{\lb}\ge 0
\end{equation}
for the \emph{bound gap}.  The algorithm uses only~$\varphi_{\lb}$
(never~$\varphi^*$ itself).

\section{Algorithm and convergence analysis}\label{sec:convergence}

\subsection{Algorithm and assumptions}\label{sec:algo}

At iteration~$k$, define
\begin{equation}\label{eq:pk}
  p^k:=P_{Q^+}\!\bigl(\calF(x^k)\bigr),
  \qquad
  \rho^k:=\calF(x^k)-p^k\ge 0,
\end{equation}
and the direction vectors
\begin{align}
  z^k &:= x^k-P_C(x^k),
    \label{eq:zk}\\
  v^k &:= J_{\calF}(x^k)^T\,\rho^k,
    \label{eq:vk}\\
  w^k &:= r_{i^*}\nabla f_{i^*}(x^k),\qquad
    i^*\in\argmax_i\;r_i\bigl(f_i(x^k)-z_i^*\bigr).
    \label{eq:wk}
\end{align}
When the $\argmax$ in~\eqref{eq:wk} is not a singleton, any
choice of~$i^*$ yields a valid subgradient
$w^k\in\partial\varphi(x^k)$
(see Lemma~\ref{lem:phi} below), so the algorithm is well-defined
regardless of the tie-breaking rule.

The three direction vectors have clear roles:
$w^k$ drives \emph{optimality} (descent of the Chebyshev
scalarization),
$z^k$ enforces \emph{set feasibility} ($x\in C$), and
$v^k$ enforces \emph{image feasibility} ($\calF(x)\in Q^+$).

Set $\Delta_k:=\varphi(x^k)-\varphi_{\lb}$.  The \emph{adaptive
composite direction} is
\begin{equation}\label{eq:dk}
  d^k:=\alpha_k\,\mathbf{1}_{\{\Delta_k\ge 0\}}\,w^k
  +\beta_k\,z^k+\gamma_k\,v^k.
\end{equation}
Since $\varphi_{\lb}\le\varphi^*\le\varphi(x)$ for $x\in \mathcal{S}$, the
indicator equals~$1$ at every feasible iterate.

\begin{algorithm}[H]
\caption{Adaptive balanced penalty (ABP) method}\label{alg:main}
\begin{algorithmic}[1]
\REQUIRE $x^0\in\R^n$;\; $\bm{r}\in\R^m_{>0}$;\;
  $\varphi_{\lb}\le\varphi^*$;\; $\mu>0$;\;
  $\{\alpha_k\},\{\beta_k\},\{\gamma_k\},\{\lambda_k\}$.
\FOR{$k=0,1,2,\dots$}
  \STATE Compute $p^k,\,\rho^k$ via~\eqref{eq:pk};\;
    $z^k,\,v^k,\,w^k$ via~\eqref{eq:zk}--\eqref{eq:wk};
  \STATE $\Delta_k\leftarrow\varphi(x^k)-\varphi_{\lb}$;
  \STATE $d^k$ via~\eqref{eq:dk};\quad
    $\eta_k\leftarrow\max(\mu,\,\norm{d^k})$;
  \STATE $x^{k+1}\leftarrow x^k
    -\dfrac{\lambda_k}{\eta_k}\,d^k$;
\ENDFOR
\end{algorithmic}
\end{algorithm}

\begin{assumption}\label{ass:all}\mbox{}
\begin{enumerate}[label=\textup{(A\arabic*)},ref=A\arabic*,
  leftmargin=*,itemsep=4pt]
\item\label{A1}
  $\calF=(f_1,\dots,f_m)\in C^1(\R^n;\R^m)$.
\item\label{A2}
  $C\subset\R^n$ is nonempty, closed and convex;\;
  $Q\subset\R^m$ is nonempty, closed and convex;\;
  $z_i^*>-\infty$ for each~$i$.
\item\label{A3}
  Each $f_i$ is convex.
\item\label{A4}
  $\Omega\neq\emptyset$, i.e., the solution set of~\eqref{eq:BSSP}
  is nonempty.
\item\label{A5}
  $\{\lambda_k\}\subset(0,\infty)$,\;
  $\sum\lambda_k=\infty$,\;
  $\sum\lambda_k^2<\infty$.
\item\label{A6}
  $0<\underline\alpha\le\alpha_k\le\bar\alpha$,\;
  $0<\underline\beta\le\beta_k\le\bar\beta$,\;
  $0<\underline\gamma\le\gamma_k\le\bar\gamma$
  for all $k$.
\item\label{A7}
  \textup{(Zero-gap condition).}
  $\varphi_{\lb}=\varphi^*$\; (equivalently, $\sigma=0$).
\end{enumerate}
\end{assumption}

Assumptions~\eqref{A1}--\eqref{A3} are standard regularity and
convexity conditions.
Assumption~\eqref{A5} is the classical Robbins--Monro step-size
condition~\citep{robbins1951}, satisfied by $\lambda_k=1/(k+1)$.
Assumption~\eqref{A6} allows the penalty weights to be fixed
constants.
Assumption~\eqref{A4} holds whenever $C$ is bounded and
$\mathcal{S}\neq\emptyset$.

\paragraph{Weakened zero-gap condition.}
When the zero-gap condition~\eqref{A7} cannot be verified exactly, we replace
it with the following two weaker assumptions.

\begin{assumption}[Bounded gap \textup{(A7')}]\label{ass:gap}
  $\sigma := \varphi^* - \varphi_{\lb} \le \varepsilon_0$ for some
  known constant $\varepsilon_0 \ge 0$.
\end{assumption}

\begin{assumption}[Boundedness \textup{(A8)}]\label{ass:bdd}
  The sequence $\{x^k\}$ generated by the ABP algorithm is
  bounded: $\sup_k \|x^k\| \le B < \infty$.
\end{assumption}

\begin{remark}\label{rem:a8-status}
When $\varepsilon_0 = 0$, Assumption~(A7') reduces to~\eqref{A7} and
Assumption~(A8) is a \emph{consequence} of the quasi-Fej\'{e}r property
(Proposition~\ref{prop:fejer}) rather than an independent requirement.
When $C \subset \mathbb{R}^n$ is compact---as is the case for CVX1 and
CVX2, where $C$ is a closed box---Assumption~(A8) holds automatically
for any initialization $x^0 \in C$ and need not be imposed separately.
In general, (A8) can be enforced by projecting iterates onto a
sufficiently large ball, or verified a~posteriori for a specific instance.
\end{remark}

\begin{remark}[Assumption~\eqref{A3} and non-convex problems]
\label{rem:nonconvex}
Convexity of each $f_i$ is essential for Theorem~\ref{thm:main}:
it ensures $\varphi$ is convex and that the surrogate
(Lemma~\ref{lem:delta-cvx}) is a valid global minorant.
The ABP algorithm is applied as an effective heuristic for
non-convex objectives (e.g., the ZDT benchmarks of
Section~\ref{sec:exp-mop}).
\end{remark}

\subsection{Convergence results}\label{subsec:convergence-results}

\subsubsection{Key gradient inequalities}\label{subsubsec:grad-ineq}

We collect properties of $\varphi$, $H$, $G$ and construct a
convex surrogate for the non-convex function~$G$.

\begin{lemma}\label{lem:H}
Under~\eqref{A2}, $H$ is convex and $C^1$ with
$\nabla H(x)=x-P_C(x)$.~\textit{(Proof in Appendix~\ref{app:tech-lemmas}.)}
\end{lemma}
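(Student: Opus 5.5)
The plan is to prove the three claims (convexity, differentiability, gradient formula) directly from the variational characterization of $P_C$ recalled in Definition~\ref{def:proj-subdiff}, which is available because $C$ is nonempty, closed and convex by~\eqref{A2}. I will write $p=P_C(x)$ and $q=P_C(y)$ throughout. The structure is to first establish the key inequality
\[
H(y)\ge H(x)+\ip{x-p}{y-x}
\]
for all $x,y$. This gives convexity immediately, since $H$ is then the supremum of affine minorants touching it at every point. It also shows that $x-p$ is a subgradient of $H$ at $x$.

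For the key inequality, I expand $H(y)=\tfrac12\norm{y-q}^2$ as $\tfrac12\norm{(y-x)+(x-p)+(p-q)}^2$. Dropping the nonnegative term $\tfrac12\norm{(y-x)+(p-q)}^2$ leaves the lower bound
\[
H(y)\ge \tfrac12\norm{x-p}^2+\ip{x-p}{y-x}+\ip{x-p}{p-q}.
\]
The last term is nonnegative by the variational inequality $\ip{x-p}{q-p}\le0$, applied with $s=q\in C$. This yields the inequality.

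For differentiability, I also need a matching quadratic upper bound. Since $p\in C$, we have $H(y)\le\tfrac12\norm{y-p}^2=H(x)+\ip{x-p}{y-x}+\tfrac12\norm{y-x}^2$. Combining the two bounds gives
\[
0\le H(y)-H(x)-\ip{x-p}{y-x}\le\tfrac12\norm{y-x}^2,
\]
so $H$ is Fr\'echet differentiable with $\nabla H(x)=x-P_C(x)$.

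Continuity of the gradient, which gives the $C^1$ claim, follows from the nonexpansiveness of $P_C$. This is proved by adding the variational inequalities at $x$ and at $y$, which gives $\norm{P_C(x)-P_C(y)}^2\le\ip{x-y}{P_C(x)-P_C(y)}$, and then applying Cauchy--Schwarz. As a result, $\nabla H$ is $2$-Lipschitz; in fact it is $1$-Lipschitz by firm nonexpansiveness, though this is not needed.

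I do not expect a genuine obstacle here. The only point requiring care is the cross term in the lower bound, where the sign must come out right so that the variational inequality applies with $s=q$.
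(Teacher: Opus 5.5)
Your argument is correct, and it takes a different route from the paper. The paper's proof just cites two standard facts. Convexity holds because $H$ is the infimal convolution of $\tfrac12\|\cdot\|^2$ with the indicator of $C$. Differentiability with $\nabla H(x)=x-P_C(x)$ is attributed to Danskin's theorem.

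You instead derive everything from the variational inequality for $P_C$, through the two-sided bound $0\le H(y)-H(x)-\langle x-P_C(x),\,y-x\rangle\le\tfrac12\|y-x\|^2$. This gives convexity and Fr\'echet differentiability at once. You also settle the continuity of the gradient explicitly via nonexpansiveness of $P_C$, which the paper's one-line proof leaves implicit.

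The citation route is shorter. However, Danskin's theorem as usually stated (including the cited version in Bertsekas) assumes a compact index set. For unbounded $C$ it therefore needs a localization step that the paper does not spell out.

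Your route is self-contained, needs no compactness, and yields more:
\begin{itemize}
\item The upper bound is exactly the descent-lemma inequality with constant $1$.
\item The lower bound, taken with $y=x^*$ and $H(x^*)=0$, is precisely the inequality $\langle z^k,\,x^k-x^*\rangle\ge H_k$ of Lemma~\ref{lem:ineq-H}. You obtain it directly, without first establishing convexity and differentiability separately.
\end{itemize}
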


\noindent\textit{This means $z^k = \nabla H(x^k)$, so the set-feasibility step in the ABP algorithm is precisely a gradient descent step on~$H$.}

\begin{lemma}\label{lem:phi}
Under~\eqref{A1} and~\eqref{A3}, $\varphi$ is convex and
$w^k\in\partial\varphi(x^k)$ for every choice of~$i^*$
in~\eqref{eq:wk}.~\textit{(Proof in Appendix~\ref{app:tech-lemmas}.)}
\end{lemma}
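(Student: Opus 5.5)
The plan is to prove the two claims of Lemma~\ref{lem:phi} separately: first that each piece of the maximum is convex, then that the gradient of an active piece is a subgradient of $\varphi$.

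For $i=1,\dots,m$ define $\varphi_i(x):=r_i\bigl(f_i(x)-z_i^*\bigr)$. By~\eqref{A2}, $z_i^*$ is a finite constant, and $r_i>0$. Hence, by~\eqref{A1} and~\eqref{A3}, each $\varphi_i$ is a positive multiple of a convex $C^1$ function plus a constant, so it is convex and differentiable with $\nabla\varphi_i(x)=r_i\nabla f_i(x)$.

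Convexity of $\varphi=\max_i\varphi_i$ then follows directly from the definition. For $x,y\in\R^n$ and $t\in[0,1]$, every index $i$ satisfies
\[
\varphi_i\bigl(tx+(1-t)y\bigr)\le t\varphi_i(x)+(1-t)\varphi_i(y)\le t\varphi(x)+(1-t)\varphi(y).
\]
Taking the maximum over $i$ on the left gives the convexity inequality for $\varphi$.

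For the subgradient claim, fix $x^k$ and any $i^*\in\argmax_i\varphi_i(x^k)$, so that $\varphi(x^k)=\varphi_{i^*}(x^k)$. Since $\varphi_{i^*}$ is convex and differentiable, the first-order characterization of convexity applies. It gives, for all $y$,
\[
\varphi(y)\ge\varphi_{i^*}(y)\ge\varphi_{i^*}(x^k)+\ip{\nabla\varphi_{i^*}(x^k)}{y-x^k}=\varphi(x^k)+\ip{w^k}{y-x^k}.
\]
This is exactly the defining inequality of $\partial\varphi(x^k)$ in Definition~\ref{def:proj-subdiff}, so $w^k\in\partial\varphi(x^k)$. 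Because only $\varphi_{i^*}(x^k)=\varphi(x^k)$ was used, the conclusion holds for every tie-breaking choice of $i^*$.

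The only step that needs care is the first-order inequality for a convex differentiable function, $\varphi_{i^*}(y)\ge\varphi_{i^*}(x)+\ip{\nabla\varphi_{i^*}(x)}{y-x}$. It is standard, and if needed I would derive it from $\varphi_{i^*}(x+t(y-x))\le\varphi_{i^*}(x)+t(\varphi_{i^*}(y)-\varphi_{i^*}(x))$ by dividing by $t$ and letting $t\downarrow0$. I do not expect any real obstacle.
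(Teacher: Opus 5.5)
Your proposal is correct and follows the paper's proof essentially verbatim. Both argue convexity of $\varphi$ as a pointwise maximum of the convex pieces $r_i(f_i-z_i^*)$, and both use the chain $\varphi(y)\ge\varphi_{i^*}(y)\ge\varphi(x^k)+\ip{w^k}{y-x^k}$, which relies only on $i^*$ being active; your explicit appeal to finiteness of $z_i^*$ (from \eqref{A2}, or the standing assumption in Definition~\ref{def:cheby}) is a harmless detail the paper leaves implicit.
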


\noindent\textit{This confirms that the ABP algorithm is well-defined regardless of which active index~$i^*$ is selected at each iteration.}

Since $G(x)=\tfrac{1}{2}\dist^2(\calF(x),Q^+)$ is generally
non-convex, we construct at each iteration a convex surrogate
that matches~$G$ in value and gradient at~$x^k$ and
\emph{minorizes}~$G$ globally.

Define the \emph{linearized half-space}
\begin{equation}\label{eq:Qk}
  Q^+_k:=\{y\in\R^m:\ip{\rho^k}{y-p^k}\le 0\},
\end{equation}
and the \emph{convex surrogate}
$\delta_k(x):=\tfrac{1}{2}\dist^2(\calF(x),Q^+_k)$.
When $\rho^k\neq 0$, the half-space projection
formula \citep[Example~28.16]{bauschke2017} gives
\begin{equation}\label{eq:delta-explicit}
  \delta_k(x)
  =\frac{\bigl[\ip{\rho^k}{\calF(x)-p^k}\bigr]_+^2}
        {2\norm{\rho^k}^2};
\end{equation}
when $\rho^k=0$, $\delta_k\equiv 0$.

\begin{lemma}\label{lem:Qsub}
$Q^+\subseteq Q^+_k$ for every~$k$.~\textit{(Proof in Appendix~\ref{app:tech-lemmas}.)}
\end{lemma}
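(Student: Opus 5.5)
The inclusion $Q^+\subseteq Q^+_k$ follows directly from the variational characterization of the metric projection (Definition~\ref{def:proj-subdiff}), applied to the closed convex set $Q^+$. Lemma~\ref{lem:Qplus} guarantees that $Q^+$ is nonempty, closed and convex. Hence $p^k=P_{Q^+}(\calF(x^k))$ is well defined, and it satisfies
\begin{equation*}
  \ip{\calF(x^k)-p^k}{y-p^k}\le 0\qquad\text{for all } y\in Q^+.
\end{equation*}
Since $\rho^k=\calF(x^k)-p^k$ by~\eqref{eq:pk}, this inequality reads $\ip{\rho^k}{y-p^k}\le 0$ for every $y\in Q^+$. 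That is exactly the defining inequality of $Q^+_k$ in~\eqref{eq:Qk}, so every $y\in Q^+$ lies in $Q^+_k$.

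I will also record the degenerate case, so that the notation $Q^+_k$ never refers to an empty condition. If $\rho^k=0$, meaning $\calF(x^k)\in Q^+$, the defining inequality reads $0\le 0$. In that case $Q^+_k=\R^m$ and the inclusion is trivial. When $\rho^k\neq 0$, $Q^+_k$ is a genuine closed half-space, and $p^k$ lies on its boundary hyperplane.

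There is no real obstacle. The only point needing care is that the projection inequality must be invoked for $Q^+$ itself rather than for $Q$, which is why Lemma~\ref{lem:Qplus} (closedness and convexity of $Q^+$) is needed first. Corollary~\ref{cor:res} ($\rho^k\ge 0$) is not needed for the inclusion. It only explains why the half-space is oriented compatibly with free disposal.
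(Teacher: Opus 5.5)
Your proof is correct and matches the paper's own one-line argument: the variational characterization of the projection onto the closed convex set $Q^+$ gives $\ip{\rho^k}{w-p^k}\le 0$ for every $w\in Q^+$, which is the defining inequality of $Q^+_k$. Your remarks on the degenerate case $\rho^k=0$ and on Corollary~\ref{cor:res} being unnecessary are accurate, though not required.
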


\noindent\textit{The half-space $Q_k^+$ contains $Q^+$, ensuring the surrogate $\delta_k$ globally minorizes~$G$.}

\begin{lemma}\label{lem:delta-cvx}
Under~\eqref{A1} and~\eqref{A3}, $\delta_k$ is convex.~\textit{(Proof in Appendix~\ref{app:tech-lemmas}.)}
\end{lemma}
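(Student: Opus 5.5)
The plan is to work directly from the explicit formula~\eqref{eq:delta-explicit} and to write $\delta_k$ as a convex, nondecreasing scalar function applied to a convex function of~$x$. There are two cases. If $\rho^k=0$, then $\delta_k\equiv 0$ and there is nothing to prove. So assume $\rho^k\neq 0$ from now on, and set
\[
  \ell_k(x):=\ip{\rho^k}{\calF(x)-p^k}
  =\sum_{i=1}^m \rho^k_i f_i(x)-\ip{\rho^k}{p^k},
\]
so that $\delta_k(x)=\psi(\ell_k(x))/(2\norm{\rho^k}^2)$ with $\psi(t):=[t]_+^2$.

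The key step is to show that $\ell_k$ is convex, and this is where the sign of $\rho^k$ matters. By Corollary~\ref{cor:res}, applied with $\tilde z=\calF(x^k)$, we have $\rho^k=\calF(x^k)-P_{Q^+}(\calF(x^k))\ge 0$ componentwise; this is also recorded in~\eqref{eq:pk}. Under~\eqref{A3} each $f_i$ is convex. Hence $\sum_i\rho^k_i f_i$ is a nonnegative combination of convex functions and is therefore convex. Subtracting the constant $\ip{\rho^k}{p^k}$ leaves $\ell_k$ convex. Assumption~\eqref{A1} is only needed so that $\ell_k$ is real-valued and $C^1$ on all of $\R^n$.

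Next I check the outer function. The map $\psi(t)=[t]_+^2$ is convex on $\R$, being the square of the nonnegative convex function $t\mapsto\max(t,0)$; explicitly, it is $C^1$ with nondecreasing derivative $2[t]_+$. It is also nondecreasing. The standard composition rule then applies: for $x,y\in\R^n$ and $\theta\in[0,1]$, convexity of $\ell_k$ gives
\[
  \ell_k(\theta x+(1-\theta)y)\le\theta\ell_k(x)+(1-\theta)\ell_k(y).
\]
Monotonicity of $\psi$ followed by convexity of $\psi$ then yields
\[
  \psi(\ell_k(\theta x+(1-\theta)y))\le\theta\psi(\ell_k(x))+(1-\theta)\psi(\ell_k(y)).
\]
Dividing by the positive constant $2\norm{\rho^k}^2$ gives convexity of $\delta_k$.

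The only genuinely nonroutine point is the componentwise nonnegativity of $\rho^k$. Without it, a negative coefficient $\rho^k_i$ would multiply the convex function $f_i$ and destroy convexity of $\ell_k$, which is exactly why $\delta_k$ can be convex even though $G$ is not. Since this fact is supplied by Corollary~\ref{cor:res} via the free-disposal property (Proposition~\ref{prop:free}), the remaining argument is elementary.
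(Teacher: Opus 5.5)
Your proof is correct and follows the paper's argument: $\rho^k\ge 0$ (Corollary~\ref{cor:res}) makes $\sum_i\rho^k_i f_i$ convex, and composing with the convex nondecreasing $\psi(t)=[t]_+^2$ gives convexity of $\delta_k$, with the case $\rho^k=0$ trivial. Your closing aside that $G$ itself fails to be convex is not needed, and it is inaccurate under~\eqref{A3}: free disposal makes $\dist(\cdot,Q^+)$ nondecreasing in the componentwise order, so $G$ is also convex by the same composition rule, though this does not affect your proof.
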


\noindent\textit{Convexity of $\delta_k$ is the key property that makes the global subgradient inequality in Lemma~\ref{lem:ineq-G} possible.}

\begin{lemma}\label{lem:delta-agree}
For every~$k$: $\delta_k(x^k)=G(x^k)$, and
$\nabla\delta_k(x^k)=v^k$ (with $v^k=0$ when $\rho^k=0$).~\textit{(Proof in Appendix~\ref{app:tech-lemmas}.)}
\end{lemma}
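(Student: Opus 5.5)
We treat the two cases $\rho^k=0$ and $\rho^k\neq 0$ separately and use the explicit formula~\eqref{eq:delta-explicit} in the second case.

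\emph{Case $\rho^k=0$.} Then $\calF(x^k)=p^k\in Q^+$, so $G(x^k)=0$. By the convention following~\eqref{eq:delta-explicit}, $\delta_k\equiv 0$, so $\delta_k(x^k)=0=G(x^k)$ and $\nabla\delta_k(x^k)=0$. Also $v^k=J_\calF(x^k)^T\rho^k=0$ by~\eqref{eq:vk}, so both claims hold trivially.

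\emph{Case $\rho^k\neq 0$.} First the value. Since $\rho^k=\calF(x^k)-p^k$, we have
\[
\ip{\rho^k}{\calF(x^k)-p^k}=\norm{\rho^k}^2>0 .
\]
The positive part is therefore inactive, and~\eqref{eq:delta-explicit} gives
\[
\delta_k(x^k)=\frac{\norm{\rho^k}^4}{2\norm{\rho^k}^2}=\tfrac12\norm{\rho^k}^2 .
\]
On the other hand, $G(x^k)=\tfrac12\norm{\calF(x^k)-P_{Q^+}(\calF(x^k))}^2=\tfrac12\norm{\rho^k}^2$ by the definition of $p^k$ in~\eqref{eq:pk}. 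So the two values agree.

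Next the gradient. Write $\delta_k=\frac{1}{2\norm{\rho^k}^2}\,\psi(\ell(x))$ with $\ell(x):=\ip{\rho^k}{\calF(x)-p^k}$ and $\psi(t):=[t]_+^2$. The function $\psi$ is $C^1$ with $\psi'(t)=2[t]_+$, and $\ell$ is $C^1$ by~\eqref{A1} with $\nabla\ell(x)=J_\calF(x)^T\rho^k$. The chain rule then gives
\[
\nabla\delta_k(x)=\frac{[\ell(x)]_+}{\norm{\rho^k}^2}\,J_\calF(x)^T\rho^k .
\]
At $x=x^k$ we have $[\ell(x^k)]_+=\norm{\rho^k}^2$, so $\nabla\delta_k(x^k)=J_\calF(x^k)^T\rho^k=v^k$.

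The only delicate point is differentiability, since $[\cdot]_+^2$ has a kink in its argument. This is harmless: $\psi$ is continuously differentiable everywhere, including at $t=0$. Moreover $\ell(x^k)>0$, so near $x^k$ the function $\delta_k$ is simply $\ell^2/(2\norm{\rho^k}^2)$.

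Alternatively, one can argue directly from $\delta_k(x)=\tfrac12\dist^2(\calF(x),Q^+_k)$. The map $y\mapsto\tfrac12\dist^2(y,Q^+_k)$ has gradient $y-P_{Q^+_k}(y)$ by Definition~\ref{def:proj-subdiff}. We also have $P_{Q^+_k}(\calF(x^k))=p^k$: indeed $p^k\in Q^+_k$, and $\calF(x^k)-p^k=\rho^k$ is normal to the boundary hyperplane of $Q^+_k$. Composing with $\calF$ via the chain rule gives $\nabla\delta_k(x^k)=J_\calF(x^k)^T\rho^k=v^k$. This route also covers $\rho^k=0$ uniformly, since then $Q^+_k=\R^m$.
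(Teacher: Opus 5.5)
Your proof is correct and follows essentially the same route as the paper: the same case split, the identity $\ip{\rho^k}{\calF(x^k)-p^k}=\norm{\rho^k}^2>0$ to evaluate $\delta_k(x^k)=\tfrac12\norm{\rho^k}^2=G(x^k)$, and the chain rule applied to $\delta_k$. The paper uses the local form $\Psi^2/(2\norm{\rho^k}^2)$ near $x^k$, while you use the globally $C^1$ form $[\ell]_+^2$, which amounts to the same thing; your alternative argument via $\nabla\tfrac12\dist^2(\cdot,Q^+_k)=\mathrm{id}-P_{Q^+_k}$ and $P_{Q^+_k}(\calF(x^k))=p^k$ is also valid and has the small advantage of covering $\rho^k=0$ without a separate case.
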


\noindent\textit{The surrogate $\delta_k$ agrees with $G$ in both value and gradient at~$x^k$, so no approximation error is introduced at the current iterate.}

The next three lemmas show that each direction vector has a useful
inner-product lower bound with respect to any $x^*\in\Omega$.

\begin{lemma}\label{lem:ineq-phi}
Under~\eqref{A1}, \eqref{A3}, \eqref{A7}:
$\ip{w^k}{x^k-x^*}\ge\Delta_k$ for all~$k$ and $x^*\in\Omega$.~\textit{(Proof in Appendix~\ref{app:tech-lemmas}.)}
\end{lemma}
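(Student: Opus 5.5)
The argument combines the subgradient inequality for the convex function $\varphi$ with the zero-gap condition. By Lemma~\ref{lem:phi}, $\varphi$ is convex under~\eqref{A1} and~\eqref{A3}, and $w^k\in\partial\varphi(x^k)$. So the definition of the subdifferential (Definition~\ref{def:proj-subdiff}), applied at $x^k$ and evaluated at $y=x^*$, gives
\[
  \varphi(x^*)\ge\varphi(x^k)+\ip{w^k}{x^*-x^k}.
\]
Rearranging yields $\ip{w^k}{x^k-x^*}\ge\varphi(x^k)-\varphi(x^*)$.

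Next we identify $\varphi(x^*)$ with $\varphi_{\lb}$. Since $x^*\in\Omega$, we have $x^*\in\mathcal S$ and $\varphi(x^*)=\varphi^*$. Assumption~\eqref{A7} gives $\varphi^*=\varphi_{\lb}$. Therefore
\[
  \varphi(x^k)-\varphi(x^*)=\varphi(x^k)-\varphi_{\lb}=\Delta_k,
\]
which is the claimed inequality.

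The inequality holds for every $k$, whatever the sign of $\Delta_k$, so no case split is needed. The subgradient inequality is global, and the identification $\varphi(x^*)=\varphi_{\lb}$ does not depend on $x^k$ being feasible. There is no real obstacle here. The only point needing care is that the lower bound uses the true optimal value $\varphi^*$ rather than $\varphi_{\lb}$; without~\eqref{A7} one would only obtain $\ip{w^k}{x^k-x^*}\ge\Delta_k-\sigma$. Zero gap is precisely what removes the $\sigma$ term.
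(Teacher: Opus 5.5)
Your proof is correct and is the same as the paper's: you apply the subgradient inequality for $w^k\in\partial\varphi(x^k)$ at $y=x^*$, then use $\varphi(x^*)=\varphi^*=\varphi_{\lb}$ under~\eqref{A7}. Your closing remark that without zero gap one only gets $\Delta_k-\sigma$ is exactly the paper's Lemma~\ref{lem:ineq-mod}.
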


\begin{lemma}\label{lem:ineq-H}
Under~\eqref{A2}:
$\ip{z^k}{x^k-x^*}\ge H_k\ge 0$ for all~$k$ and $x^*\in\Omega$.~\textit{(Proof in Appendix~\ref{app:tech-lemmas}.)}
\end{lemma}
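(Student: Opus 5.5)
The inequality in Lemma~\ref{lem:ineq-H} follows from two facts: $H$ is convex with gradient $z^k$, and every solution lies in $C$. Fix $x^*\in\Omega$. Since $\Omega\subset\mathcal{S}$, we have $H(x^*)=0$, i.e.\ $x^*\in C$. Write $H_k:=H(x^k)=\tfrac12\norm{x^k-P_C(x^k)}^2$.

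The quickest route uses Lemma~\ref{lem:H}: $H$ is convex and $C^1$ with $\nabla H(x^k)=z^k$. The gradient inequality for a convex differentiable function gives
\[
H(x^*)\ge H(x^k)+\ip{z^k}{x^*-x^k}.
\]
Substituting $H(x^*)=0$ and rearranging yields $\ip{z^k}{x^k-x^*}\ge H_k$. Nonnegativity $H_k\ge 0$ is immediate from the definition of $H$.

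A direct check, independent of Lemma~\ref{lem:H}, gives a sharper bound. Set $p:=P_C(x^k)$ and decompose
\[
\ip{z^k}{x^k-x^*}=\ip{x^k-p}{x^k-p}+\ip{x^k-p}{p-x^*}=2H_k+\ip{x^k-p}{p-x^*}.
\]
By the variational characterization of the projection (Definition~\ref{def:proj-subdiff}) with $s=x^*\in C$, we have $\ip{x^k-p}{x^*-p}\le 0$, so the last term is nonnegative. Hence $\ip{z^k}{x^k-x^*}\ge 2H_k\ge H_k$.

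There is no real obstacle in this argument. The only points needing care are that $x^*\in C$ follows from the definition of $\Omega$, and that only \eqref{A2} is used, namely closedness and convexity of $C$, so that $P_C$ is well defined.
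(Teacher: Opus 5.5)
Your first argument is exactly the paper's proof: apply the gradient inequality for the convex $C^1$ function $H$ at $x^k$ (with $\nabla H(x^k)=z^k$) to the point $x^*\in C$, where $H(x^*)=0$, and rearrange. Your second, projection-based check, which yields the stronger bound $\langle z^k, x^k-x^*\rangle\ge 2H_k$, is also correct but is not needed for the lemma.
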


\begin{lemma}\label{lem:ineq-G}
Under~\eqref{A1}--\eqref{A3}, $v^k$ satisfies a global subgradient
inequality for~$G$:
\begin{equation}\label{eq:ineq-G}
  G(y)\ge G_k+\ip{v^k}{y-x^k}
  \qquad\forall\,y\in\R^n,\;k\ge 0.
\end{equation}
In particular, $\ip{v^k}{x^k-x^*}\ge G_k\ge 0$ for all
$x^*\in\Omega$.~\textit{(Proof in Appendix~\ref{app:tech-lemmas}.)}
\end{lemma}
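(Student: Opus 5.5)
The plan is to prove \eqref{eq:ineq-G} by sandwiching the convex surrogate $\delta_k$ between $G$ and its own tangent line at $x^k$, using Lemmas~\ref{lem:Qsub}, \ref{lem:delta-cvx} and~\ref{lem:delta-agree}. Throughout, $G_k:=G(x^k)$.

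First I would dispose of the degenerate case $\rho^k=0$. Then $\calF(x^k)=p^k\in Q^+$, so $G_k=0$, and $v^k=J_\calF(x^k)^T\rho^k=0$. Inequality \eqref{eq:ineq-G} therefore reduces to $G(y)\ge 0$, which holds trivially.

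Assume now $\rho^k\neq 0$. I would proceed in three steps.

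\emph{Step 1 (global minorization).} By Lemma~\ref{lem:Qsub}, $Q^+\subseteq Q^+_k$. Distance to a larger set is no larger, so $\dist(u,Q^+_k)\le\dist(u,Q^+)$ for every $u\in\R^m$. Taking $u=\calF(y)$ and squaring gives $\delta_k(y)\le G(y)$ for all $y\in\R^n$.

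\emph{Step 2 (differentiability of $\delta_k$).} Using the explicit formula \eqref{eq:delta-explicit}, write $\delta_k=h_k\circ\calF$ with
\[
h_k(u):=\frac{[\ip{\rho^k}{u-p^k}]_+^2}{2\norm{\rho^k}^2}.
\]
Since $t\mapsto[t]_+^2$ is $C^1$, the map $h_k$ is $C^1$. Under \eqref{A1}, $\calF$ is $C^1$, so $\delta_k$ is $C^1$ by the chain rule.

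\emph{Step 3 (tangent-line inequality).} By Lemma~\ref{lem:delta-cvx}, $\delta_k$ is convex; by Step~2 it is differentiable. Hence its gradient is a subgradient, and
\[
\delta_k(y)\ge\delta_k(x^k)+\ip{\nabla\delta_k(x^k)}{y-x^k}\qquad\forall\,y\in\R^n.
\]
Lemma~\ref{lem:delta-agree} gives $\delta_k(x^k)=G_k$ and $\nabla\delta_k(x^k)=v^k$. Chaining this with Step~1 yields \eqref{eq:ineq-G}.

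For the ``in particular'' claim, take $y=x^*\in\Omega\subset\mathcal{S}$. Feasibility gives $G(x^*)=0$, so \eqref{eq:ineq-G} becomes $0\ge G_k-\ip{v^k}{x^k-x^*}$. This is exactly $\ip{v^k}{x^k-x^*}\ge G_k$, and $G_k\ge 0$ holds because $G$ is half a squared distance.

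The substantive difficulty is the non-convexity of $G$ itself. This is fully absorbed by Lemma~\ref{lem:delta-cvx}: convexity of $\delta_k$ uses that $h_k$ is convex and nondecreasing along the direction $\rho^k\ge 0$ (Corollary~\ref{cor:res}), combined with convexity of each $f_i$. Given that lemma, the only point that needs care is Step~2, namely confirming that $\delta_k$ is differentiable (not merely subdifferentiable) at $x^k$, so that $v^k$ is a genuine subgradient. The $C^1$ smoothness of $[t]_+^2$ settles this.
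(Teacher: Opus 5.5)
Your proposal is correct and follows essentially the same route as the paper: dispose of $\rho^k=0$ trivially, then chain $G(y)\ge\delta_k(y)\ge G_k+\ip{v^k}{y-x^k}$ using Lemmas~\ref{lem:Qsub}, \ref{lem:delta-cvx} and~\ref{lem:delta-agree}, and specialize to $y=x^*$. Your Step~2 (global $C^1$ smoothness of $\delta_k$ via $[t]_+^2$) is a bit more explicit than the paper, which relies only on the differentiability near $x^k$ established in Lemma~\ref{lem:delta-agree}. That local differentiability is all the tangent-line inequality for a convex function needs.
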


\noindent\textit{These three inequalities are the load-bearing elements of the Lyapunov analysis:
each direction vector $w^k$, $z^k$, $v^k$ produces an inner product with $x^k - x^*$ that is
quantitatively lower-bounded by the corresponding sub-optimality measure
$\Delta_k$, $H_k$, $G_k$ respectively.}

\subsubsection{The Lyapunov function and boundedness}\label{subsubsec:lyapunov}

Throughout, $H_k:=H(x^k)$, $G_k:=G(x^k)$,
$\Delta_k:=\varphi(x^k)-\varphi_{\lb}$.
All estimates in this subsection hold for every $x^*\in\Omega$.

The \emph{adaptive merit function} is
\begin{equation}\label{eq:Phi}
  \Phi_k:=\alpha_k\,\Delta_k^+ +\beta_k\,H_k+\gamma_k\,G_k,
  \qquad\Delta_k^+:=\max(\Delta_k,0)\ge 0.
\end{equation}

\begin{lemma}[Lyapunov inequality]\label{lem:lyap}
Under Assumption~\ref{ass:all}, for every $x^*\in\Omega$ and
$k\ge 0$,
\begin{equation}\label{eq:lyap}
  \norm{x^{k+1}-x^*}^2
  \le\norm{x^k-x^*}^2
  -\frac{2\lambda_k}{\eta_k}\,\Phi_k
  +\lambda_k^2.
\end{equation}~\textit{(Proof in Appendix~\ref{app:lyapunov}.)}
\end{lemma}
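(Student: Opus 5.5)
Fix $x^*\in\Omega$ and $k\ge0$. The plan is to expand the squared distance after one step and bound the cross term using the three inner-product inequalities. Since $x^{k+1}=x^k-\frac{\lambda_k}{\eta_k}d^k$, we have the exact identity
\begin{equation*}
  \norm{x^{k+1}-x^*}^2
  =\norm{x^k-x^*}^2-\frac{2\lambda_k}{\eta_k}\ip{d^k}{x^k-x^*}
  +\frac{\lambda_k^2}{\eta_k^2}\norm{d^k}^2 .
\end{equation*}
The quadratic term is handled by the normalization. Because $\eta_k=\max(\mu,\norm{d^k})\ge\norm{d^k}$, we get $\norm{d^k}^2/\eta_k^2\le1$, so this term is at most $\lambda_k^2$. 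If $d^k=0$ the step is trivial, and the bound still holds with $\eta_k=\mu>0$. No boundedness of gradients is needed here; this is exactly why the normalized step is used.

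It then remains to show $\ip{d^k}{x^k-x^*}\ge\Phi_k$. By \eqref{eq:dk} and linearity, this inner product splits into three terms.
\begin{itemize}
\item The $\beta_k$ term: Lemma~\ref{lem:ineq-H} gives $\beta_k\ip{z^k}{x^k-x^*}\ge\beta_kH_k$.
\item The $\gamma_k$ term: Lemma~\ref{lem:ineq-G} gives $\gamma_k\ip{v^k}{x^k-x^*}\ge\gamma_kG_k$. This uses $G(x^*)=0$ in the global subgradient inequality \eqref{eq:ineq-G}, which is the surrogate argument already packaged in that lemma.
\item The $\alpha_k$ term: this needs a case split on the indicator. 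If $\Delta_k\ge0$, Lemma~\ref{lem:ineq-phi} (which uses \eqref{A7}) gives $\alpha_k\ip{w^k}{x^k-x^*}\ge\alpha_k\Delta_k=\alpha_k\Delta_k^+$. If $\Delta_k<0$, the term vanishes and $\Delta_k^+=0$, so the bound $\ge\alpha_k\Delta_k^+$ holds trivially.
\end{itemize}
All weights are positive by \eqref{A6}. Summing the three bounds gives $\ip{d^k}{x^k-x^*}\ge\Phi_k\ge0$.

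Finally, since $2\lambda_k/\eta_k>0$, we can substitute this lower bound into the identity, which yields \eqref{eq:lyap}.

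I expect the only delicate point to be the $\varphi$ term. Lemma~\ref{lem:ineq-phi} relies on $\varphi(x^*)=\varphi^*=\varphi_{\lb}$. I would check that its use is legitimate for every $k$ under \eqref{A7}: by convexity, $\ip{w^k}{x^k-x^*}\ge\varphi(x^k)-\varphi(x^*)=\Delta_k$. The role of the indicator is then just to avoid needing this when $\Delta_k<0$, where the inner product could be negative. Everything else is bookkeeping.
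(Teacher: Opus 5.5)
Your proposal is correct and follows the paper's proof essentially step for step. You expand the update and bound the quadratic term by $\lambda_k^2$ using $\eta_k\ge\norm{d^k}$, then reduce to $\ip{d^k}{x^k-x^*}\ge\Phi_k$ via Lemmas~\ref{lem:ineq-phi}--\ref{lem:ineq-G}, with the same case split on the sign of $\Delta_k$ for the indicator term.
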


\begin{lemma}[Robbins--Siegmund {\citep[Lemma~1]{robbins1971}}]
\label{lem:RS}
If $\{a_k\},\{b_k\}\subset[0,\infty)$ satisfy
$a_{k+1}\le a_k+b_k$ for all $k$ and $\sum b_k<\infty$, then
$\{a_k\}$ converges.
\end{lemma}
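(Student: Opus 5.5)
The statement to prove is the deterministic Robbins--Siegmund lemma (Lemma~\ref{lem:RS}). Given $a_{k+1}\le a_k+b_k$ with $a_k,b_k\ge 0$ and $\sum b_k<\infty$, we must show that $\{a_k\}$ converges. The plan is to turn the recursion into a monotonicity statement by absorbing the tail of the summable perturbation into a modified sequence. We define the tail sums $T_k:=\sum_{j\ge k}b_j$, which are finite by hypothesis, nonnegative, nonincreasing, and satisfy $T_k\to 0$. We then set $c_k:=a_k+T_k$.

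The first step is to check that $\{c_k\}$ is nonincreasing:
\[
c_{k+1}=a_{k+1}+T_{k+1}\le a_k+b_k+T_{k+1}=a_k+T_k=c_k .
\]
The second step is to note that $c_k\ge a_k\ge 0$, so $\{c_k\}$ is bounded below by $0$. A nonincreasing sequence of reals bounded below converges, so $c_k\to c\ge 0$ for some limit $c$. The final step is to write $a_k=c_k-T_k$. Since $T_k\to 0$, this gives $a_k\to c$, which proves convergence.

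None of these steps is a real obstacle. The only point needing care is choosing the right auxiliary sequence: we use the tail sum $T_k$ rather than the partial sum $\sum_{j<k}b_j$. With partial sums, $a_k-\sum_{j<k}b_j$ is also nonincreasing, but it is bounded below only by $-\sum_j b_j$. That still works, but the tail-sum version keeps everything nonnegative and makes the limit identification immediate. Nonnegativity of $a_k$ is what supplies the lower bound, so it is used essentially. Nonnegativity of $b_k$ ensures that $T_k$ is well defined as a convergent series of nonnegative terms.
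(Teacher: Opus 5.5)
Your proof is correct and complete. The sequence $c_k=a_k+T_k$ is nonincreasing and nonnegative, and $a_k=c_k-T_k$ with $T_k\to 0$, so $a_k$ converges to the same limit as $c_k$.

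The paper gives no argument of its own; it simply imports the statement from Robbins and Siegmund. Their theorem is the more general stochastic result for nonnegative almost supermartingales. There, $a_{k+1}\le(1+\beta_k)a_k+b_k-e_k$ holds in conditional expectation with $\sum\beta_k<\infty$ and $\sum b_k<\infty$, and the conclusion is almost-sure convergence of $a_k$ together with $\sum e_k<\infty$. The stated lemma is the deterministic case with $\beta_k=e_k=0$. Citing buys generality that would be needed if the analysis were extended to stochastic gradients, which the paper lists as future work. Your tail-sum argument buys self-containment at the cost of two lines.

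Your construction also absorbs the $e_k$ term immediately. If $a_{k+1}\le a_k+b_k-e_k$ with $e_k\ge 0$, then $c_{k+1}\le c_k-e_k$, and hence $\sum_k e_k\le c_0=a_0+\sum_k b_k$. Apply this to the Lyapunov inequality with $a_k=\norm{x^k-x^*}^2$, $b_k=\lambda_k^2$ and $e_k=(2\lambda_k/\eta_k)\Phi_k$. This yields Proposition~\ref{prop:fejer} and the summability bound behind Lemma~\ref{lem:sum} in one step. The paper instead discards the $\Phi_k$ term for one and re-sums the inequality for the other.
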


\begin{proposition}[Quasi-Fej\'er property]\label{prop:fejer}
Under Assumption~\ref{ass:all}, the sequence
$\{\norm{x^k-x^*}^2\}$ converges for every $x^*\in\Omega$;
in particular, $\{x^k\}$ is bounded.~\textit{(Proof in Appendix~\ref{app:lyapunov}.)}
\end{proposition}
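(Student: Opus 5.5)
The plan is to apply the Robbins--Siegmund lemma (Lemma~\ref{lem:RS}) directly to the Lyapunov inequality of Lemma~\ref{lem:lyap}. Fix an arbitrary $x^*\in\Omega$, which exists by~\eqref{A4}, and set $a_k:=\norm{x^k-x^*}^2\ge 0$ and $b_k:=\lambda_k^2\ge 0$.

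The first step is to check that the middle term of~\eqref{eq:lyap} can be dropped. By Assumption~\eqref{A6} we have $\alpha_k,\beta_k,\gamma_k>0$. Moreover, $\Delta_k^+\ge 0$ by definition, and $H_k\ge 0$ and $G_k\ge 0$ because they are halved squared distances. Hence $\Phi_k\ge 0$. We also have $\eta_k=\max(\mu,\norm{d^k})\ge\mu>0$, so the ratio $2\lambda_k\Phi_k/\eta_k$ is well defined and nonnegative. Dropping it from~\eqref{eq:lyap} gives
\[
  a_{k+1}\le a_k+b_k ,
\]
and Assumption~\eqref{A5} gives $\sum_k b_k=\sum_k\lambda_k^2<\infty$.

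Lemma~\ref{lem:RS} now shows that $\{a_k\}$ converges. Since $x^*\in\Omega$ was arbitrary, this holds for every $x^*\in\Omega$. A convergent real sequence is bounded, so $\norm{x^k-x^*}\le M$ for some $M$. The triangle inequality then gives $\norm{x^k}\le\norm{x^*}+M$, so $\{x^k\}$ is bounded.

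I expect no real obstacle here, since all of the substantive work sits in Lemma~\ref{lem:lyap}. The only points that need care are the sign of $\Phi_k$ and the fact that $\eta_k$ is bounded away from zero. If the Robbins--Siegmund lemma were not available, one could instead sum the inequality to get $a_{k}\le a_0+\sum_j\lambda_j^2$, which already yields boundedness. Convergence would then follow because $a_k+\sum_{j\ge k}\lambda_j^2$ is nonincreasing and bounded below, hence convergent, and the tail sum tends to zero.
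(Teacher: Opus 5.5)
Your proposal is correct and follows the paper's proof: the paper likewise drops the nonpositive term $-(2\lambda_k/\eta_k)\Phi_k$ from \eqref{eq:lyap} and applies Lemma~\ref{lem:RS} with $b_k=\lambda_k^2$, whose summability comes from \eqref{A5}. You also spell out two details the paper leaves implicit, namely why $\Phi_k\ge 0$ with $\eta_k\ge\mu>0$, and the triangle-inequality step that turns convergence of $\norm{x^k-x^*}^2$ into boundedness of $\{x^k\}$.
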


\begin{remark}
The Lyapunov inequality~\eqref{eq:lyap} requires only
$\eta_k\ge\|d^k\|$, which is built into
the ABP algorithm.
Boundedness of $\{x^k\}$ (Proposition~\ref{prop:fejer}) is
therefore established prior to---and independently of---any bound
on $\{d^k\}$.
\end{remark}

\begin{lemma}\label{lem:dir}
Under Assumption~\ref{ass:all}, $\norm{d^k}\le\bar M<\infty$ for
all~$k$, and $\mu\le\eta_k\le\bar\eta:=\max(\mu,\bar M)$.~\textit{(Proof in Appendix~\ref{app:lyapunov}.)}
\end{lemma}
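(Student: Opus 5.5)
Because $\|d^k\|\le\alpha_k\|w^k\|+\beta_k\|z^k\|+\gamma_k\|v^k\|$ (the indicator is at most $1$), I bound each of the three direction vectors uniformly in~$k$ using the boundedness of $\{x^k\}$ from Proposition~\ref{prop:fejer} and the upper bounds in~\eqref{A6}. Fix $x^*\in\Omega$ and set $R:=\sup_k\norm{x^k-x^*}<\infty$. The iterates then lie in the compact ball $K:=\bar B(x^*,R)$, and continuity arguments on $K$ give each bound.

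\emph{Bound on $w^k$.} Under~\eqref{A1} each $\nabla f_i$ is continuous, so $L_K:=\max_i\sup_{x\in K}\norm{\nabla f_i(x)}<\infty$. Since $r_i\le 1$, we get $\norm{w^k}\le L_K$.

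\emph{Bound on $z^k$.} $P_C$ is nonexpansive and $x^*\in C$ because $x^*\in\mathcal{S}$. Hence $\norm{z^k}=\norm{x^k-P_C(x^k)}\le\norm{x^k-P_C(x^*)}+\norm{P_C(x^*)-P_C(x^k)}\le 2\norm{x^k-x^*}\le 2R$.

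\emph{Bound on $v^k$.} We have $\norm{v^k}\le\norm{J_\calF(x^k)}\,\norm{\rho^k}$. The Jacobian norm is bounded on $K$ by continuity, by roughly $\sqrt m\,L_K$. For the residual, $\norm{\rho^k}=\dist(\calF(x^k),Q^+)\le\norm{\calF(x^k)-\calF(x^*)}$, because $\calF(x^*)\in Q^+$ by feasibility of $x^*$. The right-hand side is bounded by $2\sup_K\norm{\calF}<\infty$, since $\calF$ is continuous on a compact set.

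Putting the pieces together, $\bar M:=\bar\alpha L_K+2\bar\beta R+\bar\gamma\sqrt m\,L_K\cdot 2\sup_K\norm{\calF}$ works. The bounds on $\eta_k=\max(\mu,\norm{d^k})$ follow immediately: $\mu\le\eta_k\le\max(\mu,\bar M)$.

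The only point needing care is that the argument is not circular. Proposition~\ref{prop:fejer} relies on the Lyapunov inequality, which uses only $\eta_k\ge\norm{d^k}$ and never a bound on $d^k$ (as noted in the preceding remark). Boundedness of $\{x^k\}$ can therefore be invoked legitimately here. Beyond that, the proof is routine.
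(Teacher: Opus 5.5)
Your proposal is correct and follows essentially the same route as the paper: confine the iterates to a compact ball $\bar B(x^*,R)$ via Proposition~\ref{prop:fejer}, bound $w^k$, $z^k$, $v^k$ separately using continuity on that ball and nonexpansiveness of $P_C$, and combine them with the upper weights from (A6). Your version is only slightly more explicit, for instance bounding $\norm{\rho^k}$ by $\norm{\calF(x^k)-\calF(x^*)}$ via feasibility of $x^*$. Your non-circularity check matches the paper's own remark that the Lyapunov inequality needs only $\eta_k\ge\norm{d^k}$.
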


\subsubsection{Convergence theorems}\label{subsubsec:conv-theorems}

The proof of global convergence proceeds in three stages.
First, the Lyapunov inequality (Lemma~\ref{lem:lyap}) and the
Robbins--Monro summability of $\{\lambda_k^2\}$ yield
$\sum \lambda_k \Phi_k < \infty$, which forces
$\liminf_{k\to\infty} \Phi_k = 0$.
Second, any subsequence along which $\Phi_k \to 0$ is shown to
drive all three penalty components ($\Delta_k^+$, $H_k$, $G_k$)
to zero simultaneously.
Third, a convergent sub-subsequence is extracted using boundedness
(Proposition~\ref{prop:fejer}), and uniqueness of the limit follows
from the quasi-Fej\'er property.

\begin{lemma}\label{lem:sum}
$\sum_{k=0}^\infty\lambda_k\Phi_k<\infty$.~\textit{(Proof in Appendix~\ref{app:global-conv}.)}
\end{lemma}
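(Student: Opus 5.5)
The plan is to telescope the Lyapunov inequality of Lemma~\ref{lem:lyap}. Fix any $x^*\in\Omega$, which is possible by \eqref{A4}, and write $a_k:=\norm{x^k-x^*}^2$. Lemma~\ref{lem:lyap} gives
\begin{equation*}
  \frac{2\lambda_k}{\eta_k}\,\Phi_k\le a_k-a_{k+1}+\lambda_k^2
  \qquad\text{for every } k\ge 0 .
\end{equation*}
Summing from $k=0$ to $N$ and dropping the nonnegative term $a_{N+1}$ yields
\begin{equation*}
  2\sum_{k=0}^N\frac{\lambda_k}{\eta_k}\Phi_k\le a_0+\sum_{k=0}^N\lambda_k^2
  \le \norm{x^0-x^*}^2+\sum_{k=0}^\infty\lambda_k^2<\infty .
\end{equation*}
The last bound is finite by \eqref{A5}. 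Every summand is nonnegative: $\lambda_k>0$, $\eta_k\ge\mu>0$, and $\Phi_k\ge 0$ because $\Delta_k^+,H_k,G_k\ge0$ and the weights are positive by \eqref{A6}. Hence $\sum_k(\lambda_k/\eta_k)\Phi_k<\infty$.

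To remove the factor $1/\eta_k$, I will invoke Lemma~\ref{lem:dir}, which gives $\eta_k\le\bar\eta:=\max(\mu,\bar M)$ for all $k$. Then $\lambda_k\Phi_k\le\bar\eta\,(\lambda_k/\eta_k)\Phi_k$, and comparison gives
\begin{equation*}
  \sum_k\lambda_k\Phi_k\le\frac{\bar\eta}{2}\Bigl(\norm{x^0-x^*}^2+\sum_k\lambda_k^2\Bigr)<\infty .
\end{equation*}

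The only delicate point is the uniform bound on $\eta_k$, since everything else is bookkeeping. That bound in turn rests on the boundedness of $\{x^k\}$ from Proposition~\ref{prop:fejer}. Bounded iterates, together with continuity of $\calF$, $J_\calF$ and the projections, bound $w^k$, $z^k$ and $v^k$, and then \eqref{A6} bounds $d^k$.

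There is no circularity. Proposition~\ref{prop:fejer} uses only the Lyapunov inequality with $\eta_k\ge\norm{d^k}$, as the remark following it stresses, and not the upper bound on $\eta_k$. So citing Lemma~\ref{lem:dir} here is legitimate, and the proof reduces to the telescoping estimate above.
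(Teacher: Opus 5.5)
Your proof is correct and follows essentially the paper's own argument: telescope the Lyapunov inequality~\eqref{eq:lyap}, drop the nonnegative term $\norm{x^{N+1}-x^*}^2$, then use $\eta_k\le\bar\eta$ from Lemma~\ref{lem:dir} together with \eqref{A5}. You also check explicitly that the summands are nonnegative and that there is no circularity (Proposition~\ref{prop:fejer} needs only $\eta_k\ge\norm{d^k}$); the paper leaves both points implicit.
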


\begin{lemma}\label{lem:liminf}
$\liminf_{k\to\infty}\Phi_k=0$.~\textit{(Proof in Appendix~\ref{app:global-conv}.)}
\end{lemma}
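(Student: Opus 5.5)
The claim $\liminf_{k\to\infty}\Phi_k=0$ will follow from Lemma~\ref{lem:sum} together with the divergence condition $\sum\lambda_k=\infty$ in~\eqref{A5}. I will argue by contradiction.

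First, note that $\Phi_k\ge 0$ for every $k$. Each of its three terms is nonnegative: $\alpha_k>0$ and $\Delta_k^+\ge 0$; $\beta_k>0$ and $H_k\ge 0$; $\gamma_k>0$ and $G_k\ge 0$. Here the positivity of the weights comes from~\eqref{A6}, and $H,G$ are half squared distances. Hence $\liminf_{k\to\infty}\Phi_k\ge 0$.

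Now suppose, for contradiction, that $\liminf_{k\to\infty}\Phi_k=2\epsilon>0$. Then there is an index $K$ with $\Phi_k\ge\epsilon$ for all $k\ge K$. This gives
\begin{equation*}
  \sum_{k=K}^\infty\lambda_k\Phi_k\;\ge\;\epsilon\sum_{k=K}^\infty\lambda_k=\infty,
\end{equation*}
because $\lambda_k>0$ and $\sum\lambda_k=\infty$ by~\eqref{A5}. This contradicts Lemma~\ref{lem:sum}, which states $\sum_{k=0}^\infty\lambda_k\Phi_k<\infty$. The case $\liminf_{k\to\infty}\Phi_k=+\infty$ is excluded by the same argument with any $\epsilon>0$.

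No step here is a real obstacle; all the substantive work is in Lemma~\ref{lem:sum}. That lemma rests on the Lyapunov inequality of Lemma~\ref{lem:lyap}, the bound $\eta_k\le\bar\eta$ from Lemma~\ref{lem:dir}, and telescoping. The only point needing care in the present argument is the nonnegativity of $\Phi_k$, which is why the merit function uses the positive part $\Delta_k^+$ rather than $\Delta_k$.
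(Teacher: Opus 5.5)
Your proof is correct and matches the paper's argument: assuming $\Phi_k\ge\epsilon>0$ eventually contradicts the summability $\sum\lambda_k\Phi_k<\infty$ of Lemma~\ref{lem:sum} via $\sum\lambda_k=\infty$. Your explicit check that $\Phi_k\ge 0$, which ensures the $\liminf$ is at least $0$, is a detail the paper leaves implicit.
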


\begin{proposition}\label{prop:simult}
There exists a subsequence $\{k_j\}$ with
$\Delta_{k_j}^+\to 0$, $H_{k_j}\to 0$, $G_{k_j}\to 0$, and
$\Delta_{k_j}\to 0$.~\textit{(Proof in Appendix~\ref{app:global-conv}.)}
\end{proposition}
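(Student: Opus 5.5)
We would start from Lemma~\ref{lem:liminf}, which gives $\liminf_{k\to\infty}\Phi_k=0$. From it we extract a subsequence $\{k_j\}$ with $\Phi_{k_j}\to 0$.

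\textbf{Separating the three components.} By \eqref{eq:Phi}, $\Phi_k$ is a sum of three nonnegative terms. Each term carries a weight bounded below by Assumption~\eqref{A6}, namely $\alpha_k\ge\underline\alpha>0$, $\beta_k\ge\underline\beta>0$ and $\gamma_k\ge\underline\gamma>0$. Hence
\[
0\le \underline\alpha\,\Delta_{k_j}^+ + \underline\beta\, H_{k_j}+\underline\gamma\, G_{k_j}\le \Phi_{k_j}\to 0 .
\]
This immediately gives $\Delta_{k_j}^+\to0$, $H_{k_j}\to0$ and $G_{k_j}\to0$.

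\textbf{Controlling the negative part of $\Delta_{k_j}$.} The remaining claim is $\Delta_{k_j}\to 0$, and since $\Delta^+_{k_j}\to0$ already controls the positive part, only $\Delta_{k_j}^-$ needs work. This is the main obstacle, because $\Delta_k$ can be negative at infeasible iterates. The plan is to pass to a further subsequence using boundedness from Proposition~\ref{prop:fejer}, so that $x^{k_j}\to\bar x$.
\begin{itemize}
\item By continuity of $H$ (Lemma~\ref{lem:H}), $H(\bar x)=0$.
\item $G$ is continuous, since it is the composition of the continuous $\calF$ (from \eqref{A1}) with the continuous distance to the closed set $Q^+$ (Lemma~\ref{lem:Qplus}). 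Hence $G(\bar x)=0$.
\item Therefore $\bar x\in\mathcal S$, so $\varphi(\bar x)\ge\varphi^*=\varphi_{\lb}$ by \eqref{A7}.
\item $\varphi$ is continuous as a finite maximum of continuous functions. Hence $\Delta_{k_j}\to\varphi(\bar x)-\varphi_{\lb}\ge 0$.
\end{itemize}
So the negative parts vanish in the limit, and together with $\Delta^+_{k_j}\to0$ this yields $\Delta_{k_j}\to0$ along the sub-subsequence. We relabel it as $\{k_j\}$. The relabelled sequence still satisfies the first three limits, since it is a subsequence of the original.

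\textbf{Fallback and by-product.} If one prefers to avoid the zero-gap condition at this stage, the same argument gives $\liminf\Delta_{k_j}\ge-\sigma$, which reduces to the claim under \eqref{A7}. The construction also shows that the limit point $\bar x$ satisfies $\varphi(\bar x)=\varphi^*$, so $\bar x\in\Omega$. This is exactly what the subsequent full-sequence argument via the quasi-Fej\'er property will need.
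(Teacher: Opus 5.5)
Your proof is correct and follows essentially the paper's route. You extract $\{k_j\}$ with $\Phi_{k_j}\to 0$ from Lemma~\ref{lem:liminf}, peel off $\Delta_{k_j}^+$, $H_{k_j}$, $G_{k_j}$ using the lower bounds in \eqref{A6}, and then use feasibility of a cluster point to rule out a negative limit of $\Delta_{k_j}$. The paper phrases this last step as a contradiction argument on the original subsequence, while you argue directly on a relabelled sub-subsequence, which is equally valid for the existential claim.

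A minor aside on your fallback remark. Since $\varphi(\bar x)\ge\varphi^*=\varphi_{\lb}+\sigma$, you actually get $\Delta_{k_j}\to\varphi(\bar x)-\varphi_{\lb}\ge\sigma\ge 0$. So this step never needs \eqref{A7}, and your bound $-\sigma$ is needlessly weak. The zero-gap condition is used upstream, in Lemma~\ref{lem:liminf} and Proposition~\ref{prop:fejer}.
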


\begin{proposition}\label{prop:subseq}
There exist a subsequence $\{k_j\}$ and $x^\infty\in\Omega$ with
$x^{k_j}\to x^\infty$.~\textit{(Proof in Appendix~\ref{app:global-conv}.)}
\end{proposition}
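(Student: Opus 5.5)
The plan is to start from the subsequence $\{k_j\}$ given by Proposition~\ref{prop:simult}, along which $\Delta_{k_j}\to 0$, $H_{k_j}\to 0$ and $G_{k_j}\to 0$. By Proposition~\ref{prop:fejer} the whole sequence $\{x^k\}$ is bounded. Hence $\{x^{k_j}\}$ is bounded, and by Bolzano--Weierstrass we may pass to a further subsequence, still denoted $\{k_j\}$, with $x^{k_j}\to x^\infty$ for some $x^\infty\in\R^n$. The three limit relations persist along this sub-subsequence. The remaining work is to show $x^\infty\in\Omega$, i.e.\ that $x^\infty$ is feasible and attains the optimal value.

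\textbf{Feasibility.} Since $H(x)=\tfrac12\dist^2(x,C)$ is continuous (Lemma~\ref{lem:H}), we get $H(x^\infty)=\lim_j H_{k_j}=0$. Because $C$ is closed, this gives $x^\infty\in C$. For $G$, continuity follows because $\calF$ is $C^1$ by~\eqref{A1} and $\dist(\cdot,Q^+)$ is $1$-Lipschitz. Moreover $Q^+$ is closed by Lemma~\ref{lem:Qplus}. Thus $G(x^\infty)=\lim_j G_{k_j}=0$ gives $\calF(x^\infty)\in Q^+$. Together, $x^\infty\in\mathcal{S}$.

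\textbf{Optimality.} The function $\varphi$ is a finite maximum of continuous functions, hence continuous. Therefore
\[
\varphi(x^\infty)-\varphi_{\lb}=\lim_j\Delta_{k_j}=0,
\]
so $\varphi(x^\infty)=\varphi_{\lb}$. By~\eqref{A7}, $\varphi_{\lb}=\varphi^*$, so $\varphi(x^\infty)=\varphi^*$. Note that the weaker inequality $\varphi(x^\infty)\le\varphi^*$ from $\Delta^+_{k_j}\to0$ already suffices: combined with feasibility it yields $\varphi(x^\infty)\ge\varphi^*$, hence equality. In either case $x^\infty\in\Omega$.

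The only delicate point is the closedness of $Q^+$ needed to pass from $G(x^\infty)=0$ to $\calF(x^\infty)\in Q^+$. Lemma~\ref{lem:Qplus} settles this, so the argument reduces to continuity and compactness. No new estimate is required beyond Propositions~\ref{prop:fejer} and~\ref{prop:simult}.
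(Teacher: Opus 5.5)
Your proof is correct and follows the paper's argument. You take the subsequence from Proposition~\ref{prop:simult}, extract a convergent sub-subsequence using the boundedness from Proposition~\ref{prop:fejer}, and get $x^\infty\in\mathcal{S}$ from continuity of $H$, $G$ and closedness of $C$, $Q^+$; continuity of $\varphi$ with $\Delta_{k_j}\to 0$ and \eqref{A7} then gives $\varphi(x^\infty)=\varphi^*$ (your remark that $\Delta_{k_j}^+\to 0$ plus feasibility already suffices is a valid minor shortcut around Step~2 of Proposition~\ref{prop:simult}).
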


\begin{theorem}[Global convergence]\label{thm:main}
Under Assumption~\ref{ass:all}, $\{x^k\}$ generated by
the ABP algorithm (Algorithm~\ref{alg:main}) converges to some $x^\infty\in\Omega$.~\textit{(Proof in Appendix~\ref{app:global-conv}.)}
\end{theorem}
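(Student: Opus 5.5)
The argument is the standard Opial-type one for quasi-Fej\'er sequences, and it rests on the results established just above. By Proposition~\ref{prop:subseq} we have a subsequence $\{x^{k_j}\}$ and a point $x^\infty\in\Omega$ with $x^{k_j}\to x^\infty$. By Proposition~\ref{prop:fejer}, applied with the particular choice $x^*:=x^\infty$, the full sequence $\{\norm{x^k-x^\infty}^2\}$ converges to some limit $L\ge 0$. Evaluating this limit along the subsequence gives $\norm{x^{k_j}-x^\infty}^2\to 0$, so $L=0$. Hence $\norm{x^k-x^\infty}\to 0$ for the whole sequence, which is exactly the claim of Theorem~\ref{thm:main}.

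The key point is that Proposition~\ref{prop:fejer} holds for \emph{every} $x^*\in\Omega$, so we may apply it to the cluster point itself. This requires that the cluster point actually lies in $\Omega$. Since all the preceding results are assumed, that membership is the only nontrivial ingredient, and it is supplied by Proposition~\ref{prop:subseq}.

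For completeness, here is how I would justify that step if asked. Assumption~\eqref{A5} together with Lemma~\ref{lem:sum} forces $\liminf\Phi_k=0$. By~\eqref{A6} the weights are bounded below, which yields Proposition~\ref{prop:simult}: along some subsequence, $\Delta_k^+\to0$, $H_k\to0$ and $G_k\to0$. Boundedness of the iterates (Proposition~\ref{prop:fejer}) lets us extract a further convergent sub-subsequence with limit $\bar x$. Continuity of $H$, $G$ (via~\eqref{A1} and continuity of the distance function) and of $\varphi$ then gives $H(\bar x)=G(\bar x)=0$, so $\bar x\in\mathcal S$. It also gives $\varphi(\bar x)\le\varphi_{\lb}=\varphi^*$ by~\eqref{A7}. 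Combined with $\varphi(\bar x)\ge\varphi^*$ for feasible points, this shows $\bar x\in\Omega$.

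I expect no real obstacle beyond keeping the order of quantifiers straight. The subtle part is that the limit point must be fixed \emph{before} Proposition~\ref{prop:fejer} is invoked. This is legitimate because that proposition is uniform over all $x^*\in\Omega$ and needs no a priori knowledge of which point the iterates approach.
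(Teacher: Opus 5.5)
Your proof is correct and matches the paper's: take the cluster point $x^\infty\in\Omega$ from Proposition~\ref{prop:subseq}, apply Proposition~\ref{prop:fejer} with $x^*:=x^\infty$, and conclude that the limit of $\norm{x^k-x^\infty}^2$ is $0$ because it vanishes along $\{k_j\}$. Your supplementary justification of $x^\infty\in\Omega$ uses $\Delta_{k_j}^+\to 0$ and feasibility of the limit to get $\varphi(\bar x)\le\varphi_{\lb}=\varphi^*$ directly, rather than first proving $\Delta_{k_j}\to 0$ by contradiction as in Proposition~\ref{prop:simult}; this is a valid and slightly more direct variant.
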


\begin{remark}\label{rem:exact-vs-approx}
Theorem~\ref{thm:main} relies on the zero-gap condition~\eqref{A7}.
The remainder of this section extends the analysis to the case
$\sigma > 0$, deriving quantitative approximate-feasibility and
approximate-optimality bounds that vanish as $\varepsilon_0 \to 0$.
\end{remark}

When only the weaker Assumption~(A7') holds ($\sigma \le \varepsilon_0$),
the Lyapunov inequality acquires a drift term proportional to
$\bar\alpha\sigma\lambda_k/\mu$, which prevents the Robbins--Siegmund
argument from establishing boundedness; hence Assumption~(A8) is
now required.
An ergodic (Ces\`{a}ro) bound replaces the exact summability, yielding
$\liminf_k\Phi_k \le C_*\varepsilon_0$, and the three feasibility/optimality
bounds follow by decomposing $\Phi_k$ into its nonnegative components.

When~\eqref{A7} holds exactly, Theorem~\ref{thm:main} guarantees
full-sequence convergence to a point in~$\Omega$.  The following
results establish what can be concluded when only the weaker
Assumption~(A7') is available, replacing the Robbins--Siegmund
quasi-Fej\'{e}r argument with an ergodic (Ces\`{a}ro) bound.

Recall the notation:
$\varphi(x)=\max_i r_i(f_i(x)-z_i^*)$,\;
$\varphi_{\lb}:=\inf_{x\in C}\varphi(x)$,\;
$\sigma:=\varphi^*-\varphi_{\lb}\ge 0$;
$H_k:=\tfrac12\dist^2(x^k,C)$,\;
$G_k:=\tfrac12\dist^2(\calF(x^k),Q^+)$,\;
$\Delta_k:=\varphi(x^k)-\varphi_{\lb}$;
$\Phi_k:=\alpha_k\Delta_k^++\beta_k H_k+\gamma_k G_k\ge 0$,
\; $\Delta_k^+:=\max(\Delta_k,0)$.

Under~(A8), Lemma~\ref{lem:dir} (boundedness of directions) holds
with $K:=\overline{B}(0,B+1)$ playing the role of the compact
set, giving the uniform bound $\norm{d^k}\le\bar{M}<\infty$ and hence
$\mu\le\eta_k\le\bar\eta:=\max(\mu,\bar M)$ for all~$k$.

\begin{lemma}[Modified optimality inner product]\label{lem:ineq-mod}
Let $x^*\in\Omega$.  Under \textup{\eqref{A1}, \eqref{A3}, (A7')}:
\begin{equation}\label{eq:ineq-phi-mod}
  \ip{w^k}{x^k-x^*} \;\ge\; \Delta_k - \sigma
  \qquad\forall\,k\ge 0.
\end{equation}~\textit{(Proof in Appendix~\ref{app:approx-conv}.)}
\end{lemma}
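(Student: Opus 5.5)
The plan is to use the subgradient inequality for the convex function $\varphi$, as for Lemma~\ref{lem:ineq-phi}, but without assuming $\varphi_{\lb}=\varphi^*$. By Lemma~\ref{lem:phi}, which needs only \eqref{A1} and \eqref{A3}, $\varphi$ is convex and $w^k\in\partial\varphi(x^k)$. The definition of the subdifferential (Definition~\ref{def:proj-subdiff}), applied at $y=x^*$, gives
\[
  \varphi(x^*)\ge\varphi(x^k)+\ip{w^k}{x^*-x^k},
  \qquad\text{i.e.}\qquad
  \ip{w^k}{x^k-x^*}\ge\varphi(x^k)-\varphi(x^*).
\]

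Next, since $x^*\in\Omega$, we have $\varphi(x^*)=\varphi^*$. By the definition \eqref{eq:sigma} of the bound gap, $\varphi^*=\varphi_{\lb}+\sigma$. Substituting,
\[
  \varphi(x^k)-\varphi(x^*)=\varphi(x^k)-\varphi_{\lb}-\sigma=\Delta_k-\sigma,
\]
which is exactly \eqref{eq:ineq-phi-mod}.

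The inequality holds for every $k$ with no sign condition on $\Delta_k$ and no feasibility of $x^k$. Assumption~(A7') enters only to guarantee that $\sigma$ is finite and bounded by the known constant $\varepsilon_0$, which is what makes the bound useful in the later ergodic analysis. I expect no real obstacle. The only point needing care is that the subgradient inequality is a global one, valid at the arbitrary point $x^*$, so convexity of each $f_i$ (\eqref{A3}) is indispensable; finiteness of $z_i^*$ is already built into the definition of $\varphi$.
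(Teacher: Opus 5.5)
Your proof is correct and matches the paper's: both apply the subgradient inequality $w^k\in\partial\varphi(x^k)$ (Lemma~\ref{lem:phi}) at $y=x^*$, then rewrite $\varphi(x^k)-\varphi(x^*)=\Delta_k+\varphi_{\lb}-\varphi^*=\Delta_k-\sigma$. You are also right that (A7') plays no role in the inequality itself and matters only for how the bound is used later.
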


\begin{lemma}[Modified Lyapunov inequality]\label{lem:lyap-approx}
Under \textup{\eqref{A1}--\eqref{A6}, (A7'), (A8)}, for every $x^*\in\Omega$,
\begin{equation}\label{eq:lyap-mod}
  \norm{x^{k+1}-x^*}^2
  \;\le\;
  \norm{x^k-x^*}^2
  - \frac{2\lambda_k}{\eta_k}\,\Phi_k
  + \frac{2\bar\alpha\sigma}{\mu}\,\lambda_k
  + \lambda_k^2
  \qquad\forall\,k\ge 0.
\end{equation}~\textit{(Proof in Appendix~\ref{app:approx-conv}.)}
\end{lemma}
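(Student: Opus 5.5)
The plan is to repeat the proof of Lemma~\ref{lem:lyap}, changing only the optimality component, which now uses Lemma~\ref{lem:ineq-mod}. Fix $x^*\in\Omega$ and expand the update $x^{k+1}=x^k-(\lambda_k/\eta_k)d^k$:
\begin{equation*}
  \norm{x^{k+1}-x^*}^2
  =\norm{x^k-x^*}^2-\frac{2\lambda_k}{\eta_k}\ip{d^k}{x^k-x^*}
  +\frac{\lambda_k^2}{\eta_k^2}\norm{d^k}^2 .
\end{equation*}
Since $\eta_k=\max(\mu,\norm{d^k})\ge\norm{d^k}$, the last term is at most $\lambda_k^2$. The remaining task is to bound $\ip{d^k}{x^k-x^*}$ from below by $\Phi_k-\bar\alpha\sigma$.

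We split $d^k$ into its three components from~\eqref{eq:dk}. Lemma~\ref{lem:ineq-H} gives $\beta_k\ip{z^k}{x^k-x^*}\ge\beta_kH_k$. Lemma~\ref{lem:ineq-G}, which uses only \eqref{A1}--\eqref{A3} and not the zero-gap condition, gives $\gamma_k\ip{v^k}{x^k-x^*}\ge\gamma_kG_k$.

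The optimality term is the main obstacle, because the indicator must be reconciled with $\Delta_k^+$. We treat two cases.
\begin{itemize}
\item If $\Delta_k\ge0$, the indicator equals $1$ and $\Delta_k^+=\Delta_k$. Lemma~\ref{lem:ineq-mod} then gives
$\alpha_k\ip{w^k}{x^k-x^*}\ge\alpha_k(\Delta_k-\sigma)=\alpha_k\Delta_k^+-\alpha_k\sigma$.
\item If $\Delta_k<0$, the term is absent and $\Delta_k^+=0$. Its contribution is $0\ge\alpha_k\Delta_k^+-\alpha_k\sigma$, because $\sigma\ge0$.
\end{itemize}
In both cases $\alpha_k\sigma\le\bar\alpha\sigma$ by \eqref{A6}. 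Summing the three components yields $\ip{d^k}{x^k-x^*}\ge\Phi_k-\bar\alpha\sigma$.

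Substituting this bound into the expansion gives the main term $-(2\lambda_k/\eta_k)\Phi_k$ plus the drift $(2\lambda_k/\eta_k)\bar\alpha\sigma$. Since $\eta_k\ge\mu$ and $\bar\alpha\sigma\ge0$, the drift is at most $2\bar\alpha\sigma\lambda_k/\mu$, which gives~\eqref{eq:lyap-mod}. Assumption (A8) is not needed for the inequality itself; it only supplies the upper bound $\eta_k\le\bar\eta$ used in later results. I would also check that Lemma~\ref{lem:ineq-G} applies to the given $x^*$, using $G(x^*)=0$ since $x^*\in\mathcal{S}$.
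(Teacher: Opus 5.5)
Your proposal is correct and follows the paper's proof essentially step for step. It uses the same expansion of the update and the same two-case treatment of the indicator, combining Lemma~\ref{lem:ineq-mod} with Lemmas~\ref{lem:ineq-H} and~\ref{lem:ineq-G} to obtain $\ip{d^k}{x^k-x^*}\ge\Phi_k-\bar\alpha\sigma$, and the same use of $\eta_k\ge\mu$ to bound the drift; your side remark that (A8) is not actually needed for this inequality is also accurate.
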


\begin{lemma}[Ergodic bound]\label{lem:ergodic}
Under \textup{\eqref{A1}--\eqref{A6}, (A7'), (A8)}, define
$C_* := \bar\alpha\bar\eta/\mu$ and
$\Lambda_N:=\sum_{k=0}^N\lambda_k$.  Then
\begin{equation}\label{eq:cesaro}
  \limsup_{N\to\infty}\;
  \frac{1}{\Lambda_N}\sum_{k=0}^N\lambda_k\Phi_k
  \;\le\; C_*\varepsilon_0,
\end{equation}
and in particular
\begin{equation}\label{eq:ergodic}
  \liminf_{k\to\infty}\,\Phi_k \;\le\; C_*\varepsilon_0.
\end{equation}~\textit{(Proof in Appendix~\ref{app:approx-conv}.)}
\end{lemma}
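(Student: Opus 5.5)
The plan is to telescope the modified Lyapunov inequality of Lemma~\ref{lem:lyap-approx} and then use the uniform bound on $\eta_k$. Fix any $x^*\in\Omega$ and set $a_k:=\norm{x^k-x^*}^2$. Under (A8), Lemma~\ref{lem:dir} gives $\mu\le\eta_k\le\bar\eta$ for all $k$. Hence $\frac{2\lambda_k}{\eta_k}\Phi_k\ge\frac{2\lambda_k}{\bar\eta}\Phi_k$, since $\Phi_k\ge 0$. Summing \eqref{eq:lyap-mod} from $k=0$ to $N$ then yields
\begin{equation*}
  \frac{2}{\bar\eta}\sum_{k=0}^N\lambda_k\Phi_k
  \le a_0-a_{N+1}+\frac{2\bar\alpha\sigma}{\mu}\Lambda_N+\sum_{k=0}^N\lambda_k^2
  \le a_0+\frac{2\bar\alpha\sigma}{\mu}\Lambda_N+\sum_{k=0}^\infty\lambda_k^2 .
\end{equation*}

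Next, divide by $2\Lambda_N/\bar\eta$ to obtain
\begin{equation*}
  \frac{1}{\Lambda_N}\sum_{k=0}^N\lambda_k\Phi_k
  \le \frac{\bar\eta}{2\Lambda_N}\Bigl(a_0+\sum_{k}\lambda_k^2\Bigr)+\frac{\bar\alpha\bar\eta}{\mu}\,\sigma .
\end{equation*}
By (A5), $\Lambda_N\to\infty$ and $\sum_k\lambda_k^2<\infty$, so the first term tends to zero. Using $\sigma\le\varepsilon_0$ from (A7') and the definition $C_*=\bar\alpha\bar\eta/\mu$, taking $\limsup_{N\to\infty}$ gives \eqref{eq:cesaro}. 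Note that $a_0<\infty$ trivially, and dropping $-a_{N+1}\le 0$ is exactly where we avoid needing any Fej\'er-type control. Boundedness enters only through $\bar\eta$.

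For \eqref{eq:ergodic}, argue by contradiction. Suppose $\liminf_k\Phi_k>C_*\varepsilon_0$. Then there exist $c>C_*\varepsilon_0$ and $k_0$ with $\Phi_k\ge c$ for all $k\ge k_0$. In that case
\begin{equation*}
  \frac{1}{\Lambda_N}\sum_{k=0}^N\lambda_k\Phi_k\ge \frac{c\,(\Lambda_N-\Lambda_{k_0-1})}{\Lambda_N}\to c ,
\end{equation*}
again because $\Lambda_N\to\infty$ and $\Phi_k\ge0$. This contradicts \eqref{eq:cesaro}. In other words, a weighted Ces\`aro mean with divergent total weight dominates the $\liminf$.

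The only delicate point is that the bound $\eta_k\le\bar\eta$ must hold uniformly. This is precisely why (A8) is needed: without the Robbins--Siegmund boundedness, Lemma~\ref{lem:dir} must be applied on the compact set $K=\overline{B}(0,B+1)$. There, continuity of $J_\calF$, $\nabla f_i$, $P_C$ and $P_{Q^+}$ bounds $w^k$, $z^k$ and $v^k$, and (A6) bounds the weights. Once this is in place, the rest is elementary.
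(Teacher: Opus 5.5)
Your proposal is correct and follows essentially the same route as the paper's proof. Both telescope the modified Lyapunov inequality of Lemma~\ref{lem:lyap-approx} and replace $\eta_k$ by the uniform bound $\bar\eta$ from Lemma~\ref{lem:dir}, applied on the ball furnished by (A8). Both then drop $-\norm{x^{N+1}-x^*}^2$, divide by $\Lambda_N$, invoke (A5) and $\sigma\le\varepsilon_0$, and obtain the $\liminf$ bound by contradiction from the weighted Ces\`aro mean. Your treatment of that last step is slightly more explicit than the paper's $O(1/\Lambda_N)$ remark, since you choose $c$ strictly between $C_*\varepsilon_0$ and the $\liminf$ and use $\Phi_k\ge 0$ for the first $k_0$ terms.
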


\begin{proposition}[Approximate simultaneous bounds]\label{prop:simult-approx}
Under \textup{\eqref{A1}--\eqref{A6}, (A7'), (A8)}, there exist a subsequence
$\{k_j\}_{j\ge 1}\subset\N$ and a constant $\ell_*\in[0,C_*\varepsilon_0]$
such that $\Phi_{k_j}\to\ell_*$ and, along this subsequence,
\begin{align}
  H_{k_j} &\;\le\; \frac{\Phi_{k_j}}{\underline\beta}
    \;\xrightarrow{j\to\infty}\;
    \frac{\ell_*}{\underline\beta}
    \;\le\; \frac{C_*\varepsilon_0}{\underline\beta},
    \label{eq:Hbound}\\
  G_{k_j} &\;\le\; \frac{\Phi_{k_j}}{\underline\gamma}
    \;\xrightarrow{j\to\infty}\;
    \frac{\ell_*}{\underline\gamma}
    \;\le\; \frac{C_*\varepsilon_0}{\underline\gamma},
    \label{eq:Gbound}\\
  \Delta_{k_j}^+ &\;\le\; \frac{\Phi_{k_j}}{\underline\alpha}
    \;\xrightarrow{j\to\infty}\;
    \frac{\ell_*}{\underline\alpha}
    \;\le\; \frac{C_*\varepsilon_0}{\underline\alpha}.
    \label{eq:Dbound}
\end{align}
In particular, when $\varepsilon_0=0$ we have $\ell_*=0$ and
$H_{k_j}\to 0$, $G_{k_j}\to 0$, $\Delta_{k_j}^+\to 0$.~\textit{(Proof in Appendix~\ref{app:approx-conv}.)}
\end{proposition}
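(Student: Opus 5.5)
The plan is to extract the subsequence directly from the ergodic bound of Lemma~\ref{lem:ergodic} and then read off each component bound from the definition of $\Phi_k$. First I set $\ell_*:=\liminf_{k\to\infty}\Phi_k$. Since $\Phi_k\ge 0$ for every $k$, $\ell_*\ge 0$. By \eqref{eq:ergodic}, $\ell_*\le C_*\varepsilon_0<\infty$, so $\ell_*\in[0,C_*\varepsilon_0]$. By the definition of $\liminf$ of a real sequence with finite $\liminf$, there is a strictly increasing sequence $\{k_j\}$ with $\Phi_{k_j}\to\ell_*$. This gives the first claim.

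Next I obtain the three pointwise inequalities. They hold for every $k$, not just along the subsequence. Each summand of $\Phi_k$ is nonnegative: $\Delta_k^+\ge 0$, $H_k\ge 0$, $G_k\ge 0$, and the weights are positive by \eqref{A6}. Hence $\beta_k H_k\le\Phi_k$, which gives $H_k\le\Phi_k/\beta_k\le\Phi_k/\underline\beta$ using $\beta_k\ge\underline\beta>0$. The same argument gives $G_k\le\Phi_k/\underline\gamma$ and $\Delta_k^+\le\Phi_k/\underline\alpha$. Along $k_j$, the right-hand sides converge to $\ell_*/\underline\beta$, $\ell_*/\underline\gamma$ and $\ell_*/\underline\alpha$. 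Each of these is at most $C_*\varepsilon_0$ divided by the corresponding lower weight, because $\ell_*\le C_*\varepsilon_0$. This yields \eqref{eq:Hbound}--\eqref{eq:Dbound}.

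For the special case $\varepsilon_0=0$, the sandwich $0\le\ell_*\le C_*\cdot 0$ forces $\ell_*=0$. The three nonnegative quantities are then bounded above by sequences tending to $0$, so $H_{k_j},G_{k_j},\Delta^+_{k_j}\to 0$ by squeezing.

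The argument is essentially bookkeeping. The only point needing care is the validity of invoking Lemma~\ref{lem:ergodic}, namely that its constant $C_*=\bar\alpha\bar\eta/\mu$ is finite. This requires $\bar\eta<\infty$, which comes from the uniform direction bound of Lemma~\ref{lem:dir} applied on the compact ball $\overline{B}(0,B+1)$ under (A8), as noted before Lemma~\ref{lem:ineq-mod}. With that granted, nothing else is delicate. I would note explicitly that the subsequence is chosen via $\Phi_k$ alone, so that all three bounds hold along one common subsequence simultaneously.
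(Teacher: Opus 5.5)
Your proposal is correct and follows essentially the same route as the paper. Like the paper, you extract a subsequence along which $\Phi_{k_j}\to\ell_*\le C_*\varepsilon_0$ using the ergodic bound of Lemma~\ref{lem:ergodic}, then use nonnegativity of each summand of $\Phi_k$ together with the lower weight bounds in (A6) to get the three component estimates, and handle $\varepsilon_0=0$ by squeezing. Your explicit choice $\ell_*=\liminf_k\Phi_k$ and your note that finiteness of $C_*$ rests on the direction bound under (A8) make precise two points the paper leaves implicit.
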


\begin{theorem}[Approximate convergence]\label{thm:approx}
Under \textup{\eqref{A1}--\eqref{A6}, (A7'), (A8)}, let $C_*=\bar\alpha\bar\eta/\mu$
and let $\{k_j\}$, $\ell_*$ be as in
Proposition~\ref{prop:simult-approx}.  Every cluster point~$\hat x$
of $\{x^{k_j}\}$ satisfies:
\begin{enumerate}[label=\textup{(\roman*)},leftmargin=*,itemsep=4pt]
\item\label{thm:feas}
  \textup{(Approximate feasibility in $C$.)}
  \[
    H(\hat x)
    :=\tfrac12\dist^2(\hat x,C)
    \;\le\;\frac{\ell_*}{\underline\beta}
    \;\le\;\frac{C_*\varepsilon_0}{\underline\beta},
    \qquad\text{i.e., }
    \dist(\hat x,C)
    \;\le\;\sqrt{\frac{2C_*\varepsilon_0}{\underline\beta}}.
  \]
\item\label{thm:sfeas}
  \textup{(Approximate split feasibility.)}
  \[
    G(\hat x)
    :=\tfrac12\dist^2(\calF(\hat x),Q^+)
    \;\le\;\frac{\ell_*}{\underline\gamma}
    \;\le\;\frac{C_*\varepsilon_0}{\underline\gamma},
    \qquad\text{i.e., }
    \dist(\calF(\hat x),Q^+)
    \;\le\;\sqrt{\frac{2C_*\varepsilon_0}{\underline\gamma}}.
  \]
\item\label{thm:opt}
  \textup{(Approximate optimality.)}
  \[
    \varphi(\hat x)
    \;\le\;\varphi_{\lb}+\frac{\ell_*}{\underline\alpha}
    \;\le\;\varphi^*+\frac{C_*\varepsilon_0}{\underline\alpha}.
  \]
\end{enumerate}
When $\varepsilon_0=0$: $\ell_*=0$, all three bounds vanish, and
$\hat x\in\Omega$ \textup{(exact optimality and feasibility)}.~\textit{(Proof in Appendix~\ref{app:approx-conv}.)}
\end{theorem}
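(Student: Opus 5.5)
The plan is to pass the subsequence bounds of Proposition~\ref{prop:simult-approx} to a cluster point using continuity of $H$, $G$ and $\varphi$. Fix a cluster point $\hat x$ of $\{x^{k_j}\}$; one exists because $\{x^k\}$ is bounded by (A8). Pass to a further subsequence, still denoted $\{k_j\}$, with $x^{k_j}\to\hat x$. Along it we still have $\Phi_{k_j}\to\ell_*$, and the three inequalities \eqref{eq:Hbound}--\eqref{eq:Dbound} hold termwise.

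First I will record continuity. $H$ is continuous, indeed $C^1$, by Lemma~\ref{lem:H}. $G=\tfrac12\dist^2(\calF(\cdot),Q^+)$ is the composition of the continuous map $\calF$ (by \eqref{A1}) with the $1$-Lipschitz function $\dist(\cdot,Q^+)$, squared. Here $Q^+$ is closed convex by Lemma~\ref{lem:Qplus}, so $G$ is continuous. $\varphi$ is a finite maximum of continuous functions, and the $z_i^*$ are finite by \eqref{A2}, so $\varphi$ is continuous. Hence $\Delta^+(x):=\max(\varphi(x)-\varphi_{\lb},0)$ is continuous too.

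Next I take limits in the termwise bounds. From $H_{k_j}\le\Phi_{k_j}/\underline\beta$ we get $H(\hat x)=\lim_j H_{k_j}\le\ell_*/\underline\beta\le C_*\varepsilon_0/\underline\beta$. Taking square roots of $\tfrac12\dist^2\le\cdot$ gives the distance form in (i). Item (ii) follows identically with $\underline\gamma$. For (iii), $\Delta^+(\hat x)\le\ell_*/\underline\alpha$ gives $\varphi(\hat x)-\varphi_{\lb}\le\ell_*/\underline\alpha$. Combined with $\varphi_{\lb}\le\varphi^*$ from \eqref{eq:lb-valid}, this yields the second inequality. The termwise bounds themselves come straight from $\Phi_k\ge\underline\beta H_k$, $\Phi_k\ge\underline\gamma G_k$ and $\Phi_k\ge\underline\alpha\Delta_k^+$, which use \eqref{A6} and the nonnegativity of each summand. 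So Proposition~\ref{prop:simult-approx} only has to supply $\ell_*\le C_*\varepsilon_0$, which it does via Lemma~\ref{lem:ergodic}.

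For the case $\varepsilon_0=0$: then $\ell_*=0$, so $H(\hat x)=G(\hat x)=0$, that is $\hat x\in\mathcal S$. Also $\varphi(\hat x)\le\varphi_{\lb}\le\varphi^*$. Since $\hat x$ is feasible, $\varphi(\hat x)\ge\varphi^*$, so equality holds and $\hat x\in\Omega$.

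The only delicate point is the existence and limit value of the subsequence, i.e.\ that $\ell_*\le C_*\varepsilon_0$. That is already packaged in Lemma~\ref{lem:ergodic} and Proposition~\ref{prop:simult-approx}, so the theorem itself reduces to the continuity argument above. I would double-check one detail: extracting a sub-subsequence preserves $\Phi_{k_j}\to\ell_*$, which holds trivially.
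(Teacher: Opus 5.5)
Your proposal is correct and follows the paper's proof essentially step for step: you extract a sub-subsequence converging to $\hat x$ via (A8), pass the termwise bounds of Proposition~\ref{prop:simult-approx} to the limit using continuity of $H$, $G$ and $\varphi$, and for $\varepsilon_0=0$ combine $\varphi_{\lb}\le\varphi^*$ with feasibility of $\hat x$. The differences are cosmetic: you take the limit in the continuous function $\Delta^+$ rather than in $\Delta_{k_j}\le\Delta_{k_j}^+$, and you read off $\hat x\in\mathcal S$ directly from its definition ($H=G=0$) rather than invoking closedness of $C$ and $Q^+$.
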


\begin{remark}[Summary of convergence conclusions]\label{rem:summary-approx}
Table~\ref{tab:approx-results} contrasts the conclusions for $\varepsilon_0=0$
and $\varepsilon_0>0$.

\begin{table}[h]
\centering
\renewcommand{\arraystretch}{1.3}
\begin{tabular}{lll}
\hline
Property & $\varepsilon_0=0$ & $\varepsilon_0>0$ \\
\hline
$\dist(\hat x,C)$ & $=0$ (exact) & $\le\sqrt{2C_*\varepsilon_0/\underline\beta}$ \\
$\dist(\calF(\hat x),Q^+)$ & $=0$ (exact) & $\le\sqrt{2C_*\varepsilon_0/\underline\gamma}$ \\
$\varphi(\hat x)-\varphi^*$ & $=0$ (exact) & $\le C_*\varepsilon_0/\underline\alpha$ \\
Convergence type & Full sequence (Thm.~\ref{thm:main}) & Subsequential \\
Boundedness (A8) & Consequence of analysis & Must be assumed \\
\hline
\end{tabular}
\caption{Convergence conclusions as a function of~$\varepsilon_0$.}
\label{tab:approx-results}
\end{table}
\end{remark}

\begin{remark}[Sharpness and parameter sensitivity]\label{rem:sharp}
The constant $C_*=\bar\alpha\bar\eta/\mu$ governs the approximation
quality in all three bounds.  It grows with the upper bound~$\bar\alpha$
on the optimality weight sequence and with $\bar\eta=\max(\mu,\bar M)$,
and decreases as the normalizer floor~$\mu$ increases.  Since the
feasibility errors scale as $O(\sqrt{\varepsilon_0})$ and the
optimality error scales as $O(\varepsilon_0)$, reducing the bound
gap~$\sigma$ (e.g., by tightening the lower-bound computation) yields
a quadratically faster improvement in feasibility quality and a linear
improvement in objective quality.
\end{remark}

\begin{remark}[Recovery of exact convergence at $\varepsilon_0=0$]
\label{rem:recovery}
When $\varepsilon_0=0$ (i.e., the zero-gap condition~\eqref{A7} holds),
all drift contributions vanish, Assumption~(A8) is
superfluous, and the ergodic bound tightens to $\liminf_k\Phi_k=0$.
Theorem~\ref{thm:approx} then yields $\hat x\in\Omega$ exactly.
Invoking the full-sequence convergence argument
(Proposition~\ref{prop:fejer} and Theorem~\ref{thm:main}) further shows
that the \emph{entire} sequence $\{x^k\}$ converges to~$\hat x$.
\end{remark}

\section{Controllable Pareto front learning for multi-task learning}
\label{sec:cpfl}

We now translate the ABP penalty framework into a practical
training procedure for hypernetwork-based CPFL in the multi-task
learning (MTL) setting.
Figure~\ref{fig:pipeline} illustrates the overall pipeline.

%
\begin{figure}
  \centering
  \includegraphics[width=\textwidth]{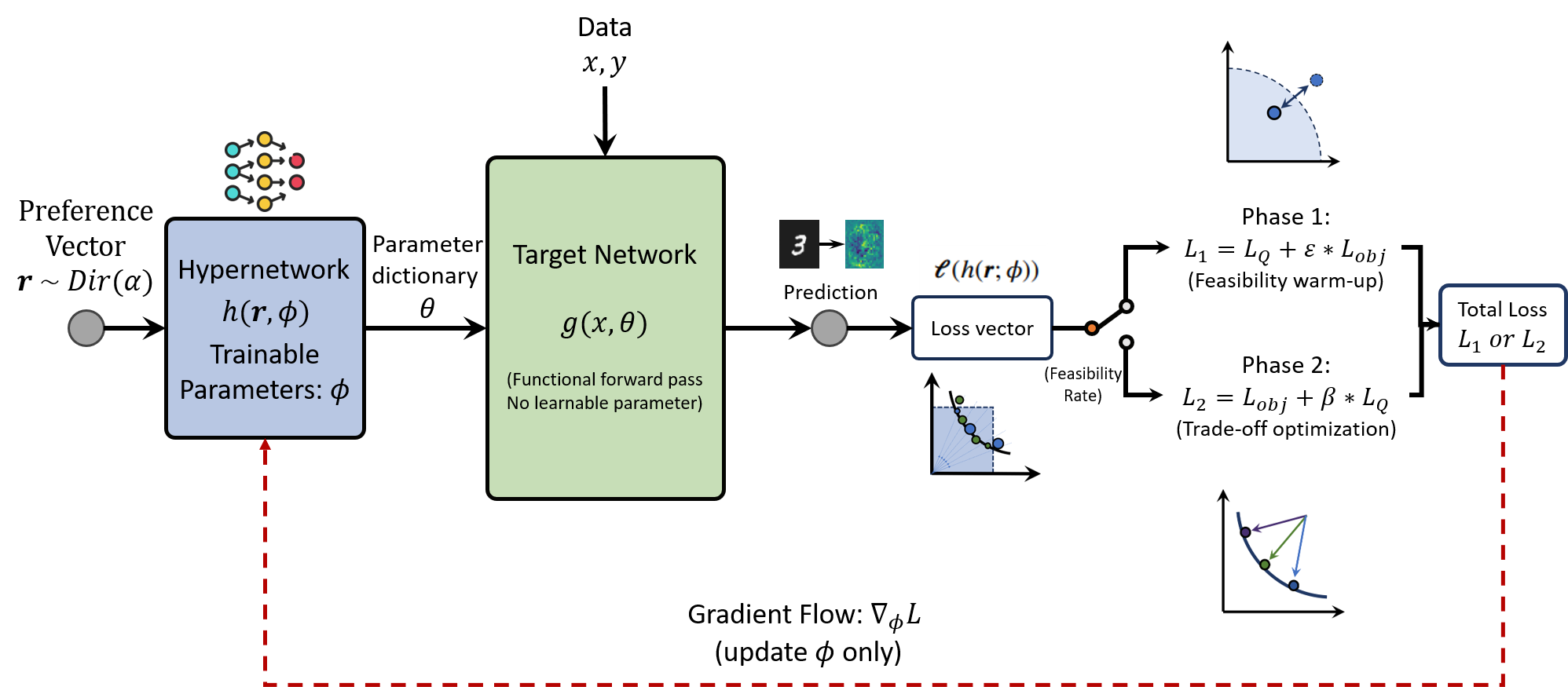}
  \caption{Training pipeline for hypernetwork-based CPFL
  under split feasibility conditions.}
  \label{fig:pipeline}
\end{figure}

\subsection{Multi-task learning as constrained multi-objective optimization}
\label{subsec:mtl-moo}

Consider $m$ learning tasks with losses
$\ell_1(\theta),\ldots,\ell_m(\theta)$, where $\theta\in C$ denotes
the parameters of a shared \emph{target network} and
$C\subseteq\R^{n_\theta}$ is the admissible parameter space.
The MTL problem is naturally a multi-objective program:
\[
  \min_{\theta\in C}\;\bigl(\ell_1(\theta),\ldots,\ell_m(\theta)\bigr).
\]
In the CPFL paradigm, a \emph{hypernetwork}
$h(\bm{r};\phi)\colon\R^m\to\R^{n_\theta}$ takes a preference
vector $\bm{r}\in\Delta^{m-1}$ (the probability simplex) as input
and produces the target network parameters
$\theta_{\bm{r}}=h(\bm{r};\phi)$.
A single set of hypernetwork parameters~$\phi$ thus parametrizes
the entire Pareto front.

Combining CPFL with the split feasibility framework
of~\citep{tuan2024}, the training problem is:
\begin{equation}\label{eq:cpfl-loss}
  \phi^*=\argmin_\phi\;\mathbb{E}_{\bm{r}\sim\mathrm{Dir}(\alpha)}
  \Bigl[\max_{i=1,\ldots,m}\bigl\{
  r_i\bigl(\ell_i(h(\bm{r};\phi))-a_i\bigr)\bigr\}\Bigr]
  \quad\text{s.t.}\quad
  h(\bm{r};\phi)\in C,\;\;
  \bm{\ell}(h(\bm{r};\phi))\in Q,
\end{equation}
where $\bm{\ell}=(\ell_1,\ldots,\ell_m)$,
$a_i$ is the ideal loss for task~$i$, and $Q$ is the
decision-maker's preferred region in the loss space.
In the notation of~\eqref{eq:BSSP}, the variable is
$x:=h(\bm{r};\phi)\in\R^{n_\theta}$,
the objective mapping is $\calF:=\bm{\ell}$,
the source constraint is $C\subseteq\R^{n_\theta}$, and the
target constraint is $Q\subset\R^m$.
In practice, $C=\R^{n_\theta}$ (unconstrained weight space), so the
set-feasibility penalty
$H(\theta)=\tfrac{1}{2}\dist^2(\theta,C)$ vanishes identically;
we retain~$C$ in the formulation to preserve the structural
correspondence with the ABP algorithm.

Different MTL problems use different target network architectures
(see Section~\ref{sec:exp-mtl}); the hypernetwork architectures
described below are agnostic to the choice of target network.

\begin{table}[t]
\centering
\caption{Structural correspondence between the ABP algorithm (Algorithm~\ref{alg:main}) and
the hypernetwork training loss. The penalty terms and their roles
map directly between the two settings; the weight schedules are
discussed in Section~\ref{subsec:twophase}.}
\label{tab:correspondence}
\begin{tabular*}{\tblwidth}{@{\extracolsep{\fill}}lll@{}}
\toprule
\textbf{ABP algorithm} & \textbf{Hypernetwork training} & \textbf{Role} \\
\midrule
$x\in\R^n$ & $\theta=h(\bm{r};\phi)\in\R^{n_\theta}$ & Decision variable \\
$C\subset\R^n$ & $C\subseteq\R^{n_\theta}$ (weight space) & Source constraint \\
$\calF(x)$ & $\bm{\ell}(h(\bm{r};\phi))$ & Objective vector \\
$s(\calF(x),\bm{r})$ & $\calL_{\mathrm{obj}}$ & Chebyshev scalarization \\
$G(x)=\tfrac{1}{2}\dist^2(\calF(x),Q^+)$ & $\calL_Q$ & Image-feasibility penalty \\
$H(x)=\tfrac{1}{2}\dist^2(x,C)$ & $\tfrac{1}{2}\dist^2(\theta,C)$ & Set-feasibility penalty \\
\bottomrule
\end{tabular*}
\end{table}

\subsection{Hypernetwork architectures}\label{subsec:hypernetworks}

Both hypernetwork architectures generate the full parameter dictionary
$\bm{\theta}=\{\theta_j\}$ of a purely functional \emph{target
network}~$g(\cdot;\bm{\theta})$ that holds no learnable parameters of
its own; all weights are injected at run-time so that gradients flow
from the task losses back to the hypernetwork parameters~$\phi$.
For the MTL experiments, the target network is a Multi-LeNet
CNN~\citep{sener2018,navon2021} (Fig.~\ref{fig:lenet-target}),
comprising $|\bm{\theta}|=31{,}910$ parameters
(two conv--pool blocks, one hidden layer, two task-specific heads).
The hypernetwork size is $|\phi|\approx 8.66$\,M for both
architectures, following~\citep{tuan2024}.

\begin{figure}
  \centering
  \includegraphics[width=0.82\textwidth]{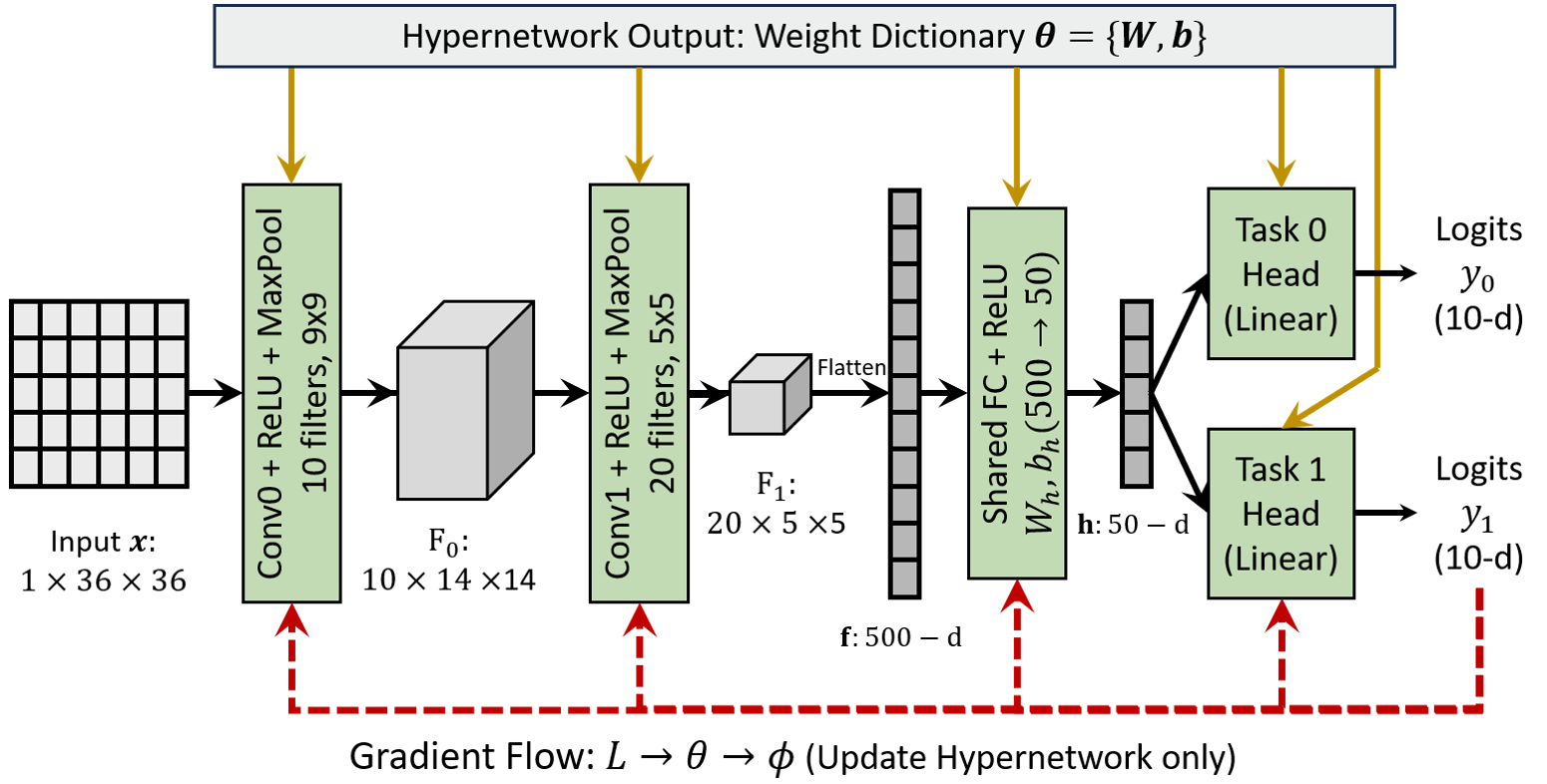}
  \rule{0.82\linewidth}{0.4pt}\\[4pt]
  \caption{Multi-LeNet target network for Multi-MNIST, Multi-Fashion,
  and Fashion\texttt{+}MNIST.
  All weights $\bm{\theta}$ are generated by the hypernetwork;
  no learnable parameters reside in the target network itself.
  The two task heads produce logits for the left-image task
  (Task~0) and right-image task (Task~1), respectively.}
  \label{fig:lenet-target}
\end{figure}

\subsubsection{Hyper-MLP}\label{subsec:hyper-mlp}

The MLP-based hypernetwork~\citep{navon2021} processes the preference
vector as a single input and maps it through a shared trunk to
per-parameter linear heads.
Given $\bm{r}\in\R^m$, the shared representation is:
\begin{equation}\label{eq:hyper-mlp}
  \bm{h}_{\mathrm{MLP}}(\bm{r})
  =\mathrm{ReLU}\!\bigl(\bm{W}_3\,\mathrm{ReLU}(\bm{W}_2\,
  \mathrm{ReLU}(\bm{W}_1\bm{r}+\bm{b}_1)+\bm{b}_2)+\bm{b}_3\bigr)
  \in\R^{d},
\end{equation}
where $d=256$ is the hidden dimension and
$\bm{W}_1\in\R^{d\times m}$,
$\bm{W}_2,\bm{W}_3\in\R^{d\times d}$
are learnable weight matrices with corresponding biases.
Each target parameter~$\theta_j$ is produced by a dedicated linear head:
\begin{equation}\label{eq:mlp-heads}
  \theta_j=\bm{A}_j\,\bm{h}_{\mathrm{MLP}}(\bm{r})+\bm{c}_j,
  \qquad
  \bm{A}_j\in\R^{n_j\times d},\;\bm{c}_j\in\R^{n_j}.
\end{equation}

\begin{figure}
  \centering
  \includegraphics[width=0.55\textwidth]{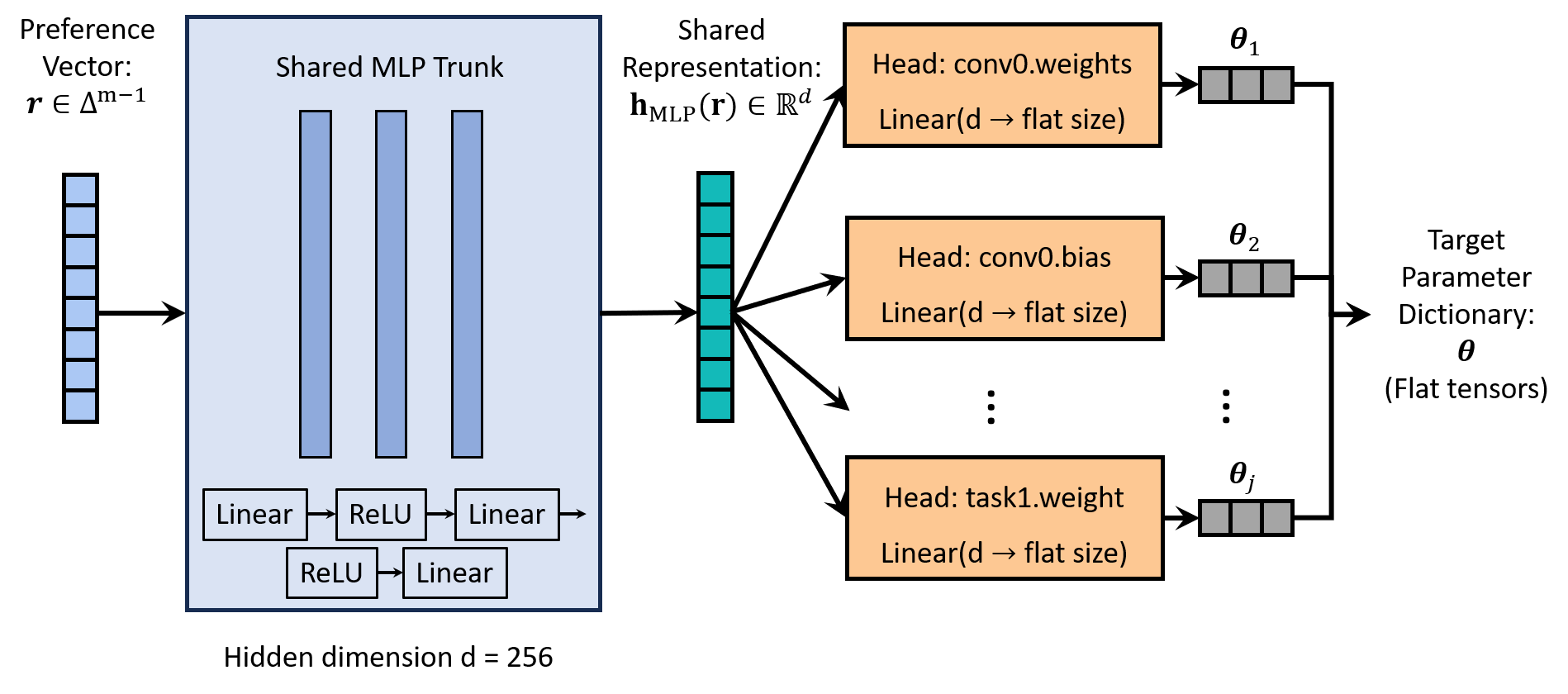}
  \caption{Hyper-MLP architecture.
  A three-layer shared MLP trunk maps $\bm{r}$ to
  $\bm{h}_{\mathrm{MLP}}(\bm{r})\in\R^d$;
  separate linear heads project this representation onto each
  target parameter tensor~$\theta_j$.}
  \label{fig:hyper-mlp}
\end{figure}

\subsubsection{HyperTrans}\label{subsec:hyper-trans}

The key limitation of Hyper-MLP is that $\bm{r}$ is processed as a
monolithic input, obscuring per-objective contributions.
This matters for the BSSP penalty structure because the image
feasibility residual~$\rho^k$ has a per-objective decomposition.
The Transformer-based hypernetwork~\citep{tuan2024} addresses this by
treating each component $r_i$ as a separate \emph{token}, enabling
explicit pairwise interaction through self-attention---a design
motivated by the universal approximation properties of
Transformers~\citep{yun2019}.

\textit{Per-component embedding.}
Each $r_i\in\R$ is mapped to a $d$-dimensional embedding:
\begin{equation}\label{eq:embedding}
  \bm{e}_i=\mathrm{act}(\bm{U}_i\,r_i+\bm{d}_i)\in\R^d,
  \qquad i=1,\ldots,m,
\end{equation}
where $\bm{U}_i\in\R^{d\times 1}$, $\bm{d}_i\in\R^d$.
The embeddings are stacked into
$\bm{E}=[\bm{e}_1,\ldots,\bm{e}_m]^T\in\R^{m\times d}$.

\textit{Transformer block.}
A single Transformer block with multi-head self-attention (MHSA)
and a feed-forward network (FFN), each with residual connections,
processes the sequence:
\begin{align}
  \bm{E}' &= \bm{E}+\mathrm{MHSA}(\bm{E},\bm{E},\bm{E}),
  \label{eq:mhsa}\\
  \bm{E}''&= \bm{E}'+\mathrm{FFN}(\bm{E}').
  \label{eq:ffn}
\end{align}
The attention scores capture the pairwise trade-off structure,
providing HyperTrans with an inductive bias for modelling the
coupling between objectives---precisely what the image-feasibility
direction $v^k=J_\calF(x^k)^T\rho^k$ aggregates in the ABP algorithm.

\textit{Aggregation.}
The processed tokens are aggregated by mean-pooling:
\begin{equation}\label{eq:meanpool}
  \bm{h}_{\mathrm{Trans}}(\bm{r})
  =\frac{1}{m}\sum_{i=1}^m\bm{E}''_i\in\R^d,
\end{equation}
followed by the same per-parameter linear heads as
in~\eqref{eq:mlp-heads}.

\begin{figure}
  \centering
  \includegraphics[width=0.72\textwidth]{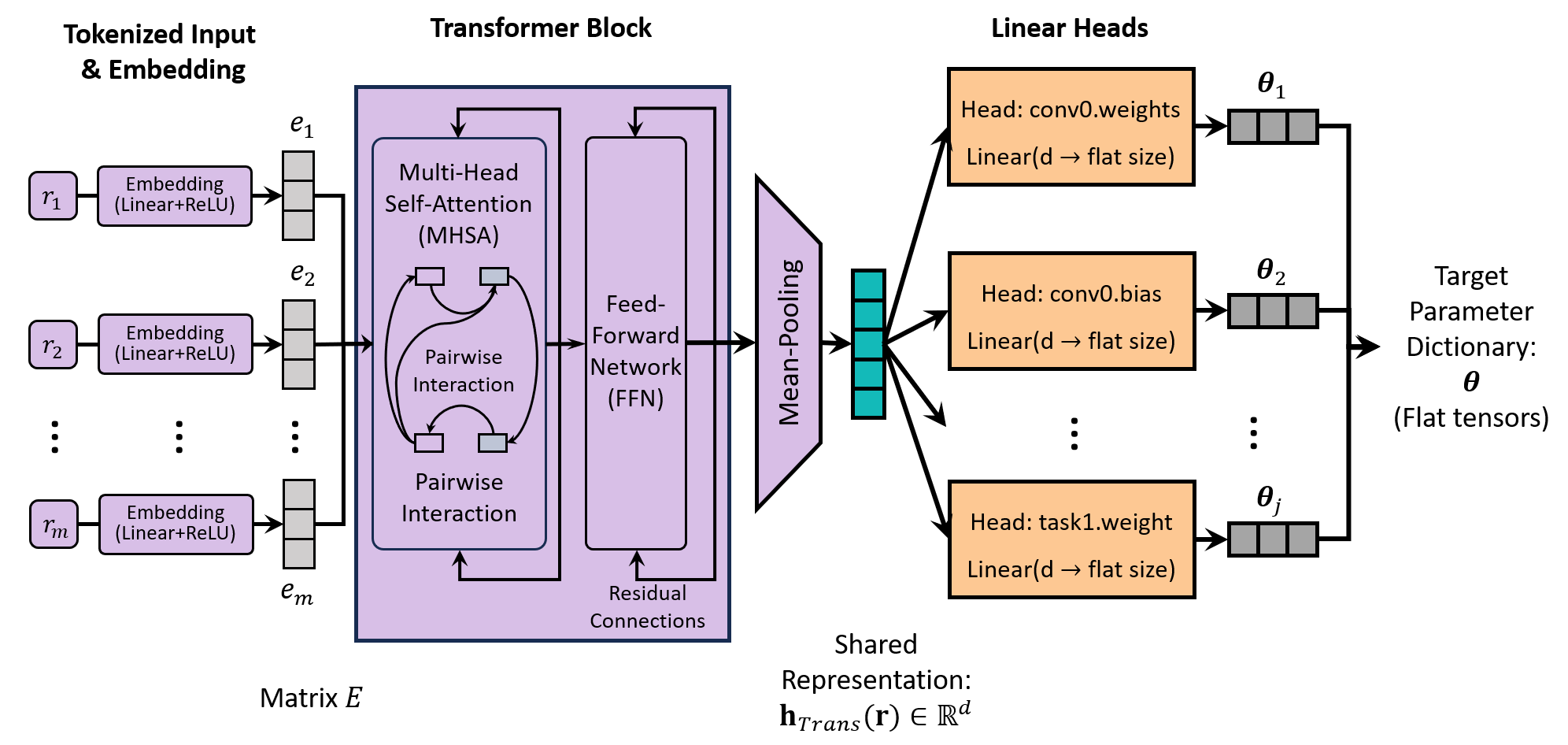}
  \caption{HyperTrans architecture: each $r_i$ is embedded into a
  $d$-dimensional token; a single Transformer block captures
  pairwise interactions; mean-pooling aggregates the tokens into
  a shared state projected via linear heads to produce~$\bm{\theta}$.}
  \label{fig:hyper-trans-arch}
\end{figure}

Table~\ref{tab:correspondence} makes the structural correspondence
between the ABP algorithm and the hypernetwork training loss explicit.

\subsection{Two-phase feasibility-first training}
\label{subsec:twophase}

\paragraph{Scope of the training strategy.}
Theorem~\ref{thm:main} guarantees full-sequence convergence of
the ABP algorithm to an optimal BSSP solution under
Assumption~\ref{ass:all}, which includes convexity of each $f_i$,
Robbins--Monro step-sizes, and bounded penalty weights. The hypernetwork
training described below operates outside this regime: objectives are
non-convex, gradients are stochastic mini-batches, and the phase
transition introduces changes in the penalty weights not covered by the
assumptions of Theorem~\ref{thm:main}. Accordingly, the two-phase strategy
should be understood as a \emph{convergence-theory-inspired heuristic}:
it embodies the feasibility-first principle derived from
the ABP algorithm, but its convergence properties in the neural
network setting remain an open question (see Section~\ref{sec:conclusion}).

The penalty structure of the ABP algorithm---where the
feasibility weights $\beta_k,\gamma_k$ are non-decreasing while the
optimality weight~$\alpha_k$ is non-increasing---suggests a natural
analogue for hypernetwork training: first drive the generated solutions
into the feasible region~$Q$ (Phase~1), then optimize the
Chebyshev trade-off within~$Q$ (Phase~2).

\textit{Phase~1 (feasibility warm-up).}
The training loss emphasizes the constraint penalty:
\begin{equation}\label{eq:phase1}
  \calL^{(1)}=\calL_Q+\varepsilon\,\calL_{\mathrm{obj}},
\end{equation}
where $\calL_{\mathrm{obj}}
=\max_i\{r_i(\ell_i(h(\bm{r};\phi))-a_i)\}$
is the Chebyshev scalarization loss and $\calL_Q$ penalizes the
squared violation of the constraint region~$Q$.
The dominant $\calL_Q$ term drives the hypernetwork outputs
into the feasible region, while the small coefficient $\varepsilon$
maintains diversity across preference vectors, preventing collapse to
a single feasible point.
This phase corresponds to the early iterations of the ABP algorithm,
where the large $\gamma_k$ on the image-feasibility direction~$v^k$
pulls the iterates towards~$Q^+$.

\textit{Phase~2 (trade-off optimization).}
Once feasibility is sufficiently established, the loss reverses the emphasis:
\begin{equation}\label{eq:phase2}
  \calL^{(2)}=\calL_{\mathrm{obj}}+\beta\,\calL_Q,
\end{equation}
where $\beta>0$ is an adaptive penalty coefficient.
This creates a dynamic equilibrium: when solutions drift outside~$Q$,
$\beta$ is increased to restore feasibility; when feasibility is
comfortable, $\beta$ is relaxed to give the Chebyshev objective
more freedom.
This mirrors the late iterations of the ABP algorithm,
where the optimality direction~$w^k$ (weighted by $\alpha_k$)
becomes the primary driver while the penalty terms prevent excessive
constraint violation.

\textit{Gradient flow.}
For each sampled preference $\bm{r}$, the gradient path is:
$\bm{r}\xrightarrow{h(\cdot;\phi)}\theta_{\bm{r}}
\xrightarrow{\text{target net}}\hat{y}
\xrightarrow{\text{losses}}\bm{\ell}
\xrightarrow{\text{scalarization}}\calL
\xrightarrow{\nabla_\phi}\phi$.
Only the hypernetwork parameters~$\phi$ are updated; the target
network has no learnable parameters of its own.
In both phases, the set-feasibility penalty $H\equiv 0$ is absent
from the training loss, as noted in
Section~\ref{subsec:mtl-moo}.

\subsection{Evaluation metrics}\label{subsec:metrics}

We employ three complementary metrics; the first two are standard,
while the third is introduced in this work.

\textit{Mean Euclidean Distance (MED)}~\citep{tuan2024}.
When ground-truth constrained Pareto solutions are available, the
MED measures approximation accuracy:
\begin{equation}\label{eq:MED}
  \mathrm{MED}(\mathcal{F}^*,\hat{\mathcal{F}})
  =\frac{1}{|\mathcal{F}^*|}
  \sum_{i=1}^{|\mathcal{F}^*|}
  \bigl\|f_i^*-\hat{f}_i\bigr\|_2.
\end{equation}
A lower MED indicates that predicted solutions lie closer to the
true constrained Pareto front.

\textit{Hypervolume (HV)}~\citep{zitzler1999}.
Given a finite set $\mathcal{P}=\{p_j\in\R^m\}$ of non-dominated
points and a reference point $\bar{y}\in\R^m_+$, the hypervolume is
\begin{equation}\label{eq:HV}
  \mathrm{HV}(\mathcal{P})
  =\mathrm{VOL}\!\Bigl(
  \bigcup_{p\in\mathcal{P},\,p\prec\bar{y}}
  \prod_{i=1}^m [p_i,\bar{y}_i]
  \Bigr).
\end{equation}
A higher HV indicates a better-spread and better-converged Pareto
front approximation.

\textit{Expected Feasible Hypervolume (EFHV).}
In constrained multi-objective settings, reporting HV and feasibility
rate separately makes it difficult to rank methods that trade off one
for the other.
Feasibility-weighted hypervolume concepts have appeared in
constrained Bayesian optimization, where the Constrained Expected
Hypervolume Improvement (CEHVI) multiplies the Expected Hypervolume
Improvement by the probability of
feasibility~\citep{emmerich2006,gelbart2014}.
However, CEHVI is a \emph{point-wise acquisition function}
designed to guide sequential search, not a set-level evaluation
metric.
In evolutionary constrained multi-objective optimization, the
standard practice is to compute HV on the feasible subset
alone~\citep{zitzler1999}, which ignores the feasibility rate entirely.
Existing CPFL methods~\citep{navon2021,tuan2024,lin2022}
report only unconstrained HV or MED and do not provide
a combined quality--feasibility metric.

To address this gap, we define the \emph{Expected Feasible Hypervolume}
as a set-level evaluation metric that multiplicatively combines both
dimensions.
Let $\mathcal{P}=\{p_j\}_{j=1}^K$ be the solutions produced for $K$
preference rays, and let $\pi=|\{j:p_j\in Q\}|/K$ denote the
ray-level feasibility rate.
The EFHV is defined as
\begin{equation}\label{eq:EFHV}
  \mathrm{EFHV}(\mathcal{P})
  = \pi \cdot \mathrm{HV}\!\bigl(\mathcal{P}_{\mathrm{feas}}\bigr),
\end{equation}
where $\mathcal{P}_{\mathrm{feas}}=\{p_j\in Q\}$ is the feasible
subset.
The multiplicative form ensures that a method must excel on
\emph{both} dimensions to achieve a high score: even if the feasible
subset has high HV, a low feasibility rate~$\pi$ suppresses the
overall EFHV proportionally.

\section{Internal validation: multi-objective optimization benchmarks}
\label{sec:exp-mop}

\subsection{Benchmark problems}\label{subsec:mop-benchmarks}

We validate the ABP algorithm (Algorithm~\ref{alg:main}) and its
hypernetwork-based adaptation (ABP-HyperTrans) on five benchmark
problems from the literature, summarized in Table~\ref{tab:mop-bench}.
The Pareto fronts are normalized to $[0,1]^m$ following the
convention of~\citep{tuan2024}.
The constraint set~$Q$ is chosen as a sphere
$\overline{B}(c,R):=\{y\in\R^m:\norm{y-c}\le R\}$ in all cases.
Problem names (CVX1--CVX3, ZDT1--ZDT2) and their decision sets
$C$ follow the cited sources~\citep{tuan2024,binh1997,thang2020,zitzler2000};
the prefix ``CVX'' designates problems whose \emph{objective functions}
are convex and does not imply convexity of the decision set~$C$.
The target regions~$Q$ listed in Table~\ref{tab:mop-bench} are introduced
in this paper.

\begin{table}[t]
\centering
\caption{MOP benchmark specifications.
  $n$: number of decision variables; $m$: number of objectives;
  $C$: decision space; $Q$: target region in objective space;
  $\overline{B}(c,R)$ denotes the closed ball of radius~$R$
  centered at~$c$.}
\label{tab:mop-bench}
\begin{tabular*}{\tblwidth}{@{\extracolsep{\fill}}llcccl@{}}
\toprule
Problem & Source & $n$ & $m$ & $Q$ & Type \\
\midrule
CVX1 & \citep{tuan2024}        & 1  & 2 & $\overline{B}((0.4,0.4),\,0.2)$ & Convex \\
CVX2 & \citep{binh1997}         & 2  & 2 & $\overline{B}((0.4,0.4),\,0.2)$ & Convex \\
CVX3 & \citep{thang2020}        & 3  & 3 & $\overline{B}((0.5,0.5,0.5),\,0.2)$ & Convex \\
ZDT1 & \citep{zitzler2000}      & 30 & 2 & $\overline{B}((0.4,0.4),\,0.2)$ & Non-convex \\
ZDT2 & \citep{zitzler2000}      & 30 & 2 & $\overline{B}((0.4,0.5),\,0.4)$ & Non-convex \\
\bottomrule
\end{tabular*}
\end{table}

\subsubsection{Convex benchmarks}

\textbf{CVX1}~\citep{tuan2024}.
A two-objective problem in one variable:
\begin{equation}\label{eq:cvx1}\tag{CVX1}
  \min_{x}\;\bigl\{x,\;(x-1)^2\bigr\}
  \quad\text{s.t.}\quad 0\le x\le 1.
\end{equation}

\textbf{CVX2}~\citep{binh1997}.
A two-objective problem in two variables (Binh--Korn problem):
\begin{equation}\label{eq:cvx2}\tag{CVX2}
  \min_{x_1,x_2}\;\{f_1,\,f_2\}
  \quad\text{s.t.}\quad x_i\in[0,5],\;i=1,2,
\end{equation}
where
\[
  f_1 = \frac{x_1^2+x_2^2}{50},
  \qquad
  f_2 = \frac{(x_1-5)^2+(x_2-5)^2}{50}.
\]

\textbf{CVX3}~\citep{thang2020}.
A three-objective problem on the unit sphere in~$\R^3$.
While the objective functions are convex (as indicated by the prefix),
the spherical decision set~$C$ is non-convex; consequently, the
convergence guarantees of Theorems~\ref{thm:main} and~\ref{thm:approx}
do not apply, and the ABP algorithm is applied to this benchmark
as an effective heuristic.
\begin{equation}\label{eq:cvx3}\tag{CVX3}
  \min_{x_1,x_2,x_3}\;\{f_1,\,f_2,\,f_3\}
  \quad\text{s.t.}\quad
  x_1^2+x_2^2+x_3^2=1,\;
  x_i\in[0,1],\;i=1,2,3,
\end{equation}
where
\begin{align*}
  f_1 &= \frac{x_1^2+x_2^2+x_3^2+x_2-12x_3+12}{14},\\
  f_2 &= \frac{x_1^2+x_2^2+x_3^2+8x_1-44.8x_2+8x_3+44}{57},\\
  f_3 &= \frac{x_1^2+x_2^2+x_3^2-44.8x_1+8x_2+8x_3+43.7}{56}.
\end{align*}

\subsubsection{Non-convex benchmarks}

\textbf{ZDT1}~\citep{zitzler2000}.
A classical bi-objective benchmark with a convex Pareto front:
\begin{equation}\label{eq:zdt1}\tag{ZDT1}
  f_1(\mathbf{x}) = x_1,
  \qquad
  f_2(\mathbf{x}) = g(\mathbf{x})\Bigl[1-\sqrt{f_1(\mathbf{x})/g(\mathbf{x})}\,\Bigr],
\end{equation}
where $g(\mathbf{x})=1+\tfrac{9}{n-1}\sum_{i=2}^n x_i$ and
$0\le x_i\le 1$ for $i=1,\ldots,n$ (with $n=30$).
The analytical Pareto front satisfies
$f_2=1-\sqrt{f_1}$ for $f_1\in[0,1]$.

\textbf{ZDT2}~\citep{zitzler2000}.
A classical bi-objective benchmark with a non-convex Pareto front:
\begin{equation}\label{eq:zdt2}\tag{ZDT2}
  f_1(\mathbf{x}) = x_1,
  \qquad
  f_2(\mathbf{x}) = g(\mathbf{x})\Bigl[1-\bigl(f_1(\mathbf{x})/g(\mathbf{x})\bigr)^2\Bigr],
\end{equation}
where $g(\mathbf{x})$ is as in~\eqref{eq:zdt1} and $n=30$.
The analytical Pareto front satisfies $f_2=1-f_1^2$ for
$f_1\in[0,1]$.

Since all five MOP benchmarks admit known ground-truth constrained
Pareto solutions, the primary validation metric is the MED
(Section~\ref{subsec:metrics}); HV and EFHV are also reported.

\subsection{Experimental setup}\label{subsec:mop-setup}

\textit{Two-phase execution.}
For each benchmark, the algorithm proceeds in two phases:
Phase~1 uses the CQ method~\eqref{eq:CQ} to find an initial
feasible point $x_{\mathrm{feas}}\in \mathcal{S}$;
Phase~2 runs the ABP algorithm from~$x_{\mathrm{feas}}$
for $K$ uniformly spaced preference vectors on the simplex.

\textit{Ground truth.}
For CVX1--2, the Chebyshev-optimal points under the constraint
$\calF(x)\in Q^+$ are computed by an exact solver (SLSQP with
analytical Jacobian).
For CVX3, grid search on the unit sphere combined with $Q^+$
filtering and Chebyshev optimization is used.
For ZDT1 and ZDT2, the analytical Pareto front is sampled at
5000 uniformly spaced points and filtered by $Q^+$; the resulting
constrained front serves as the reference for MED computation.

\paragraph{Step-size schedule.}
All runs of the ABP algorithm use the step-size schedule
$\lambda_k = 1/(k+1)^\nu$ for a problem-dependent exponent
$\nu \in (1/2,\,1]$, which satisfies the Robbins--Monro
condition~\eqref{A5} for any such choice of~$\nu$.

\paragraph{Verification of Assumptions~(A7)--(A8).}
For CVX1 and CVX2 the decision set $C$ is a compact box, so
Assumption~(A8) holds automatically.
The zero-gap condition~\eqref{A7} is verified numerically
for all 50 preference rays via SLSQP; the gap $\sigma$ equals zero
on 39/50 rays (CVX1) and 44/50 rays (CVX2), with $\sigma_{\max}\le 0.081$.
Full details are given in Appendix~\ref{app:sigma-gap}.

\subsection{Results}\label{subsec:mop-results}

The purpose of this section is \emph{internal validation}: we verify
that (1)~the ABP algorithm (Algorithm~\ref{alg:main}) faithfully
approximates the known ground-truth constrained Pareto solutions,
and (2)~the hypernetwork-based adaptation (ABP-HyperTrans) matches
or improves upon the numerical solver while offering orders-of-magnitude
faster inference.
Table~\ref{tab:mop-results} reports MED ($\downarrow$) and
inference time for all five benchmarks at 50 preference rays.

\begin{table}[t]
\centering
\caption{Internal validation on MOP benchmarks (50 preference rays).
  MED ($\downarrow$) measures distance to the ground-truth constrained
  Pareto front.
  ABP Solver reports a single deterministic run;
  ABP-HyperTrans reports mean\,$\pm$\,std over 10 seeds.
  \textbf{Bold}: best MED per problem.}
\label{tab:mop-results}
\begin{tabular*}{\tblwidth}{@{\extracolsep{\fill}}l cc cc l@{}}
\toprule
& \multicolumn{2}{c}{ABP Solver} &
  \multicolumn{2}{c}{ABP-HyperTrans} & \\
\cmidrule(lr){2-3}\cmidrule(lr){4-5}
Problem &
  MED & Time (ms) &
  MED (mean$\pm$std) & Time (ms) & Theory \\
\midrule
CVX1 & 0.005126 & 1451
  & $\mathbf{0.004742} \pm 0.001884$ & 16.4
  & \checkmark \\
CVX2 & 0.002773 & 228
  & $\mathbf{0.001895} \pm 0.000246$ & 16.6
  & \checkmark \\
CVX3 & 0.007237 & 616
  & $\mathbf{0.007156} \pm 0.000614$ & 28.0
  & heuristic \\
ZDT1 & 0.009889 & 1802
  & $\mathbf{0.006334} \pm 0.001973$ & 17.0
  & heuristic \\
ZDT2 & 0.010078 & 12835
  & $\mathbf{0.008518} \pm 0.001063$ & 16.7
  & heuristic \\
\bottomrule
\end{tabular*}
\end{table}

\textbf{Correctness of the ABP algorithm.}
The ABP Solver achieves low MED on all five benchmarks,
confirming that the proposed penalty structure successfully
balances optimality and feasibility.
On CVX1 and CVX2, convergence is guaranteed by
Theorem~\ref{thm:main} on the majority of preference rays
(Appendix~\ref{app:sigma-gap}); on the remaining rays,
Theorem~\ref{thm:approx} provides approximate convergence bounds.
On CVX3 and the non-convex ZDT problems, the algorithm operates as an
effective heuristic (Remark~\ref{rem:nonconvex}).

\textbf{ABP-HyperTrans matches or improves the solver.}
ABP-HyperTrans achieves lower mean MED than the ABP Solver on
all five benchmarks, demonstrating that the hypernetwork-based
adaptation faithfully---and in some cases more accurately---approximates
the ground-truth constrained Pareto front.
The improvement is most pronounced on ZDT1 and ZDT2,
where the self-attention mechanism captures the higher-dimensional
preference-to-solution mapping more effectively.

\textbf{Computational efficiency.}
The inference time of ABP-HyperTrans (16--28\,ms) is two to three
orders of magnitude faster than the ABP Solver
(228--12{,}835\,ms), confirming the practical advantage of
the learned model for real-time preference control.

Figures~\ref{fig:mop-cvx} and~\ref{fig:mop-zdt} display the
approximated constrained Pareto fronts produced by ABP-HyperTrans
on four benchmarks with 50 preference rays.
In each panel, the shaded region represents the extended downward
hull~$Q^+$, gold dots mark the ground-truth constrained solutions,
and green markers show the solutions returned by ABP-HyperTrans.
The MED and HV values annotated in each panel correspond to a
single representative seed; quantitative comparisons across all seeds
should be drawn from Table~\ref{tab:mop-results}.

\begin{figure}
  \centering
  \includegraphics[width=\textwidth]{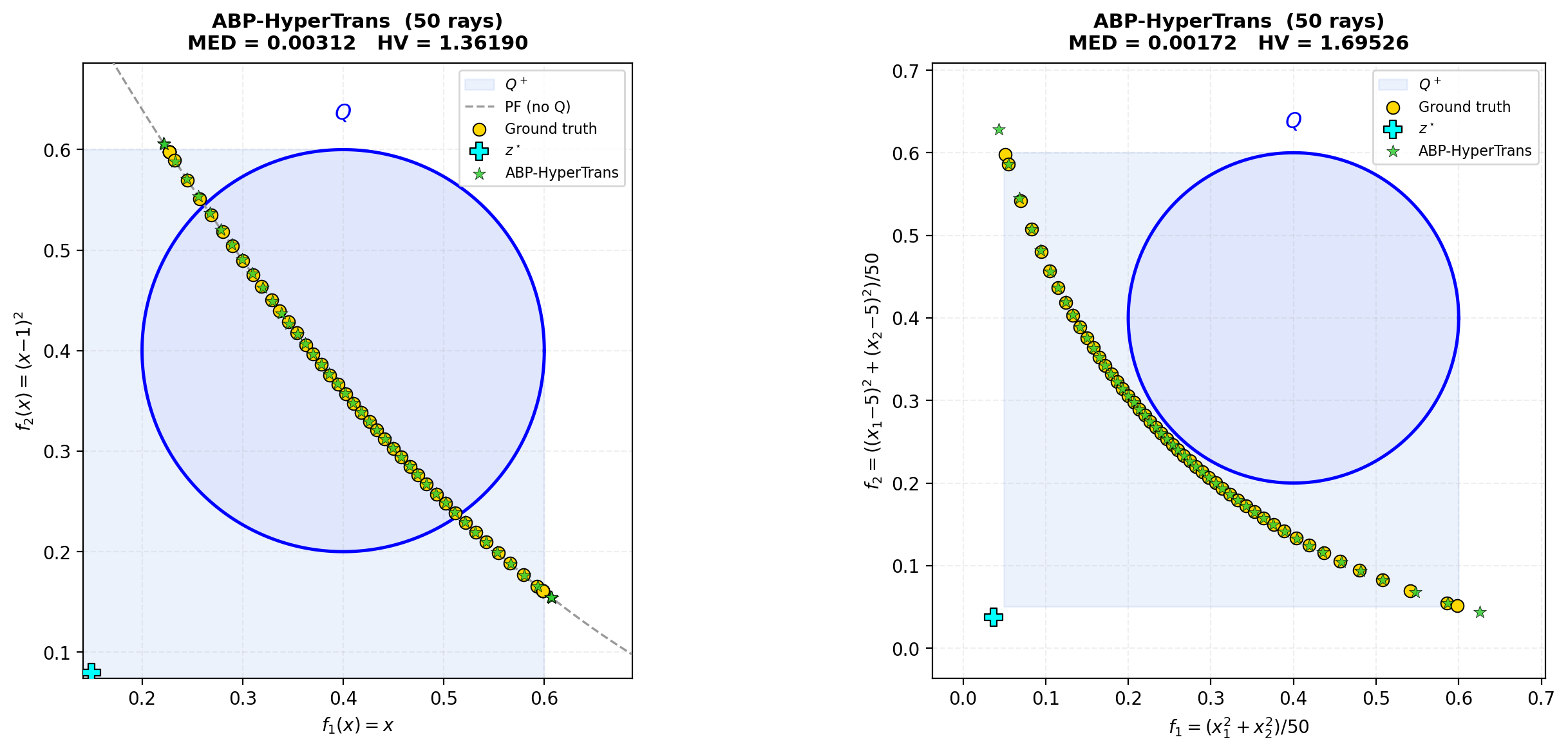}
  \caption{Constrained Pareto front approximation on two convex benchmarks
  with 50 rays.
  Left: CVX1 ($\dim x=1$).
  Right: CVX2 ($\dim x=2$, Binh \& Korn).
  ABP-HyperTrans places its output points tightly along the
  ground-truth constrained front inside~$Q^+$.}
  \label{fig:mop-cvx}
\end{figure}

In Fig.~\ref{fig:mop-cvx}, ABP-HyperTrans achieves high-fidelity
approximation on both convex benchmarks.
On CVX1 (left), the one-dimensional decision variable yields a
clean mapping from preference rays to the constrained front,
and virtually all solutions lie inside~$Q^+$.
On CVX2 (right), the two-variable objective map produces a
well-structured arc and ABP-HyperTrans tracks the ground-truth
with low MED, confirming reliable satisfaction of the spherical
constraint.

\begin{figure}
  \centering
  \includegraphics[width=\textwidth]{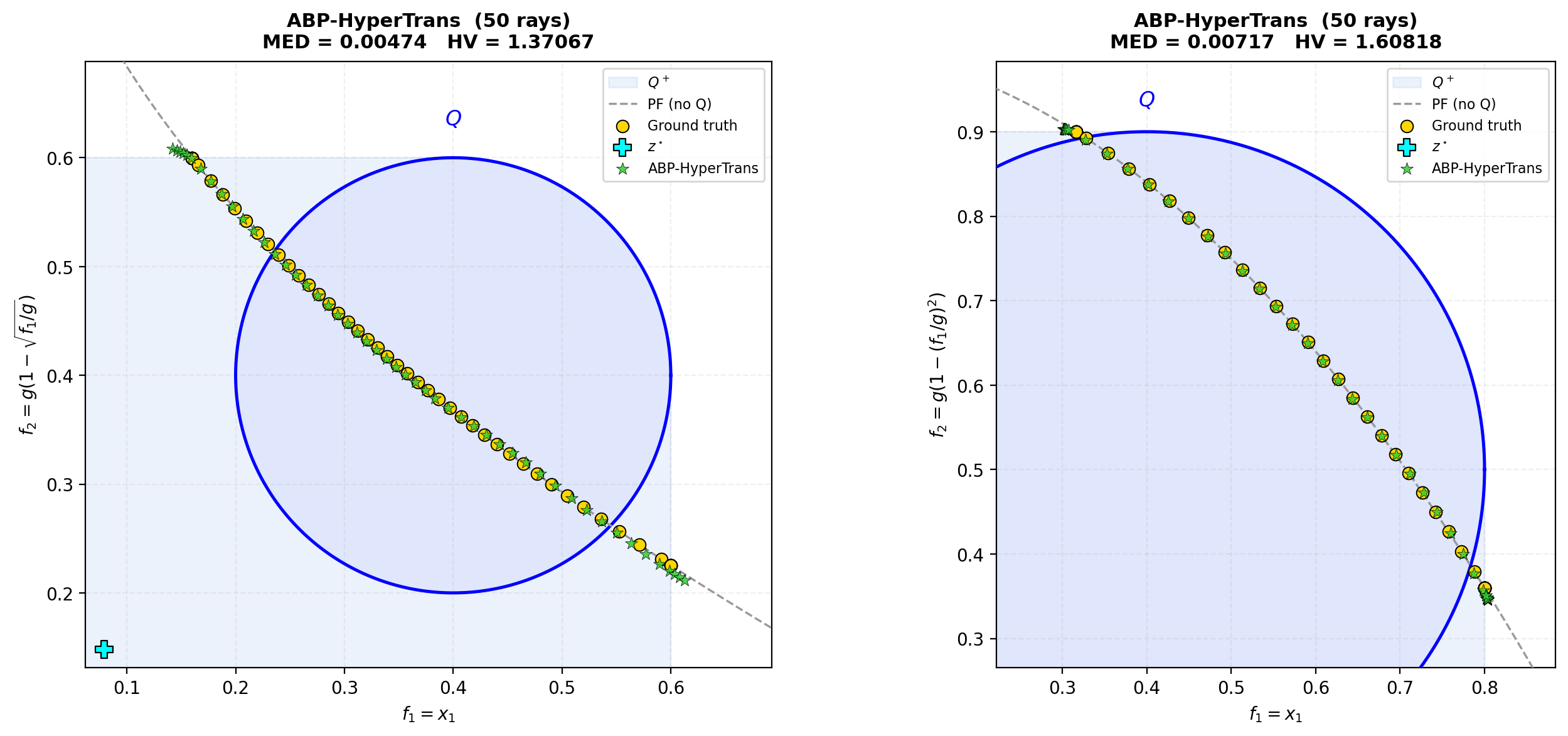}
  \caption{Constrained Pareto front approximation on two non-convex
  ZDT benchmarks with 50 rays.
  Left: ZDT1 ($\dim x=30$, convex front $f_2=1-\sqrt{f_1}$).
  Right: ZDT2 ($\dim x=30$, concave front $f_2=1-f_1^2$).
  Layout as in Fig.~\ref{fig:mop-cvx}.}
  \label{fig:mop-zdt}
\end{figure}

In Fig.~\ref{fig:mop-zdt}, the 30-dimensional ZDT benchmarks
present a substantially harder learning task.
On ZDT1 (left), the convex unconstrained front
($f_2=1-\sqrt{f_1}$) is clipped by~$Q$, and ABP-HyperTrans
successfully concentrates its solutions along the feasible arc.
On ZDT2 (right), the concave shape of the unconstrained front
($f_2=1-f_1^2$) interacts with the spherical constraint in a
non-trivial way; ABP-HyperTrans nonetheless tracks the ground-truth
constrained front with only minor deviations near the boundary
of~$Q$.

We use $K=50$ preference rays for all MOP experiments, following
the convention of~\citep{tuan2024}.
Appendix~\ref{app:ray-ablation} reports an ablation confirming that
approximation quality improves monotonically with ray count.

\section{Experiments: multi-task learning}\label{sec:exp-mtl}

\subsection{Experimental setup}
\label{subsec:mtl-setup}

We evaluate the two-phase training strategy
(Section~\ref{subsec:twophase}) on three image classification
benchmarks from multi-task learning.

\textbf{Datasets.}
These three benchmark datasets~\citep{sabour2017,xiao2017,lin2019}
are commonly used in multi-task learning.
Each image is $1\times 36\times 36$ (single-channel) and contains two
overlapping items (digits or fashion articles); the two tasks are to
classify the top-left item (Task~L) and the bottom-right item
(Task~R), each into 10 classes.
Each dataset has 120\,000 training and 20\,000 test examples;
10\% of the training set is held out for validation.

\textbf{Target network and hypernetwork.}
The target network is the Multi-LeNet CNN described in
Section~\ref{subsec:hypernetworks} (Fig.~\ref{fig:lenet-target}),
with $|\bm{\theta}|=31{,}910$ parameters per preference vector and
hypernetwork size $|\phi|\approx 8.66$\,M ($d=256$),
following~\citep{tuan2024}.

\textbf{Training details.}
We follow the ten-fold cross-validation protocol and general training
configurations of~\citep{tuan2024} for a fair comparison.
All experiments use the Adam optimizer with a learning rate of
$5\times 10^{-4}$ and batch size~256.
For evaluation, we employ 25 evenly spaced preference vectors on
the simplex with a hypervolume reference point of $(2,2)$.
Our proposed two-phase strategy transitions from Phase~1 to Phase~2 when the
validation feasibility rate exceeds 95\% (or after a maximum of 80 epochs),
at which point the adaptive penalty coefficient $\beta$ is introduced.

\textbf{Constraint regions.}
Table~\ref{tab:mtl-Q} specifies the constraint regions~$Q$ used for
each dataset.
The Box constraint enforces per-task loss upper bounds
$Q=\{\bm{\ell}\in\R^2:\bm{\ell}\le\bm{b}\}$;
the Sphere constraint enforces
$Q=\overline{B}(\bm{c},R)=\{\bm{\ell}:\|\bm{\ell}-\bm{c}\|\le R\}$.
These regions were selected so that the constraint boundary intersects
the unconstrained Pareto front near the centre of the achievable
loss range on each dataset.

\begin{table}[t]
\centering
\small
\caption{Constraint regions~$Q$ for the MTL experiments.
  Box: $Q=\{\bm{\ell}:\bm{\ell}\le\bm{b}\}$;
  Sphere: $Q=\overline{B}(\bm{c},R)$.}
\label{tab:mtl-Q}
\setlength{\tabcolsep}{4pt}
\begin{tabular}{@{}lll@{}}
\toprule
Dataset & Box $\bm{b}$ & Sphere $(\bm{c},\,R)$ \\
\midrule
Multi-MNIST   & $(0.30,\,0.40)$ & $\bm{c}=(0.25,\,0.35),\;R=0.05$ \\
Multi-Fashion & $(0.75,\,0.75)$ & $\bm{c}=(0.65,\,0.65),\;R=0.10$ \\
Fashion+MNIST & $(0.60,\,0.60)$ & $\bm{c}=(0.50,\,0.50),\;R=0.10$ \\
\bottomrule
\end{tabular}
\end{table}

\subsection{Results}\label{subsec:mtl-results}

Tables~\ref{tab:mtl-mnist}--\ref{tab:mtl-fashmnist} report
HV (computed on the feasible subset $\mathcal{P}_{\mathrm{feas}}$ only),
$\pi$ (feasibility rate, as defined in Section~\ref{subsec:metrics}),
and EFHV under Box ($Q=[\bm{a},\bm{b}]$) and
Sphere ($Q=\overline{B}(\bm{c},R)$) constraints, averaged over
10 folds.
Since the baselines of \citet{tuan2023,tuan2024} do not enforce
constraints during training, we evaluate their test solutions
post hoc against each constraint region~$Q$ to compute $\pi$
and EFHV under equal conditions.

\begin{table}[t]
\centering
\small
\caption{Multi-MNIST results (mean $\pm$ std, 10-fold CV, HV ref.\ point $(2,2)$).
  HV is computed on the feasible subset $\mathcal{P}_{\mathrm{feas}}$ only.
  \textbf{Bold}: best $\pi$ and EFHV per constraint.}
\label{tab:mtl-mnist}
\setlength{\tabcolsep}{3.5pt}
\begin{tabular}{@{}llccc@{}}
\toprule
Method & Constr.\ & HV ($\uparrow$) & $\pi$\,(\%) ($\uparrow$) & EFHV ($\uparrow$) \\
\midrule
Hyper-MLP~\citep{tuan2023}
  & Box    & $2.974 \pm 0.014$ & $43.0 \pm 10.3$ & $1.282 \pm 0.312$ \\
HyperTrans~\citep{tuan2024}
  & Box    & $2.955 \pm 0.015$ & $36.5 \pm 3.7$  & $1.078 \pm 0.108$ \\
ABP-HyperMLP (ours)
  & Box    & $2.958 \pm 0.013$ & $\mathbf{100.0 \pm 0.0}$  & $\mathbf{2.958 \pm 0.013}$ \\
ABP-HyperTrans (ours)
  & Box    & $2.959 \pm 0.007$ & $92.5 \pm 11.0$ & $2.737 \pm 0.327$ \\
\midrule
Hyper-MLP~\citep{tuan2023}
  & Sphere & $2.974 \pm 0.014$ & $43.0 \pm 10.3$ & $1.282 \pm 0.312$ \\
HyperTrans~\citep{tuan2024}
  & Sphere & $2.955 \pm 0.015$ & $35.5 \pm 2.4$  & $1.049 \pm 0.070$ \\
ABP-HyperMLP (ours)
  & Sphere & $2.968 \pm 0.010$ & $\mathbf{100.0 \pm 0.0}$ & $\mathbf{2.968 \pm 0.010}$ \\
ABP-HyperTrans (ours)
  & Sphere & $2.955 \pm 0.008$ & $99.5 \pm 1.3$  & $2.940 \pm 0.036$ \\
\bottomrule
\end{tabular}
\end{table}

\begin{table}[t]
\centering
\small
\caption{Multi-Fashion results (same protocol as Table~\ref{tab:mtl-mnist}).}
\label{tab:mtl-fashion}
\setlength{\tabcolsep}{3.5pt}
\begin{tabular}{@{}llccc@{}}
\toprule
Method & Constr.\ & HV ($\uparrow$) & $\pi$\,(\%) ($\uparrow$) & EFHV ($\uparrow$) \\
\midrule
Hyper-MLP~\citep{tuan2023}
  & Box    & $2.337 \pm 0.016$ & $42.9 \pm 2.8$  & $1.002 \pm 0.069$ \\
HyperTrans~\citep{tuan2024}
  & Box    & $2.201 \pm 0.028$ & $36.6 \pm 4.5$  & $0.805 \pm 0.110$ \\
ABP-HyperMLP (ours)
  & Box    & $2.316 \pm 0.015$ & $\mathbf{94.9 \pm 8.2}$  & $\mathbf{2.191 \pm 0.203}$ \\
ABP-HyperTrans (ours)
  & Box    & $2.196 \pm 0.016$ & $90.3 \pm 12.3$ & $1.984 \pm 0.280$ \\
\midrule
Hyper-MLP~\citep{tuan2023}
  & Sphere & $2.337 \pm 0.016$ & $42.9 \pm 2.8$  & $1.002 \pm 0.069$ \\
HyperTrans~\citep{tuan2024}
  & Sphere & $2.201 \pm 0.028$ & $36.6 \pm 4.5$  & $0.805 \pm 0.110$ \\
ABP-HyperMLP (ours)
  & Sphere & $2.322 \pm 0.012$ & $\mathbf{100.0 \pm 0.0}$ & $\mathbf{2.322 \pm 0.012}$ \\
ABP-HyperTrans (ours)
  & Sphere & $2.195 \pm 0.020$ & $89.1 \pm 10.4$ & $1.957 \pm 0.230$ \\
\bottomrule
\end{tabular}
\end{table}

\begin{table}[t]
\centering
\small
\caption{Fashion+MNIST results (same protocol as Table~\ref{tab:mtl-mnist}).}
\label{tab:mtl-fashmnist}
\setlength{\tabcolsep}{3.5pt}
\begin{tabular}{@{}llccc@{}}
\toprule
Method & Constr.\ & HV ($\uparrow$) & $\pi$\,(\%) ($\uparrow$) & EFHV ($\uparrow$) \\
\midrule
Hyper-MLP~\citep{tuan2023}
  & Box    & $2.928 \pm 0.012$ & $49.1 \pm 1.8$  & $1.439 \pm 0.056$ \\
HyperTrans~\citep{tuan2024}
  & Box    & $2.847 \pm 0.021$ & $41.7 \pm 2.0$  & $1.188 \pm 0.062$ \\
ABP-HyperMLP (ours)
  & Box    & $2.928 \pm 0.020$ & $87.4 \pm 4.0$  & $2.561 \pm 0.132$ \\
ABP-HyperTrans (ours)
  & Box    & $2.844 \pm 0.033$ & $\mathbf{90.9 \pm 5.5}$  & $\mathbf{2.584 \pm 0.161}$ \\
\midrule
Hyper-MLP~\citep{tuan2023}
  & Sphere & $2.928 \pm 0.012$ & $49.1 \pm 1.8$  & $1.439 \pm 0.056$ \\
HyperTrans~\citep{tuan2024}
  & Sphere & $2.847 \pm 0.021$ & $41.7 \pm 2.0$  & $1.188 \pm 0.062$ \\
ABP-HyperMLP (ours)
  & Sphere & $2.935 \pm 0.021$ & $88.6 \pm 3.3$  & $\mathbf{2.606 \pm 0.111}$ \\
ABP-HyperTrans (ours)
  & Sphere & $2.867 \pm 0.030$ & $\mathbf{91.4 \pm 4.0}$  & $2.550 \pm 0.130$ \\
\bottomrule
\end{tabular}
\end{table}

Several consistent patterns emerge across all three datasets.

\textbf{Constrained vs.\ unconstrained.}
The EFHV metric reveals the decisive advantage of constrained
training.
Although the unconstrained baselines achieve comparable or slightly
higher HV on their feasible subsets, their $\pi$ is consistently
low (36--49\%), resulting in EFHV values roughly 50--60\% below
ours.
For instance, on Multi-MNIST (Box), the baseline Hyper-MLP achieves
HV~$=2.974$ but only $\pi=43.0\%$ (EFHV~$=1.282$), whereas
ABP-HyperMLP reaches $\pi=100.0\%$ with HV~$=2.958$
(EFHV~$=2.958$, a $2.3\times$ improvement).
This pattern is consistent across all 12 configurations
(2~architectures $\times$ 2~constraints $\times$ 3~datasets).

\textbf{Architecture and constraint geometry.}
ABP-HyperMLP consistently outperforms ABP-HyperTrans in HV,
suggesting that the lighter MLP hypernetwork is better suited
to the two-task MTL setting.
The Sphere constraint generally yields higher $\pi$ than Box,
likely due to its smoother projection geometry.

\textbf{Qualitative visualization.}
Figures~\ref{fig:mtl-mnist-comp}--\ref{fig:mtl-fashmnist-comp}
compare the loss-space Pareto fronts of all four methods for a
representative fold under Box (left) and Sphere (right) constraints.
In each panel, the shaded region indicates the feasible set~$Q$
(or~$Q^+$ for the Sphere case), and the legend reports the
ray-level feasibility rate for that fold.
The unconstrained baselines (grey circles, yellow diamonds) spread
well beyond~$Q$, achieving only 36--64\% feasibility.
In contrast, ABP-trained solutions (red squares, green triangles)
concentrate inside~$Q$, reaching 76--100\% feasibility while
maintaining comparable spread along the Pareto front.
Quantitative comparison across all folds should rely on the 10-fold
averages in Tables~\ref{tab:mtl-mnist}--\ref{tab:mtl-fashmnist};
additional per-architecture visualizations under individual constraint
types are provided in Appendix~\ref{app:constraint-viz}.

\begin{figure}
  \centering
  \includegraphics[width=\textwidth]{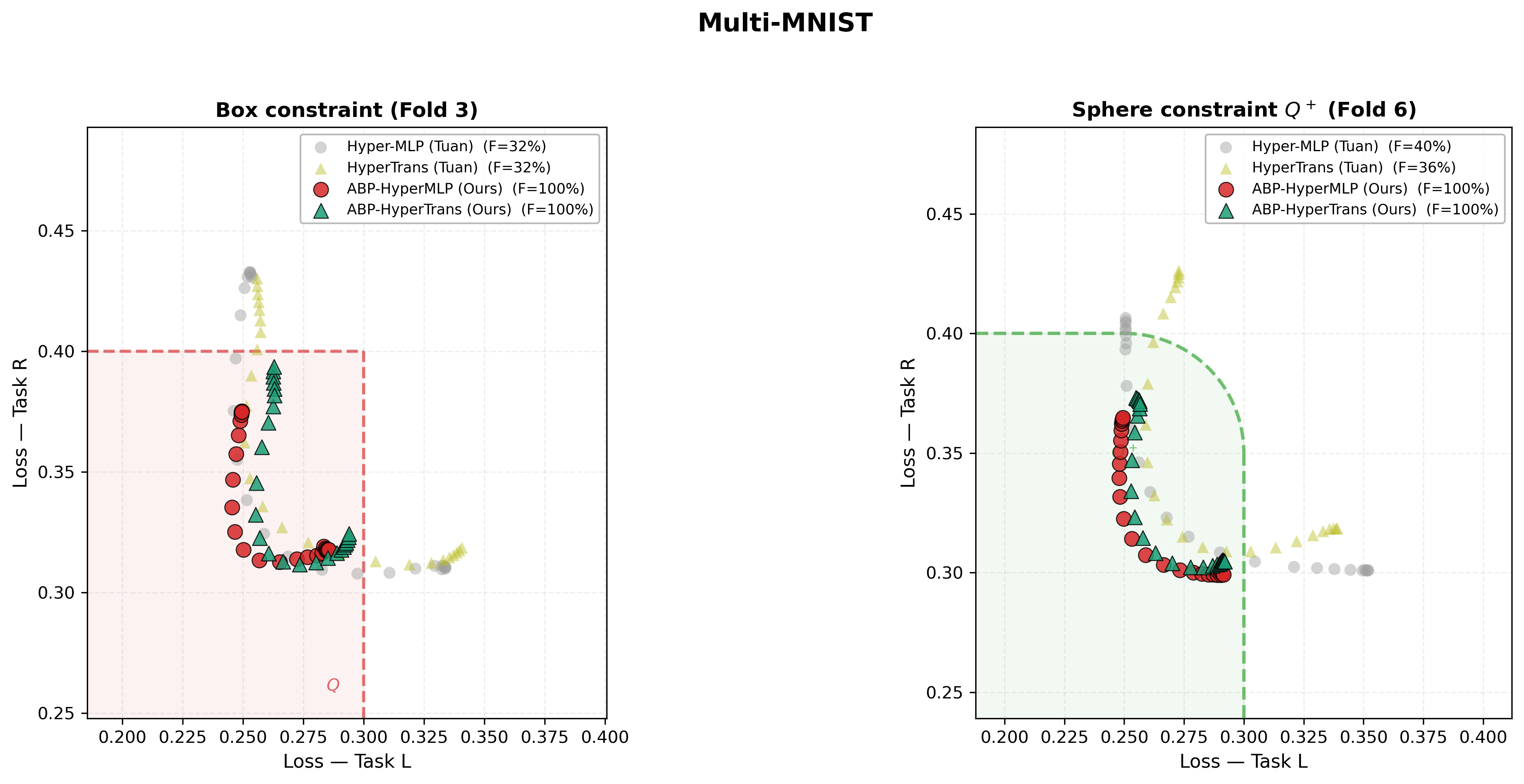}
  \caption{Multi-MNIST: Pareto fronts of all four methods under Box
  (left) and Sphere (right) constraints for a representative fold.
  Shaded regions: constraint set~$Q$ / $Q^+$;
  legend shows ray-level feasibility.}
  \label{fig:mtl-mnist-comp}
\end{figure}

\begin{figure}
  \centering
  \includegraphics[width=\textwidth]{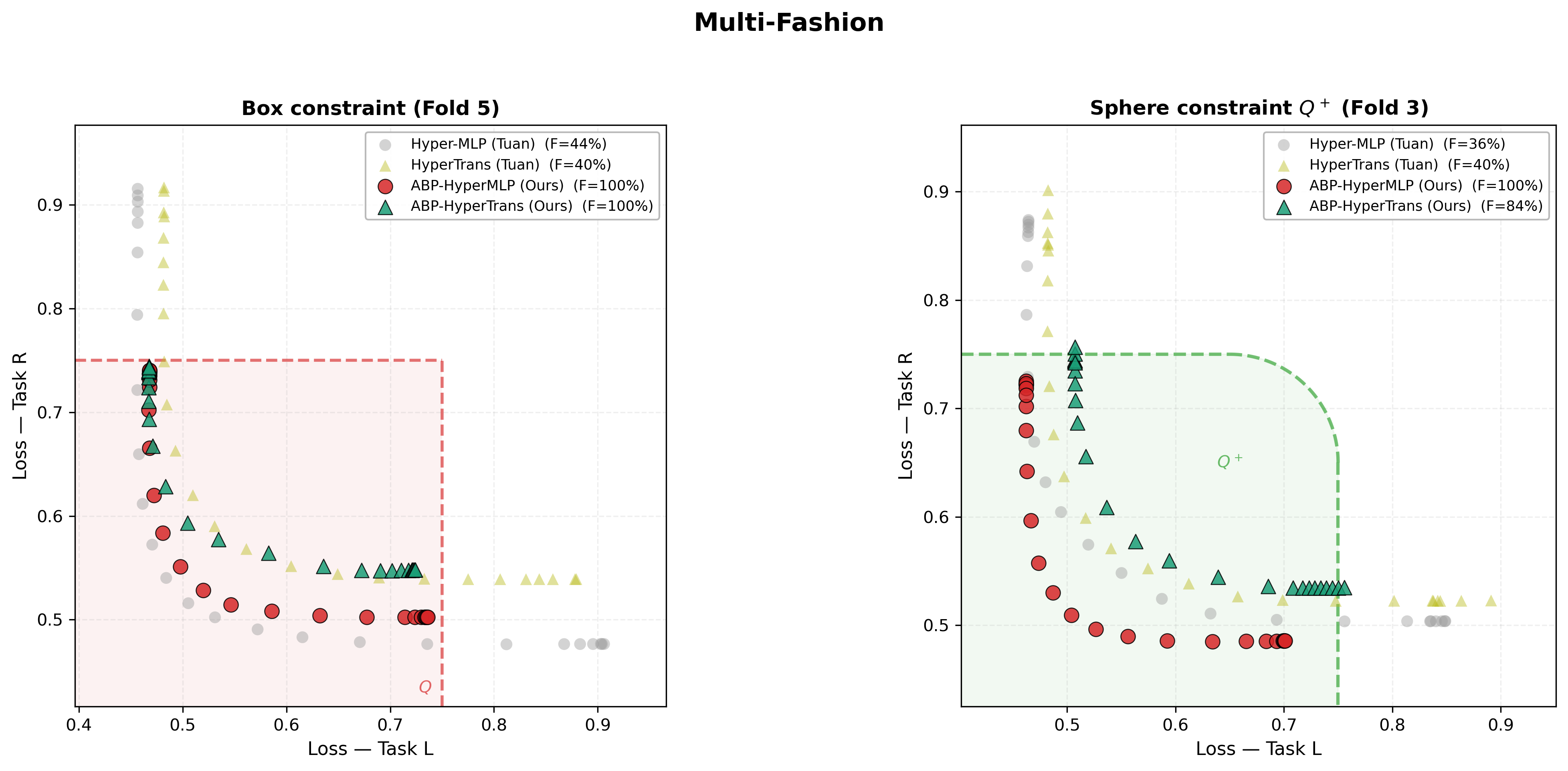}
  \caption{Multi-Fashion: Pareto fronts of all four methods.
  Layout as in Fig.~\ref{fig:mtl-mnist-comp}.}
  \label{fig:mtl-fashion-comp}
\end{figure}

\begin{figure}
  \centering
  \includegraphics[width=\textwidth]{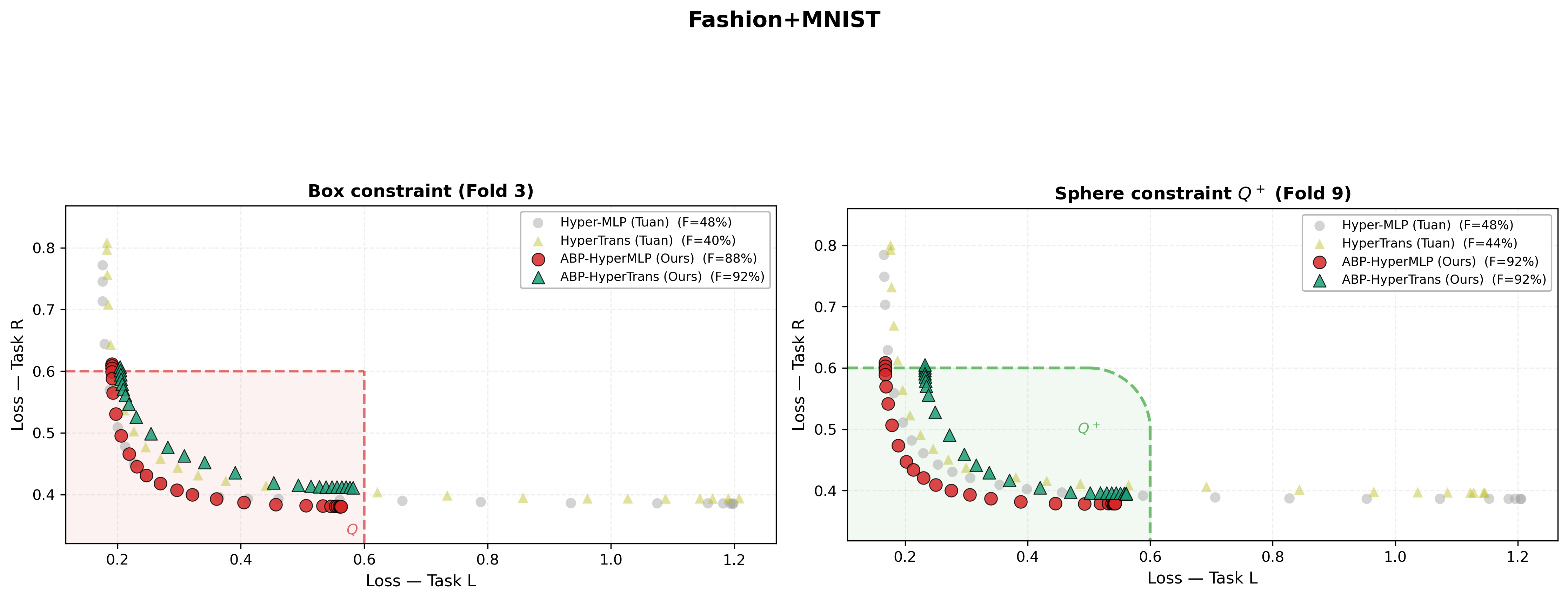}
  \caption{Fashion+MNIST: Pareto fronts of all four methods.
  Layout as in Fig.~\ref{fig:mtl-mnist-comp}.}
  \label{fig:mtl-fashmnist-comp}
\end{figure}

\subsection{Ablation study: training strategy}
\label{subsec:mtl-ablation}

To validate the necessity of each component in the two-phase
training strategy, we conduct an ablation study
(ABP-HyperMLP, Box constraint, 10-fold CV) on all three datasets
and report the mean across datasets in Table~\ref{tab:ablation}.
We compare the Full strategy against four degraded variants.
\textbf{No-Phase1} skips the Phase~1 feasibility warm-up and begins
directly with $\calL_{\mathrm{obj}} + \beta\,\calL_Q$ from epoch~1.
\textbf{No-Adaptive-$\beta$} retains both phases but fixes the
penalty coefficient ($\rho=1.0$, so $\beta$ never grows).
\textbf{Single-Phase} uses a fixed
$\calL_{\mathrm{obj}} + \beta_0\,\calL_Q$ throughout training with
no phase transition and no adaptive~$\beta$.
\textbf{Hard-Switch} uses pure $\calL_Q$ in Phase~1
($\varepsilon\!=\!0$) and pure $\calL_{\mathrm{obj}}$ in Phase~2
($\beta\!=\!0$), creating an abrupt loss discontinuity at the
transition.

\begin{table}[t]
\centering
\small
\caption{Ablation study: mean across three MTL datasets
  (ABP-HyperMLP, Box constraint, 10-fold CV).
  \textbf{Bold}: best per metric.}
\label{tab:ablation}
\setlength{\tabcolsep}{5pt}
\begin{tabular}{@{}lccc@{}}
\toprule
Variant & $\pi$\,(\%) ($\uparrow$) & HV ($\uparrow$) & EFHV ($\uparrow$) \\
\midrule
Full (proposed)
  & $\mathbf{93.2}$
  & $\mathbf{2.736}$
  & $\mathbf{2.550}$ \\
No-Phase1
  & $92.5$
  & $2.736$
  & $2.540$ \\
No-Adaptive-$\beta$
  & $90.5$
  & $2.693$
  & $2.424$ \\
Single-Phase
  & $82.9$
  & $2.684$
  & $2.205$ \\
Hard-Switch
  & $62.7$
  & $2.677$
  & $1.740$ \\
\bottomrule
\end{tabular}
\end{table}

A clear ranking emerges (Table~\ref{tab:ablation}):
Full $\approx$ No-Phase1 $\gg$ No-Adaptive-$\beta$ $>$
Single-Phase $\gg$ Hard-Switch.
The Full strategy achieves the highest EFHV (2.550) and
feasibility (93.2\%).
Hard-Switch suffers the most ($\pi=62.7\%$, EFHV~$=1.740$),
driven by catastrophic feasibility collapse on Multi-Fashion
($\pi=40\%$) and Fashion+MNIST ($\pi=48\%$), confirming that
an abrupt loss discontinuity prevents effective constraint enforcement.
No-Phase1 is close to Full in aggregate
(EFHV~$=2.540$, $-0.4\%$), but exhibits lower robustness:
on Multi-Fashion its feasibility drops to $88.8\%$ versus $94.0\%$
for Full, indicating that the warm-up phase provides a more
reliable starting point on harder constraints.
Removing adaptive penalty adjustment (No-Adaptive-$\beta$)
reduces EFHV by $4.9\%$, while the naive Single-Phase baseline
loses $13.5\%$, confirming the value of both the phase structure
and the dynamic penalty schedule.

\section{Conclusion}\label{sec:conclusion}

We have proposed the Adaptive Balanced Penalty (ABP) algorithm for the
Bi-Level Scalarized Split Problem~\eqref{eq:BSSP} and proved that
the full sequence of iterates converges to an optimal solution under
standard convexity, step-size, and zero-gap assumptions.
The ABP penalty structure is translated into a two-phase,
feasibility-first training strategy for hypernetworks
(ABP-HyperMLP, ABP-HyperTrans), and evaluated through a dedicated
metric---the Expected Feasible Hypervolume (EFHV)---that jointly
captures solution quality and constraint satisfaction.

\textit{Summary of results.}
On five multi-objective benchmarks (Section~\ref{sec:exp-mop}),
the ABP solver closely approximates the ground-truth constrained
Pareto front, and ABP-HyperTrans preserves this accuracy at
real-time inference cost.
On three multi-task learning datasets (Section~\ref{sec:exp-mtl}),
ABP-HyperNet raises feasibility from the baselines' 36--49\%
to 87--100\%, yielding up to $2.3\times$ higher EFHV while
incurring at most $\approx 0.02$ HV cost relative to the
unconstrained upper bound.

\textit{The convex surrogate technique.}
The linearized half-space surrogate~$\delta_k$
(Section~\ref{subsubsec:grad-ineq}) converts the non-convex
image-feasibility function~$G$ into a global subgradient inequality
at each iteration.
This technique is applicable to any split feasibility problem
involving a convex mapping and a convex target set, and may be of
independent interest.

\textit{Limitations and future work.}
The zero-gap Assumption~\eqref{A7} is a structural condition on the
interplay between the Pareto front and the target region~$Q^+$;
developing an algorithm that achieves exact convergence without this
condition---for instance through a dynamically updated lower bound
or a Lagrangian dual approach---is an important open problem.
Extending the convergence analysis to the stochastic, non-convex
regime and scaling to many-objective problems ($m\ge 4$) are
additional directions for future work.

\appendix
\section{Full proofs for Section~\ref{sec:convergence}}\label{app:proofs}

This appendix provides complete proofs for all results stated in
Section~\ref{sec:convergence}. Results are organized following the
logical dependency order of the proof chain. All lemma and theorem
numbers are identical to those in the main body.

\subsection{Technical lemmas (Lemmas~\ref{lem:H}--\ref{lem:ineq-G})}\label{app:tech-lemmas}

\noindent\textbf{Lemma~\ref{lem:H}} \textit{(restated).}
\textit{Under~\eqref{A2}, $H$ is convex and $C^1$ with $\nabla H(x)=x-P_C(x)$.}

\noindent\textit{Role: Used in Lemma~\ref{lem:ineq-H} and Lemma~\ref{lem:lyap}.}

\begin{proof}
$H$ is the infimal convolution of $\tfrac{1}{2}\|\cdot\|^2$ with
the indicator of~$C$, hence convex
\citep[Proposition~12.9]{bauschke2017}.
Differentiability and the gradient formula follow from Danskin's
theorem \citep[Proposition~B.22]{bertsekas1999}.
\end{proof}

\noindent\textbf{Lemma~\ref{lem:phi}} \textit{(restated).}
\textit{Under~\eqref{A1} and~\eqref{A3}, $\varphi$ is convex and $w^k\in\partial\varphi(x^k)$ for every choice of~$i^*$ in~\eqref{eq:wk}.}

\noindent\textit{Role: Used in Lemma~\ref{lem:ineq-phi}.}

\begin{proof}
$\varphi=\max_i r_i(f_i-z_i^*)$ is convex as a pointwise maximum
of convex functions.
Let $\phi_{i^*}:=r_{i^*}(f_{i^*}-z_{i^*}^*)$; then
$\phi_{i^*}(x^k)=\varphi(x^k)$ and $\phi_{i^*}\le\varphi$.
Convexity of $\phi_{i^*}$ gives
$\varphi(y)\ge\phi_{i^*}(y)\ge\varphi(x^k)+\ip{w^k}{y-x^k}$
for all $y$, confirming $w^k\in\partial\varphi(x^k)$.
\end{proof}

\noindent\textbf{Lemma~\ref{lem:Qsub}} \textit{(restated).}
\textit{$Q^+\subseteq Q^+_k$ for every~$k$.}

\noindent\textit{Role: Used in Lemma~\ref{lem:ineq-G} via $\delta_k \leq G$ pointwise.}

\begin{proof}
The projection characterization \citep[Theorem~3.14]{bauschke2017}
gives $\ip{\rho^k}{w-p^k}\le 0$ for every $w\in Q^+$.
\end{proof}

\noindent\textbf{Lemma~\ref{lem:delta-cvx}} \textit{(restated).}
\textit{Under~\eqref{A1} and~\eqref{A3}, $\delta_k$ is convex.}

\noindent\textit{Role: Used in Lemma~\ref{lem:ineq-G} to apply the gradient inequality for the convex function~$\delta_k$.}

\begin{proof}
Trivial when $\rho^k=0$.  For $\rho^k\neq 0$, set
$h_k(x):=\sum_i\rho^k_i f_i(x)$, which is convex since each
$f_i$ is convex and $\rho^k_i\ge 0$ (Corollary~\ref{cor:res}).
Then $\delta_k(x)=\psi(h_k(x)-\ip{\rho^k}{p^k})/(2\|\rho^k\|^2)$
where $\psi(t):=[t]_+^2$ is convex and nondecreasing.
Convexity of $\delta_k$ follows from the composition rule
\citep[Proposition~1.2.4]{hiriart2004}.
\end{proof}

\noindent\textbf{Lemma~\ref{lem:delta-agree}} \textit{(restated).}
\textit{For every~$k$: $\delta_k(x^k)=G_k$, and
$\nabla\delta_k(x^k)=v^k$ (with $v^k=0$ when $\rho^k=0$).}

\noindent\textit{Role: Used in Lemma~\ref{lem:ineq-G} to identify $\nabla\delta_k(x^k) = v^k$.}

\begin{proof}
When $\rho^k=0$: $G_k=\delta_k(x^k)=0$ and $v^k=0$.
When $\rho^k\neq 0$: since
$\ip{\rho^k}{\calF(x^k)-p^k}=\|\rho^k\|^2>0$, the $[\cdot]_+$
in~\eqref{eq:delta-explicit} is active and
$\delta_k(x^k)=\|\rho^k\|^2/2=G_k$.
Setting $\Psi(x):=\ip{\rho^k}{\calF(x)-p^k}$, we have
$\Psi(x^k)>0$ so $\delta_k=\Psi^2/(2\|\rho^k\|^2)$ is $C^1$
near~$x^k$, and the chain rule gives
$\nabla\delta_k(x^k)=\Psi(x^k)\,J_\calF(x^k)^T\rho^k/\|\rho^k\|^2
=J_\calF(x^k)^T\rho^k=v^k$.
\end{proof}

\noindent\textbf{Lemma~\ref{lem:ineq-phi}} \textit{(restated).}
\textit{Under~\eqref{A1}, \eqref{A3}, \eqref{A7}: $\langle w^k, x^k-x^*\rangle\ge\Delta_k$ for all~$k$ and $x^*\in\Omega$.}

\noindent\textit{Role: Used in Lemma~\ref{lem:lyap} (case $\Delta_k \geq 0$).}

\begin{proof}
From $w^k\in\partial\varphi(x^k)$:
$\varphi(x^*)\ge\varphi(x^k)+\ip{w^k}{x^*-x^k}$.
Using $\varphi(x^*)=\varphi^*=\varphi_{\lb}$ (by~\eqref{A7}) and
rearranging gives $\ip{w^k}{x^k-x^*}\ge\Delta_k$.
\end{proof}

\noindent\textbf{Lemma~\ref{lem:ineq-H}} \textit{(restated).}
\textit{Under~\eqref{A2}: $\langle z^k, x^k-x^*\rangle\ge H_k\ge 0$ for all~$k$ and $x^*\in\Omega$.}

\noindent\textit{Role: Used in Lemma~\ref{lem:lyap} (both cases).}

\begin{proof}
Convexity of $H$ and $H(x^*)=0$ give
$0\ge H_k+\ip{z^k}{x^*-x^k}$; rearranging yields the result.
\end{proof}

\noindent\textbf{Lemma~\ref{lem:ineq-G}} \textit{(restated).}
\textit{Under~\eqref{A1}--\eqref{A3}, $v^k$ satisfies a global subgradient inequality for~$G$:
$G(y)\ge G_k+\langle v^k, y-x^k\rangle$ for all $y\in\mathbb{R}^n$, $k\ge 0$.}

\noindent\textit{Role: Used in Lemma~\ref{lem:lyap} (both cases).}

\begin{proof}
When $\rho^k=0$: $G_k=v^k=0$ and~\eqref{eq:ineq-G} holds trivially.
When $\rho^k\neq 0$: Lemma~\ref{lem:Qsub} gives $\delta_k\le G$
pointwise. Using the gradient inequality for the convex function
$\delta_k$ (Lemmas~\ref{lem:delta-cvx} and~\ref{lem:delta-agree}):
$G(y)\ge\delta_k(y)\ge G_k+\ip{v^k}{y-x^k}$.
Setting $y=x^*\in \mathcal{S}$ gives $\ip{v^k}{x^k-x^*}\ge G_k$.
\end{proof}

\subsection{Lyapunov analysis (Lemma~\ref{lem:lyap}, Proposition~\ref{prop:fejer}, Lemma~\ref{lem:dir})}\label{app:lyapunov}

\noindent\textbf{Lemma~\ref{lem:lyap}} \textit{(Lyapunov inequality, restated).}
\textit{Under Assumption~\ref{ass:all}, for every $x^*\in\Omega$ and $k\ge 0$,
$\|x^{k+1}-x^*\|^2 \le \|x^k-x^*\|^2 - (2\lambda_k/\eta_k)\Phi_k + \lambda_k^2$.}

\noindent\textit{Role: Central recursive inequality; used in Lemmas~\ref{lem:sum} and~\ref{lem:lyap-approx}.}

\begin{proof}
Expanding $\|x^{k+1}-x^*\|^2$ using
$x^{k+1}=x^k-(\lambda_k/\eta_k)d^k$ and bounding
$\lambda_k^2\|d^k\|^2/\eta_k^2\le\lambda_k^2$
(since $\eta_k\ge\|d^k\|$), it suffices to show
$\ip{d^k}{x^k-x^*}\ge\Phi_k$.
By~\eqref{eq:dk} and Lemmas~\ref{lem:ineq-H}--\ref{lem:ineq-G}:
\[
  \ip{d^k}{x^k-x^*}
  \ge\alpha_k\,\mathbf{1}_{\{\Delta_k\ge 0\}}\ip{w^k}{x^k-x^*}
  +\beta_k H_k+\gamma_k G_k.
\]
If $\Delta_k\ge 0$: Lemma~\ref{lem:ineq-phi} gives the first term
$\ge\alpha_k\Delta_k=\alpha_k\Delta_k^+$, so
$\ip{d^k}{x^k-x^*}\ge\Phi_k$.
If $\Delta_k<0$: $\Delta_k^+=0$ and the indicator vanishes, so
$\ip{d^k}{x^k-x^*}\ge\beta_k H_k+\gamma_k G_k=\Phi_k$.
\end{proof}

\noindent\textbf{Proposition~\ref{prop:fejer}} \textit{(Quasi-Fej\'er property, restated).}
\textit{Under Assumption~\ref{ass:all}, the sequence
$\{\|x^k-x^*\|^2\}$ converges for every $x^*\in\Omega$;
in particular, $\{x^k\}$ is bounded.}

\noindent\textit{Role: Establishes boundedness of $\{x^k\}$; used in Lemma~\ref{lem:dir} and Proposition~\ref{prop:subseq}.}

\begin{proof}
Drop the nonpositive term in~\eqref{eq:lyap} to get
$\norm{x^{k+1}-x^*}^2\le\norm{x^k-x^*}^2+\lambda_k^2$.
Apply Lemma~\ref{lem:RS} with $b_k:=\lambda_k^2$; summability
holds by~\eqref{A5}.
\end{proof}

\noindent\textbf{Lemma~\ref{lem:dir}} \textit{(restated).}
\textit{Under Assumption~\ref{ass:all}, $\|d^k\|\le\bar M<\infty$ for
all~$k$, and $\mu\le\eta_k\le\bar\eta:=\max(\mu,\bar M)$.}

\noindent\textit{Role: Bounds $\|d^k\|$ and $\eta_k$; used in Lemmas~\ref{lem:sum} and~\ref{lem:ergodic}.}

\begin{proof}
By Proposition~\ref{prop:fejer}, $\{x^k\}$ lies in a compact ball
$K=\bar B(x^*,R)$.
Nonexpansiveness of $P_C$ gives $\|z^k\|\le 2R$.
Continuity of each $\nabla f_i$ on $K$ bounds $\|w^k\|\le M_w$.
Continuity of $\calF$ and $J_\calF$ on $K$ gives
$\|\rho^k\|\le M_\rho$ and $\|J_\calF(x^k)\|_{\mathrm{op}}\le M_J$,
hence $\|v^k\|\le M_J M_\rho$.
By~\eqref{A6}:
$\|d^k\|\le\bar\alpha M_w+2\bar\beta R+\bar\gamma M_J M_\rho
=:\bar M$.
\end{proof}

\subsection{Global convergence: proof of Theorem~\ref{thm:main}}\label{app:global-conv}

\noindent\textbf{Lemma~\ref{lem:sum}} \textit{(restated).}
\textit{$\sum_{k=0}^\infty\lambda_k\Phi_k<\infty$.}

\begin{proof}
Sum~\eqref{eq:lyap} from $0$ to $N-1$ and drop the
nonnegative term $\|x^N-x^*\|^2$:
$\sum_{k=0}^{N-1}(2\lambda_k/\eta_k)\Phi_k
\le\|x^0-x^*\|^2+\sum_{k=0}^{N-1}\lambda_k^2$.
Since $\eta_k\le\bar\eta$ (Lemma~\ref{lem:dir}), letting
$N\to\infty$ gives the result using~\eqref{A5}.
\end{proof}

\noindent\textbf{Lemma~\ref{lem:liminf}} \textit{(restated).}
\textit{$\liminf_{k\to\infty}\Phi_k=0$.}

\begin{proof}
If $\Phi_k\ge\delta>0$ for all $k\ge k_0$, then
$\sum_{k\ge k_0}\lambda_k\Phi_k\ge\delta\sum_{k\ge k_0}\lambda_k=\infty$
by~\eqref{A5}, contradicting Lemma~\ref{lem:sum}.
\end{proof}

\noindent\textbf{Proposition~\ref{prop:simult}} \textit{(restated).}
\textit{There exists a subsequence $\{k_j\}$ with
$\Delta_{k_j}^+\to 0$, $H_{k_j}\to 0$, $G_{k_j}\to 0$, and
$\Delta_{k_j}\to 0$.}

\begin{proof}
Extract $\{k_j\}$ with $\Phi_{k_j}\to 0$ (Lemma~\ref{lem:liminf}).

\textit{Step~1.}
$\Phi_{k_j}\ge\underline\beta H_{k_j}+\underline\gamma G_{k_j}$,
so $H_{k_j}\to 0$ and $G_{k_j}\to 0$.
Then $\alpha_{k_j}\Delta_{k_j}^+=\Phi_{k_j}-\beta_{k_j}H_{k_j}
-\gamma_{k_j}G_{k_j}\to 0$, and $\alpha_{k_j}\ge\underline\alpha>0$
gives $\Delta_{k_j}^+\to 0$.

\textit{Step~2.}
We show $\liminf_j\Delta_{k_j}\ge 0$.  Suppose not; extract a
further subsequence with $x^{k_j}\to\hat x$ and
$\Delta_{k_j}\to\ell_0<0$.
From $H_{k_j}\to 0$ and closedness of $C$: $\hat x\in C$.
From $G_{k_j}\to 0$, continuity of $\calF$, and closedness of
$Q^+$: $\calF(\hat x)\in Q^+$, so $\hat x\in \mathcal{S}$.
By continuity of $\varphi$ and assumption~\eqref{A7}:
$\varphi(\hat x)=\ell_0+\varphi_{\lb}=\ell_0+\varphi^*<\varphi^*$,
contradicting $\hat x\in \mathcal{S}$.
Hence $\Delta_{k_j}\to 0$.
\end{proof}

\noindent\textbf{Proposition~\ref{prop:subseq}} \textit{(restated).}
\textit{There exist a subsequence $\{k_j\}$ and $x^\infty\in\Omega$ with
$x^{k_j}\to x^\infty$.}

\begin{proof}
Take $\{k_j\}$ from Proposition~\ref{prop:simult}; extract a
convergent sub-subsequence $x^{k_j}\to x^\infty$ using boundedness
(Proposition~\ref{prop:fejer}).
Step~2 of Proposition~\ref{prop:simult} (with $\ell_0=0$) shows
$x^\infty\in \mathcal{S}$, and $\Delta_{k_j}\to 0$ with continuity of
$\varphi$ gives $\varphi(x^\infty)=\varphi_{\lb}=\varphi^*$, so
$x^\infty\in\Omega$.
\end{proof}

\noindent\textbf{Theorem~\ref{thm:main}} \textit{(Global convergence, restated).}
\textit{Under Assumption~\ref{ass:all}, $\{x^k\}$ generated by
the ABP algorithm (Algorithm~\ref{alg:main}) converges to some $x^\infty\in\Omega$.}

\begin{proof}
Let $x^\infty$ be as in Proposition~\ref{prop:subseq}.
By Proposition~\ref{prop:fejer} (applied with $x^*:=x^\infty$,
valid for every $x^*\in\Omega$), the sequence
$a_k:=\|x^k-x^\infty\|^2$ converges to some $a^*\ge 0$.
Along $\{k_j\}$: $a_{k_j}\to 0$, so $a^*=0$ and
$x^k\to x^\infty$.
\end{proof}

\subsection{Approximate convergence: proof of Theorem~\ref{thm:approx}}\label{app:approx-conv}

\noindent\textbf{Lemma~\ref{lem:ineq-mod}} \textit{(Modified optimality inner product, restated).}
\textit{Let $x^*\in\Omega$.  Under \textup{\eqref{A1}, \eqref{A3}, (A7')}:
$\langle w^k, x^k-x^*\rangle \ge \Delta_k - \sigma$ for all $k\ge 0$.}

\begin{proof}
The subgradient inequality $w^k\in\partial\varphi(x^k)$ gives
$\varphi(x^*)\ge\varphi(x^k)+\ip{w^k}{x^*-x^k}$, so
\[
  \ip{w^k}{x^k-x^*}
  \;\ge\;
  \varphi(x^k)-\varphi(x^*)
  =\Delta_k+\varphi_{\lb}-\varphi^*
  =\Delta_k-\sigma.
\]
\end{proof}

\noindent\textbf{Lemma~\ref{lem:lyap-approx}} \textit{(Modified Lyapunov inequality, restated).}
\textit{Under \textup{\eqref{A1}--\eqref{A6}, (A7'), (A8)}, for every $x^*\in\Omega$,
$\|x^{k+1}-x^*\|^2 \le \|x^k-x^*\|^2 - (2\lambda_k/\eta_k)\Phi_k + (2\bar\alpha\sigma/\mu)\lambda_k + \lambda_k^2$.}

\begin{proof}
Expanding $x^{k+1}=x^k-(\lambda_k/\eta_k)d^k$:
\[
  \norm{x^{k+1}-x^*}^2
  = \norm{x^k-x^*}^2
    - \frac{2\lambda_k}{\eta_k}\ip{d^k}{x^k-x^*}
    + \frac{\lambda_k^2}{\eta_k^2}\norm{d^k}^2.
\]
Since $\eta_k\ge\norm{d^k}$, the last term satisfies
$\lambda_k^2\norm{d^k}^2/\eta_k^2\le\lambda_k^2$.  It remains to
bound $\ip{d^k}{x^k-x^*}$ from below.

\textit{Case $\Delta_k<0$ (indicator $=0$):}
$d^k=\beta_k z^k+\gamma_k v^k$, so by Lemmas~\ref{lem:ineq-H}
and~\ref{lem:ineq-G},
\[
  \ip{d^k}{x^k-x^*}\ge\beta_k H_k+\gamma_k G_k=\Phi_k
  \ge\Phi_k-\bar\alpha\sigma.
\]

\textit{Case $\Delta_k\ge 0$ (indicator $=1$):}
Using Lemma~\ref{lem:ineq-mod} and Lemmas~\ref{lem:ineq-H}
and~\ref{lem:ineq-G}:
\begin{align*}
  \ip{d^k}{x^k-x^*}
  &\ge \alpha_k(\Delta_k-\sigma)+\beta_k H_k+\gamma_k G_k \\
  &= \alpha_k\Delta_k^+ + \beta_k H_k + \gamma_k G_k - \alpha_k\sigma
  \;=\; \Phi_k - \alpha_k\sigma
  \;\ge\; \Phi_k - \bar\alpha\sigma.
\end{align*}

In both cases $\ip{d^k}{x^k-x^*}\ge\Phi_k-\bar\alpha\sigma$.
Using $\eta_k\ge\mu$:
\[
  -\frac{2\lambda_k}{\eta_k}\ip{d^k}{x^k-x^*}
  \;\le\;
  -\frac{2\lambda_k}{\eta_k}\Phi_k
  + \frac{2\bar\alpha\sigma\,\lambda_k}{\eta_k}
  \;\le\;
  -\frac{2\lambda_k}{\eta_k}\Phi_k
  + \frac{2\bar\alpha\sigma}{\mu}\,\lambda_k.
\]
\end{proof}

\paragraph{The drift term and its consequences.}
Compared with the Lyapunov inequality~\eqref{eq:lyap}, the only new
term is the drift $(2\bar\alpha\sigma/\mu)\lambda_k$.  When $\sigma=0$
it vanishes and the original recursion is recovered.  When $\sigma>0$,
the partial sum $(2\bar\alpha\sigma/\mu)\Lambda_N\to\infty$ as
$N\to\infty$ (since $\sum\lambda_k=\infty$).  Consequently the
Robbins--Siegmund lemma cannot be applied to $\norm{x^k-x^*}^2$, and
the sequence may fail to be bounded; hence Assumption~(A8) is
necessary for the analysis that follows.

\subsubsection{Ergodic and simultaneous approximate bounds}

\noindent\textbf{Lemma~\ref{lem:ergodic}} \textit{(Ergodic bound, restated).}
\textit{Under \textup{\eqref{A1}--\eqref{A6}, (A7'), (A8)}, define
$C_* := \bar\alpha\bar\eta/\mu$ and $\Lambda_N:=\sum_{k=0}^N\lambda_k$. Then
$\limsup_{N\to\infty} \Lambda_N^{-1}\sum_{k=0}^N\lambda_k\Phi_k \le C_*\varepsilon_0$ and
$\liminf_{k\to\infty}\Phi_k \le C_*\varepsilon_0$.}

\begin{proof}
Fix any $x^*\in\Omega$ and sum~\eqref{eq:lyap-mod} from $k=0$ to
$k=N$:
\begin{equation}\label{eq:sum-lyap}
  2\sum_{k=0}^N\frac{\lambda_k}{\eta_k}\,\Phi_k
  \;\le\;
  \norm{x^0-x^*}^2 - \norm{x^{N+1}-x^*}^2
  + \frac{2\bar\alpha\sigma}{\mu}\,\Lambda_N
  + \sum_{k=0}^N\lambda_k^2.
\end{equation}
Since $\eta_k\le\bar\eta$ for all~$k$,
we have $\lambda_k/\eta_k\ge\lambda_k/\bar\eta$, so
\[
  \frac{2}{\bar\eta}\sum_{k=0}^N\lambda_k\Phi_k
  \;\le\;
  2\sum_{k=0}^N\frac{\lambda_k}{\eta_k}\Phi_k.
\]
Combining with~\eqref{eq:sum-lyap} and dropping
$-\norm{x^{N+1}-x^*}^2\le 0$:
\[
  \frac{2}{\bar\eta}\sum_{k=0}^N\lambda_k\Phi_k
  \;\le\;
  \norm{x^0-x^*}^2
  + \frac{2\bar\alpha\sigma}{\mu}\,\Lambda_N
  + \sum_{k=0}^N\lambda_k^2.
\]
Dividing by $\Lambda_N>0$ and taking $N\to\infty$: the term
$(\norm{x^0-x^*}^2+\sum_{k=0}^\infty\lambda_k^2)/\Lambda_N\to 0$
(finite numerator, by~\eqref{A5}; $\Lambda_N\to\infty$, by~\eqref{A5}).
Using $\sigma\le\varepsilon_0$ yields~\eqref{eq:cesaro} after
multiplying by $\bar\eta/2$.

For~\eqref{eq:ergodic}: if $\Phi_k>C_*\varepsilon_0+\delta$ for all
$k\ge k_0$ and some $\delta>0$, then
$\Lambda_N^{-1}\sum_{k=0}^N\lambda_k\Phi_k
>C_*\varepsilon_0+\delta-O(1/\Lambda_N)$,
and taking $N\to\infty$ gives $\limsup>C_*\varepsilon_0$,
contradicting~\eqref{eq:cesaro}.
\end{proof}

\noindent\textbf{Proposition~\ref{prop:simult-approx}} \textit{(Approximate simultaneous bounds, restated).}
\textit{Under \textup{\eqref{A1}--\eqref{A6}, (A7'), (A8)}, there exist a subsequence
$\{k_j\}$ and $\ell_*\in[0,C_*\varepsilon_0]$ with $\Phi_{k_j}\to\ell_*$ and
$H_{k_j} \le \Phi_{k_j}/\underline\beta$, $G_{k_j} \le \Phi_{k_j}/\underline\gamma$,
$\Delta_{k_j}^+ \le \Phi_{k_j}/\underline\alpha$.}

\begin{proof}
By Lemma~\ref{lem:ergodic}, extract a subsequence $\{k_j\}$ with
$\Phi_{k_j}\to\ell_*\le C_*\varepsilon_0$.
Since each summand of $\Phi_k$ is nonneg\-ative:
\[
  \underline\alpha\,\Delta_{k_j}^+\le\Phi_{k_j},\quad
  \underline\beta\,H_{k_j}\le\Phi_{k_j},\quad
  \underline\gamma\,G_{k_j}\le\Phi_{k_j}.
\]
Dividing by the respective lower bounds and letting $j\to\infty$
yields~\eqref{eq:Hbound}--\eqref{eq:Dbound}.  When $\varepsilon_0=0$,
$\ell_*=0$ and each upper bound is~$0$; since each term is also
nonneg\-ative, all three sequences converge to~$0$.
\end{proof}

\paragraph{Remark.}
Equations~\eqref{eq:Hbound}--\eqref{eq:Gbound} provide upper bounds
on $H_{k_j}$ and $G_{k_j}$ whose limits equal $\ell_*/\underline\beta$
and $\ell_*/\underline\gamma$ respectively; these limits are zero only
when $\varepsilon_0=0$.  The cluster-point bounds in
Theorem~\ref{thm:approx} are consequently stated as inequalities~$\le$,
not as equalities, reflecting the fact that continuity of $H$ and $G$
transfers the upper bound---not an exact limit---to any cluster
point~$\hat x$.

\subsubsection{Main approximate convergence theorem}

\noindent\textbf{Theorem~\ref{thm:approx}} \textit{(Approximate convergence, restated).}
\textit{Under \textup{\eqref{A1}--\eqref{A6}, (A7'), (A8)}, let $C_*=\bar\alpha\bar\eta/\mu$.
Every cluster point~$\hat x$ satisfies approximate feasibility in~$C$,
approximate split feasibility, and approximate optimality bounds that
vanish when $\varepsilon_0=0$.}

\begin{proof}
By~(A8) and the Bolzano--Weierstrass theorem, extract a further
subsequence (still denoted $\{k_j\}$) with $x^{k_j}\to\hat x$.

\medskip\noindent
\textit{Proofs of~\ref{thm:feas} and~\ref{thm:sfeas}.}
The function $H(x)=\tfrac12\dist^2(x,C)$ is continuous
(Lemma~\ref{lem:H}).  The function $G(x)=\tfrac12\dist^2(\calF(x),Q^+)$
is continuous by continuity of~$\calF$~\eqref{A1} and continuity of the
projection onto the closed convex set~$Q^+$.  Using continuity of~$H$
and the bound~\eqref{eq:Hbound}:
\[
  H(\hat x)
  =\lim_{j\to\infty}H(x^{k_j})
  =\lim_{j\to\infty}H_{k_j}
  \;\le\;
  \lim_{j\to\infty}\frac{\Phi_{k_j}}{\underline\beta}
  =\frac{\ell_*}{\underline\beta}
  \;\le\;\frac{C_*\varepsilon_0}{\underline\beta}.
\]
The distance bound $\dist(\hat x,C)\le\sqrt{2C_*\varepsilon_0/\underline\beta}$
follows by taking square roots and multiplying by~$\sqrt{2}$.
The argument for $G(\hat x)$ is identical, using~\eqref{eq:Gbound}.

\medskip\noindent
\textit{Proof of~\ref{thm:opt}.}
Since $\varphi$ is continuous~\eqref{A1}, \eqref{A3}:
\[
  \varphi(\hat x)
  =\lim_{j\to\infty}\varphi(x^{k_j})
  =\lim_{j\to\infty}\bigl(\Delta_{k_j}+\varphi_{\lb}\bigr).
\]
From~\eqref{eq:Dbound}, $\Delta_{k_j}\le\Delta_{k_j}^+\le\ell_*/\underline\alpha+o(1)$,
so taking the limit:
\[
  \varphi(\hat x)
  \;\le\;\varphi_{\lb}+\frac{\ell_*}{\underline\alpha}
  \;\le\;\varphi^*+\frac{\ell_*}{\underline\alpha}
  \;\le\;\varphi^*+\frac{C_*\varepsilon_0}{\underline\alpha}.
\]

\medskip\noindent
\textit{Exact convergence when $\varepsilon_0=0$.}
If $\varepsilon_0=0$ then $\ell_*=0$ (Proposition~\ref{prop:simult-approx}).
From~\ref{thm:feas}: $H(\hat x)=0$, so $\dist(\hat x,C)=0$ and
closedness of~$C$ gives $\hat x\in C$.
From~\ref{thm:sfeas}: $G(\hat x)=0$, so $\dist(\calF(\hat x),Q^+)=0$
and closedness of~$Q^+$ gives
$\calF(\hat x)\in Q^+$.
Hence $\hat x\in \mathcal{S}$.
From~\ref{thm:opt}: $\varphi(\hat x)\le\varphi^*$.  Combined with
$\hat x\in \mathcal{S}$ and the definition $\varphi^*=\inf_{x\in \mathcal{S}}\varphi(x)$:
$\varphi(\hat x)=\varphi^*$, so $\hat x\in\Omega$.
\end{proof}


\section{Constraint geometry visualization}\label{app:constraint-viz}

This appendix complements the main-body results
(Tables~\ref{tab:mtl-mnist}--\ref{tab:mtl-fashmnist}) with two
additions:
(i)~the unconstrained ABP baseline
($Q = \mathbb{R}^m$, denoted ``None''), which was not reported in
Section~\ref{subsec:mtl-results}, and
(ii)~per-architecture Pareto front visualizations under all three
constraint settings.
Rather than repeating the absolute HV, $\pi$, and EFHV values already
given in Tables~\ref{tab:mtl-mnist}--\ref{tab:mtl-fashmnist},
Table~\ref{tab:app-cost} reports the \emph{cost of constraint
enforcement}: the unconstrained HV serves as the upper bound, and
$\Delta\text{HV} = \text{HV}_{\text{None}} - \text{HV}_{Q}$ quantifies
how much front coverage is sacrificed to achieve feasibility rate~$\pi$.

\begin{table}[!htbp]
\centering
\small
\caption{Cost of constraint enforcement for ABP-HyperMLP and
  ABP-HyperTrans on each dataset.
  $\text{HV}_{\text{None}}$: unconstrained hypervolume (upper bound);
  $\Delta\text{HV}$: HV loss due to constraint ($>0$ means coverage
  reduced; $\leq 0$ means no loss);
  $\pi$: feasibility rate achieved by the constrained model (from
  Tables~\ref{tab:mtl-mnist}--\ref{tab:mtl-fashmnist}).
  All values are means over 10-fold CV with reference point $(2,2)$.}
\label{tab:app-cost}
\setlength{\tabcolsep}{4pt}
\begin{tabular}{@{}llcccc@{}}
\toprule
Dataset & Architecture & $\text{HV}_{\text{None}}$ &
  Constr.\ & $\Delta\text{HV}$ ($\downarrow$) &
  $\pi$\,(\%) ($\uparrow$) \\
\midrule
\multirow{4}{*}{Multi-MNIST}
  & \multirow{2}{*}{ABP-HyperMLP}
    & \multirow{2}{*}{$2.979$}
      & Box    & $0.021$ & $100.0$ \\
  & &  & Sphere & $0.011$ & $100.0$ \\
\cmidrule(l){2-6}
  & \multirow{2}{*}{ABP-HyperTrans}
    & \multirow{2}{*}{$2.970$}
      & Box    & $0.011$ & $\phantom{0}92.5$ \\
  & &  & Sphere & $0.015$ & $\phantom{0}99.5$ \\
\midrule
\multirow{4}{*}{Multi-Fashion}
  & \multirow{2}{*}{ABP-HyperMLP}
    & \multirow{2}{*}{$2.337$}
      & Box    & $0.021$ & $\phantom{0}94.9$ \\
  & &  & Sphere & $0.015$ & $100.0$ \\
\cmidrule(l){2-6}
  & \multirow{2}{*}{ABP-HyperTrans}
    & \multirow{2}{*}{$2.201$}
      & Box    & $0.005$ & $\phantom{0}90.3$ \\
  & &  & Sphere & $0.006$ & $\phantom{0}89.1$ \\
\midrule
\multirow{4}{*}{Fashion+MNIST}
  & \multirow{2}{*}{ABP-HyperMLP}
    & \multirow{2}{*}{$2.928$}
      & Box    & $0.000$ & $\phantom{0}87.4$ \\
  & &  & Sphere & $-0.007$ & $\phantom{0}88.6$ \\
\cmidrule(l){2-6}
  & \multirow{2}{*}{ABP-HyperTrans}
    & \multirow{2}{*}{$2.847$}
      & Box    & $0.003$ & $\phantom{0}90.9$ \\
  & &  & Sphere & $-0.020$ & $\phantom{0}91.4$ \\
\bottomrule
\end{tabular}
\end{table}

\paragraph{Analysis.}
Table~\ref{tab:app-cost} reveals that the HV cost of constraint
enforcement is consistently small.
Across all 12 (dataset, architecture, constraint) combinations, the
largest positive $\Delta\text{HV}$ is only~$0.021$, corresponding to a
relative loss of less than~$1\%$.
On Fashion+MNIST, $\Delta\text{HV}$ is zero or slightly negative,
indicating that the constraint penalty can act as an implicit
regularizer that marginally \emph{improves} front quality.
Comparing constraint geometries, Sphere constraints generally
achieve higher feasibility than Box for the same architecture
(e.g., ABP-HyperTrans on Multi-MNIST: $\pi = 99.5\%$ vs.\ $92.5\%$;
ABP-HyperMLP on Multi-Fashion: $\pi = 100.0\%$ vs.\ $94.9\%$),
which is consistent with the smoother gradient landscape of the
ball projection $P_{\overline{B}(\bm{c},R)}$ compared to the
axis-aligned Box projection.
The only exception is Fashion+MNIST, where both constraint types yield
similar $\pi$ values ($87$--$91\%$), suggesting that the asymmetric
task structure---rather than constraint geometry---is the dominant
factor affecting feasibility.

\paragraph{Qualitative visualization.}
Figures~\ref{fig:app-mn-mlp}--\ref{fig:app-fm-trans} show the
Pareto fronts produced by each ABP variant under all three constraint
settings for a representative fold of each dataset.
In each panel, the unconstrained solutions (None, blue) trace the
full Pareto curve, while the Box (red) and Sphere (green) solutions
retract into the feasible region---indicated by dashed boundaries
and shading---without collapsing diversity along the front.
The visual patterns are consistent with the quantitative findings
in Table~\ref{tab:app-cost}: front coverage is largely preserved
while solutions are redirected into the target region~$Q$.

\begin{figure}[!htbp]
  \centering
  \includegraphics[width=0.82\textwidth]{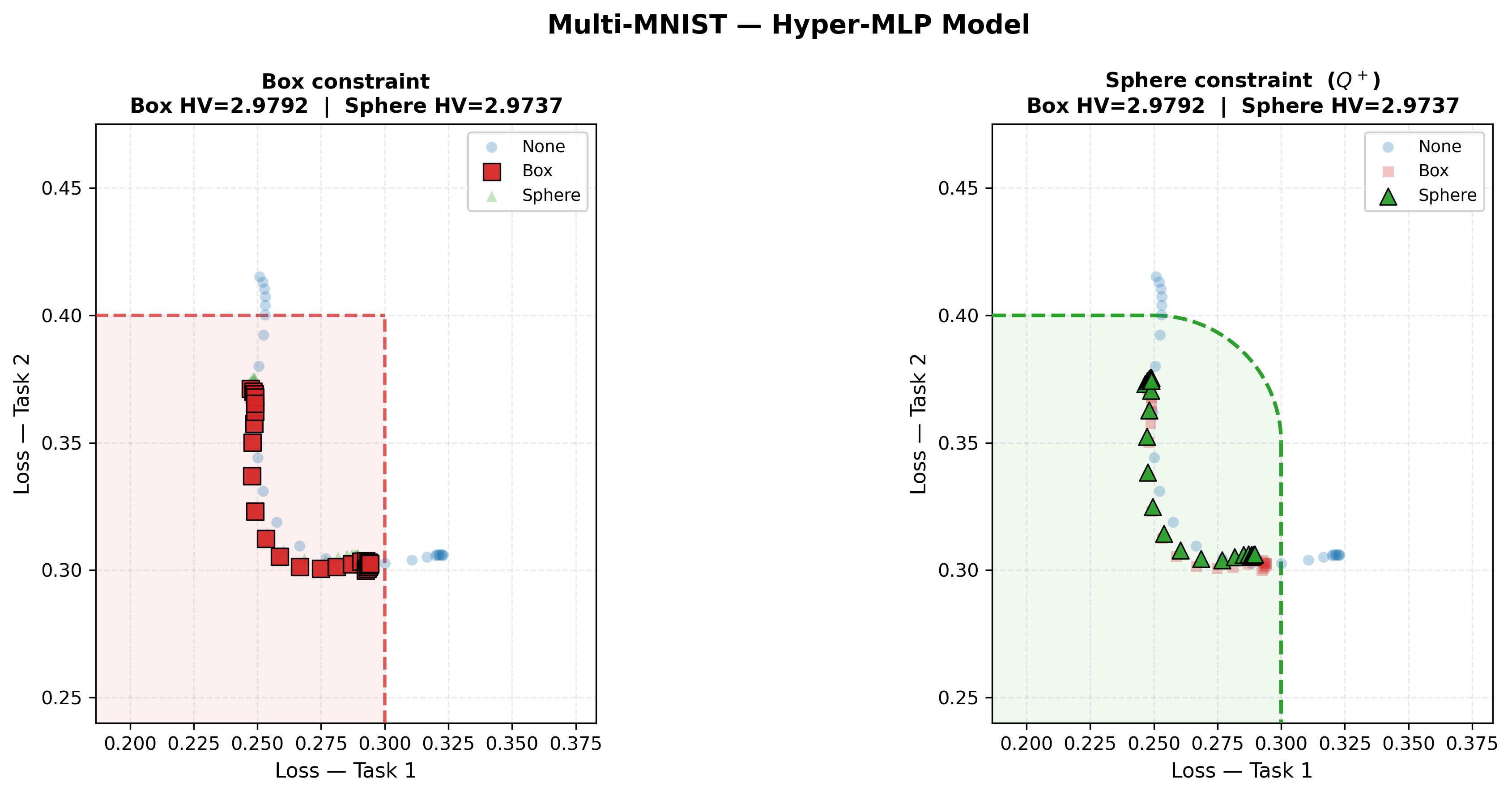}
  \caption{Multi-MNIST, ABP-HyperMLP: Pareto front under None (blue),
  Box (red), and Sphere (green) constraints.
  Dashed lines: constraint boundaries; shaded: $Q$ or $Q^+$.}
  \label{fig:app-mn-mlp}
\end{figure}

\begin{figure}[!htbp]
  \centering
  \includegraphics[width=0.82\textwidth]{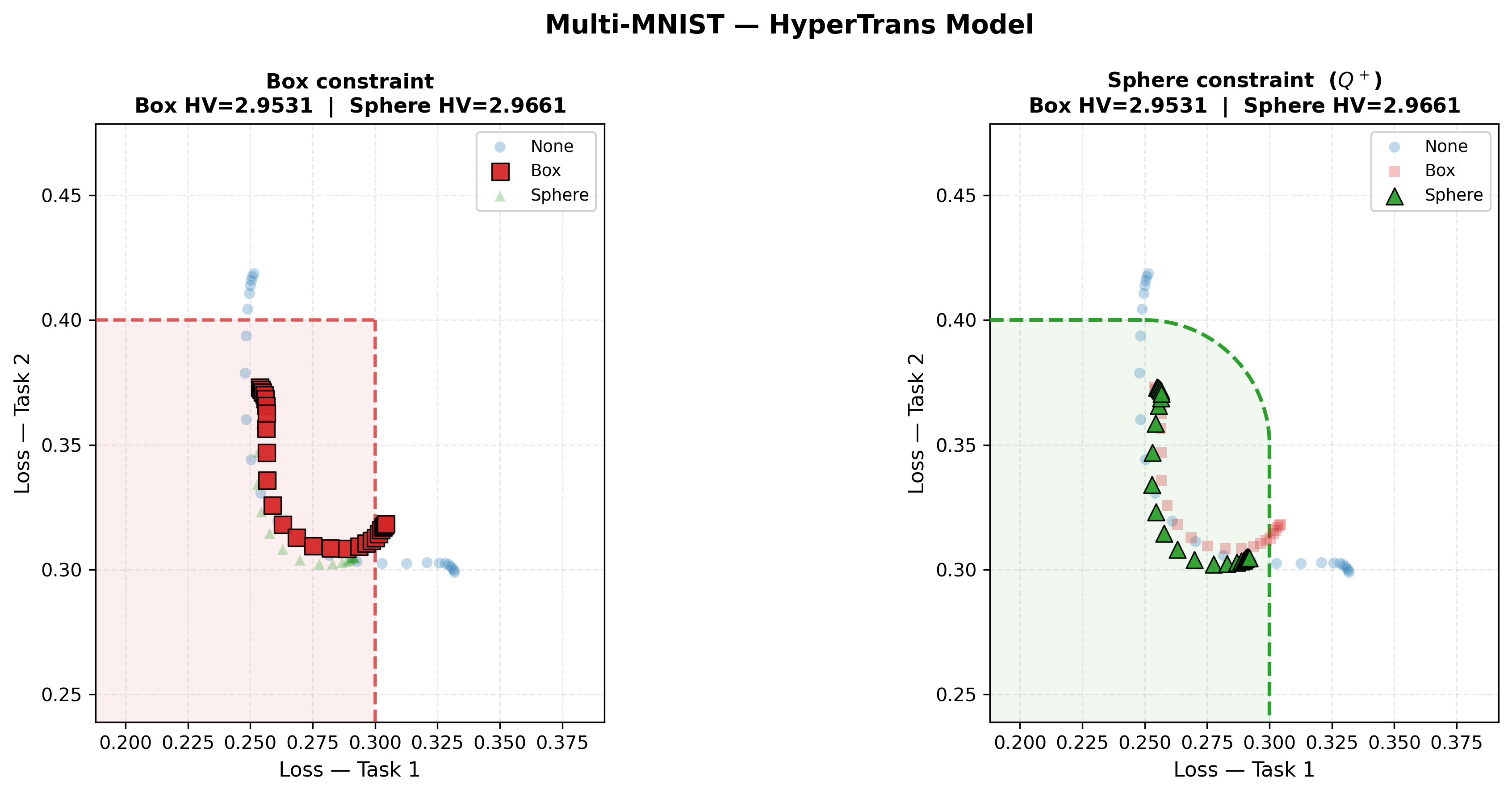}
  \caption{Multi-MNIST, ABP-HyperTrans: layout as in
  Fig.~\ref{fig:app-mn-mlp}.}
  \label{fig:app-mn-trans}
\end{figure}

\begin{figure}[!htbp]
  \centering
  \includegraphics[width=0.82\textwidth]{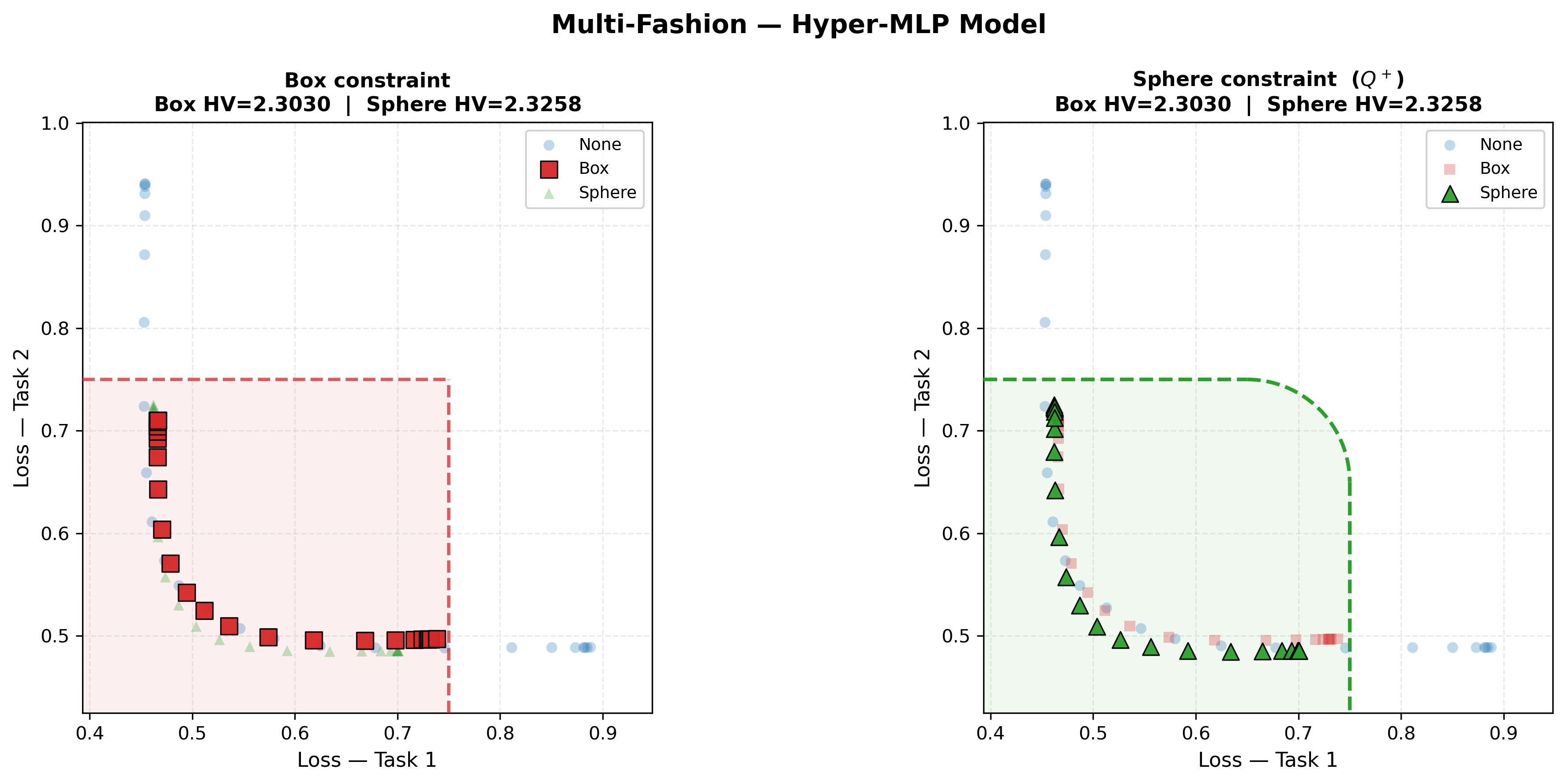}
  \caption{Multi-Fashion, ABP-HyperMLP: layout as in
  Fig.~\ref{fig:app-mn-mlp}.}
  \label{fig:app-mf-mlp}
\end{figure}

\begin{figure}[!htbp]
  \centering
  \includegraphics[width=0.82\textwidth]{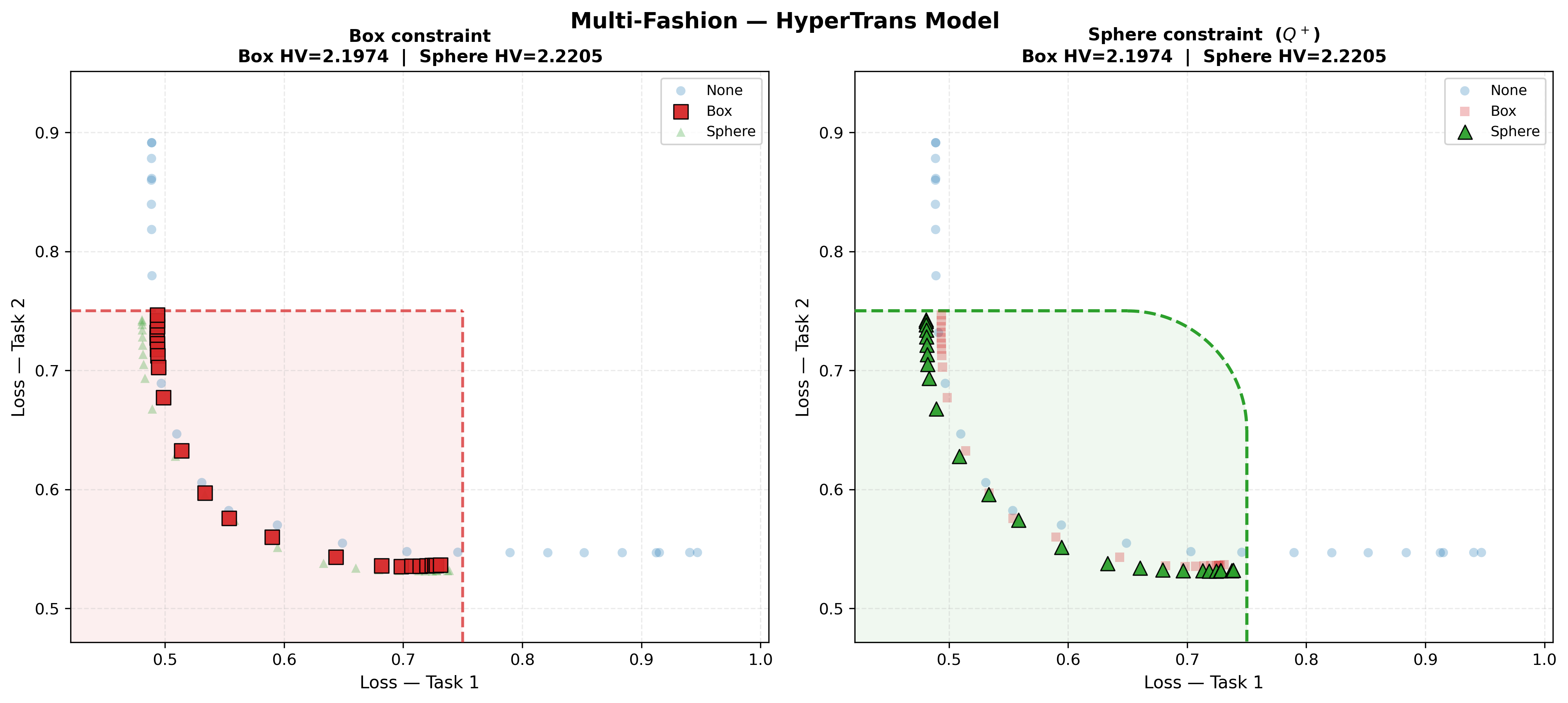}
  \caption{Multi-Fashion, ABP-HyperTrans: layout as in
  Fig.~\ref{fig:app-mn-mlp}.}
  \label{fig:app-mf-trans}
\end{figure}

\begin{figure}[!htbp]
  \centering
  \includegraphics[width=0.82\textwidth]{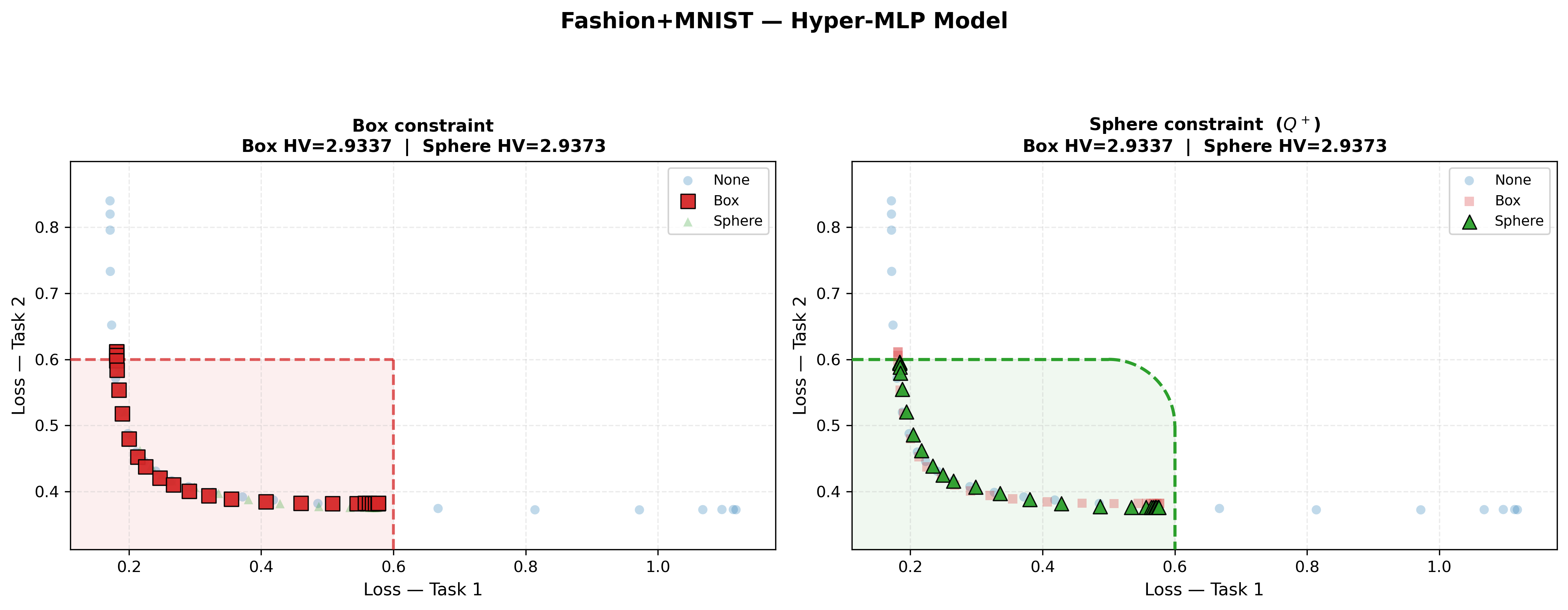}
  \caption{Fashion+MNIST, ABP-HyperMLP: layout as in
  Fig.~\ref{fig:app-mn-mlp}.}
  \label{fig:app-fm-mlp}
\end{figure}

\begin{figure}[!htbp]
  \centering
  \includegraphics[width=0.82\textwidth]{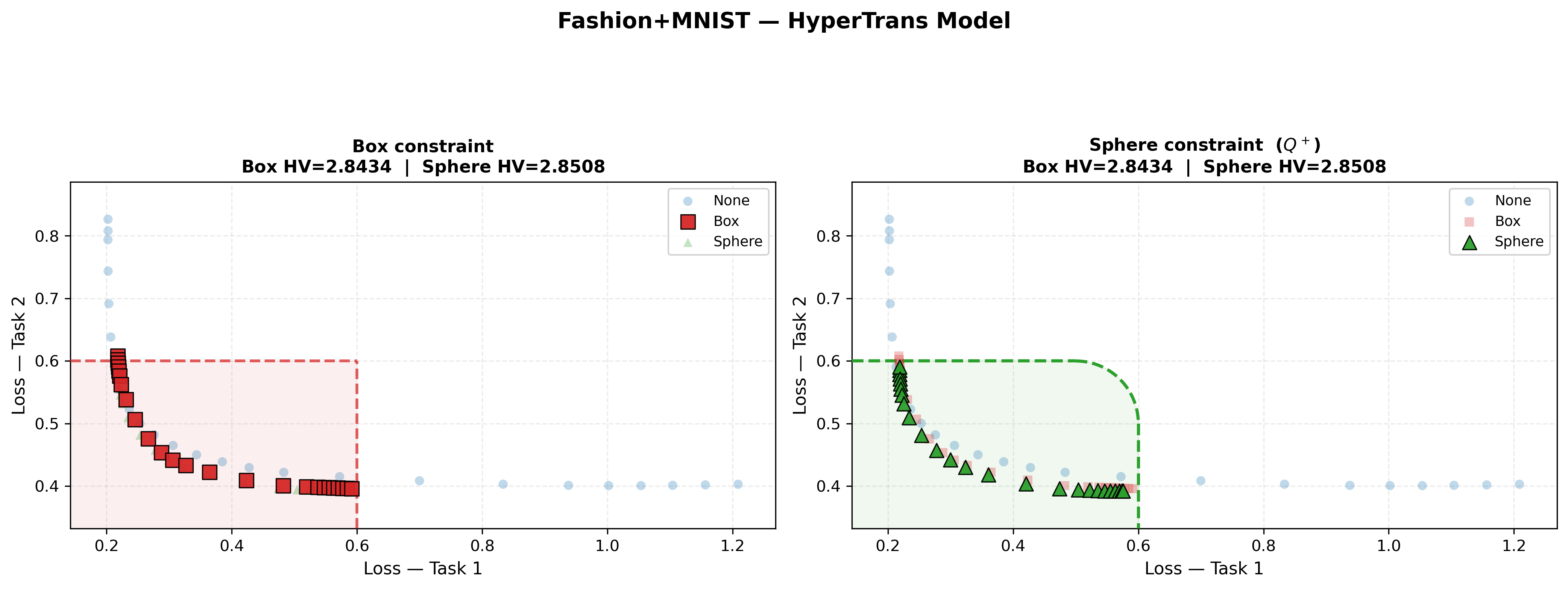}
  \caption{Fashion+MNIST, ABP-HyperTrans: layout as in
  Fig.~\ref{fig:app-mn-mlp}.}
  \label{fig:app-fm-trans}
\end{figure}


\section{Supplementary experimental details}\label{app:supp-exp}

\subsection{Verification of Assumptions~(A7)--(A8) for CVX1 and CVX2}
\label{app:sigma-gap}

For CVX1 and CVX2 the decision set $C$ is a compact box, so
Assumption~(A8) holds automatically for any initialization
$x^0 \in C$.  To assess the zero-gap condition, we compute
\[
  \varphi_{\lb} \;=\; \min_{x \in C}\,\varphi(x;\,\boldsymbol{r})
  \quad\text{and}\quad
  \varphi^* \;=\; \min_{x \in \mathcal{S}}\,\varphi(x;\,\boldsymbol{r})
\]
via SLSQP with analytical Jacobian for each of the 50 preference
rays~$\boldsymbol{r}$.  The resulting gap
$\sigma(\boldsymbol{r}) := \varphi^*(\boldsymbol{r}) -
\varphi_{\lb}(\boldsymbol{r})$ is reported in
Table~\ref{tab:sigma}.  For rays where $\sigma = 0$, the
zero-gap condition~\eqref{A7} holds exactly and Theorem~\ref{thm:main}
guarantees full-sequence convergence.  For rays where $\sigma > 0$,
Theorem~\ref{thm:approx} applies with $\varepsilon_0 = \sigma$.

\begin{table}[!htbp]
\centering
\caption{Optimality gap $\sigma = \varphi^* - \varphi_{\lb}$ over 50
  preference rays.  Full-sequence convergence (Theorem~\ref{thm:main})
  applies on rays with $\sigma = 0$; approximate convergence
  (Theorem~\ref{thm:approx}) applies otherwise.}
\label{tab:sigma}
\begin{tabular}{lcccc}
\hline
Problem & $\sigma_{\min}$ & $\sigma_{\max}$ & $\sigma_{\text{mean}}$
        & Rays with $\sigma = 0$ \\
\hline
CVX1 & 0.0000 & 0.0806 & 0.0090 & 39/50 \\
CVX2 & 0.0000 & 0.0134 & 0.0005 & 44/50 \\
\hline
\end{tabular}
\end{table}

\subsection{Effect of the number of preference rays}
\label{app:ray-ablation}

Table~\ref{tab:mop-rays} reports MED ($\downarrow$) and HV ($\uparrow$)
on CVX2 for ABP-HyperMLP and ABP-HyperTrans as the number of preference
rays varies from 10 to 100.

\begin{table}[!htbp]
\centering
\caption{Effect of the number of preference rays on MED ($\downarrow$)
  and HV ($\uparrow$) for CVX2, reported for a single representative run
  (not multi-seed averages; cf.\ Table~\ref{tab:mop-results} for
  mean\,$\pm$\,std over 10 seeds at 50 rays).
  Best result per metric and architecture is \textbf{bold}.}
\label{tab:mop-rays}
\begin{tabular*}{\tblwidth}{@{\extracolsep{\fill}}l cc cc cc cc@{}}
\toprule
& \multicolumn{2}{c}{10 rays}
& \multicolumn{2}{c}{20 rays}
& \multicolumn{2}{c}{50 rays}
& \multicolumn{2}{c}{100 rays} \\
\cmidrule(lr){2-3}\cmidrule(lr){4-5}\cmidrule(lr){6-7}\cmidrule(lr){8-9}
Method & MED & HV & MED & HV & MED & HV & MED & HV \\
\midrule
ABP-HyperMLP   & 0.00830 & 1.6298 & 0.00535 & 1.6690 & 0.00443 & 1.6886 & \textbf{0.00384} & \textbf{1.6910} \\
ABP-HyperTrans & 0.00461 & 1.6392 & 0.00322 & 1.6788 & 0.00242 & 1.6937 & \textbf{0.00212} & \textbf{1.6982} \\
\bottomrule
\end{tabular*}
\end{table}

\begin{figure}[!htbp]
  \centering
  \includegraphics[width=0.82\textwidth]{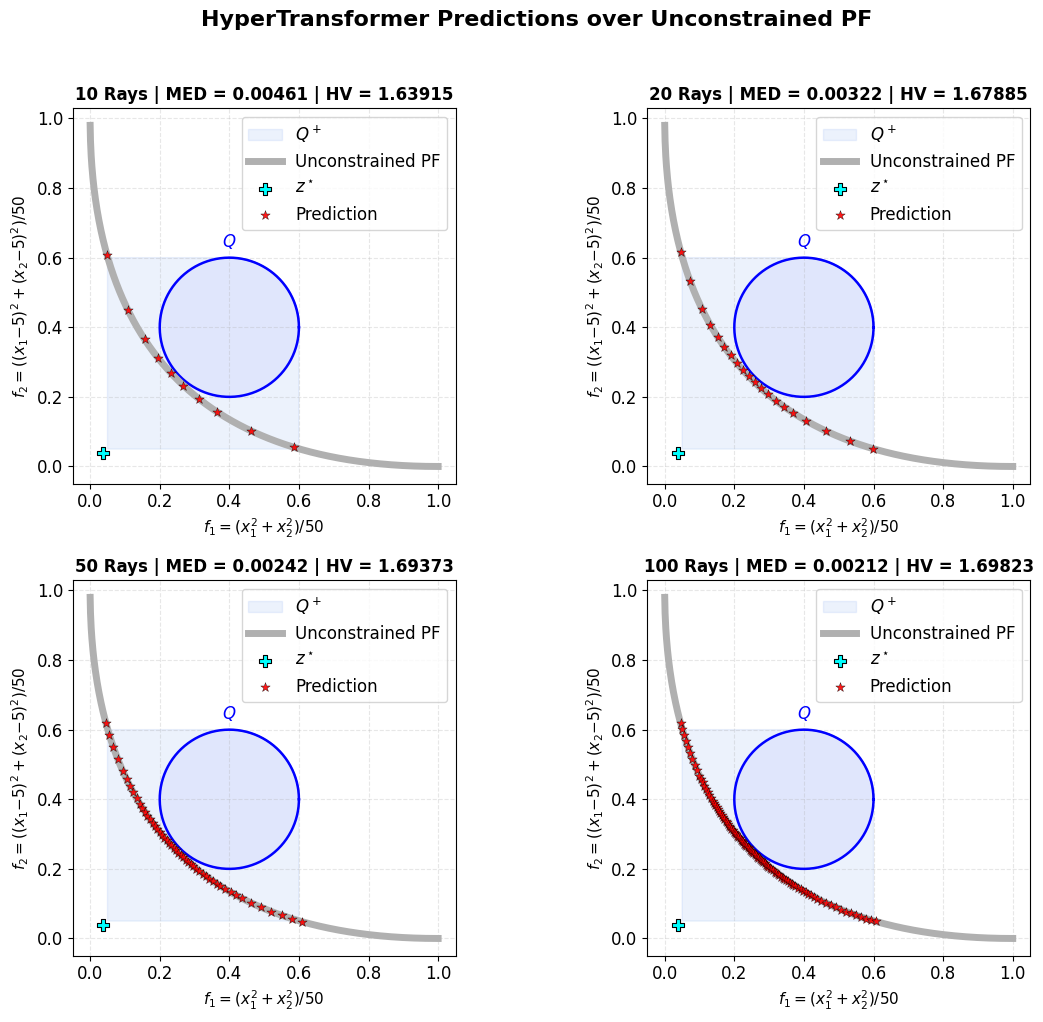}
  \caption{ABP-HyperTrans predictions on CVX2 at 10, 20, 50, and 100
  preference rays.
  Grey curve: unconstrained Pareto front; blue circle: boundary of~$Q$;
  shaded region:~$Q^+$; red stars: predicted solutions;
  cyan cross:~$z^*$.}
  \label{fig:mop-cvx2-rays}
\end{figure}

Both MED and HV improve monotonically with the number of rays:
ABP-HyperTrans's MED drops by 47\% from 10 to 50 rays
(0.00461 $\to$ 0.00242) and by a further 12\% from 50 to 100 rays
(0.00242 $\to$ 0.00212).
Visually, Fig.~\ref{fig:mop-cvx2-rays} confirms that denser ray
sampling yields a more complete coverage of the constrained Pareto
front inside~$Q^+$.
For consistency with the baseline of~\citep{tuan2024}, all MOP
experiments in this paper use 50 rays.

\subsection{Hyperparameter sensitivity}\label{app:sensitivity}

The two-phase training strategy introduces four key hyperparameters
that govern the balance between feasibility enforcement and
Chebyshev optimization.
We conduct a sensitivity study on Multi-MNIST
(Hyper-MLP, Box constraint $\bm{b}=(0.30,\,0.40)$, 300 epochs),
sweeping each hyperparameter across five values while keeping
the remaining ones at their defaults.
Each configuration is evaluated over five independent seeds;
Figure~\ref{fig:sensitivity} reports mean Hypervolume (HV)
and feasibility rate.

\begin{figure}[!htbp]
  \centering
  \includegraphics[width=\textwidth]{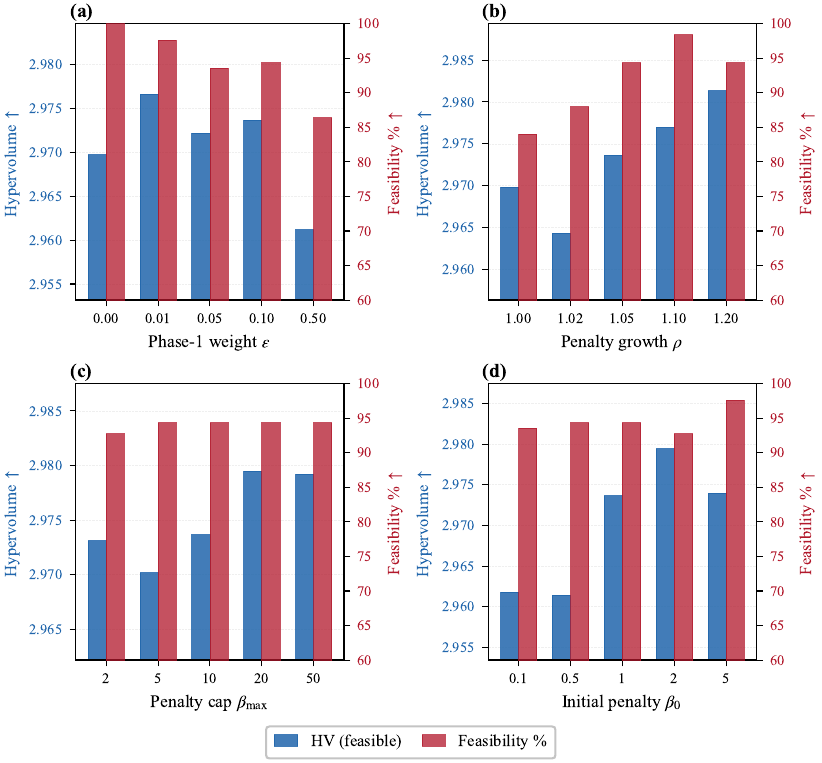}
  \caption{Hyperparameter sensitivity on Multi-MNIST
  (Hyper-MLP, Box constraint) across four key parameters.
  Bars represent the mean over 5 independent seeds for (a)~Phase-1 weight~$\varepsilon$, (b)~penalty growth~$\rho$, (c)~penalty cap~$\beta_{\max}$, and (d)~initial penalty~$\beta_{0}$.}
  \label{fig:sensitivity}
\end{figure}

\textit{Phase-1 objective weight~$\varepsilon$.}
During Phase~1 the loss is
$\mathcal{L}_{Q}+\varepsilon\,\mathcal{L}_{\mathrm{obj}}$.
Setting $\varepsilon=0$ yields 100\% feasibility but the highest
HV variance, because the hypernetwork does not learn the
preference-to-solution mapping during warm-up and must acquire it
from scratch in Phase~2.
Conversely, $\varepsilon=0.50$ overweights the Chebyshev term,
extending Phase~1 to 70 of 80 budgeted epochs and dropping
feasibility to 86\%.
The range $\varepsilon\in[0.01,0.10]$ balances diversity
preservation with rapid feasibility convergence;
we adopt $\varepsilon=0.10$ as the default.

\textit{Penalty growth factor~$\rho$.}
In Phase~2 the constraint coefficient~$\beta$ is multiplied by~$\rho$
(resp.\ $\rho^{-1}\approx0.98$) whenever feasibility drops below
(resp.\ exceeds) the target.
With $\rho=1.00$ (constant~$\beta$) feasibility falls to 84\%
because the restoring force cannot compensate for the
optimality-driven drift.
Increasing $\rho$ to 1.05--1.10 lifts feasibility above 94\%
while improving HV (from 2.970 to 2.977).
At $\rho=1.20$ HV reaches 2.981 but feasibility variance
increases, indicating occasional over-penalization that
flattens the Pareto front approximation in some seeds.
The default $\rho=1.05$ offers a robust trade-off.

\textit{Penalty cap~$\beta_{\max}$.}
The multiplicative cap limits how large~$\beta$ can grow.
For $\beta_{\max}=2$ the ceiling is reached too early and
feasibility drops to 93\%;
for $\beta_{\max}\ge10$ the cap is rarely activated and
performance stabilizes (HV~$\approx2.974$--$2.979$,
feasibility~$\approx94$\%).
We set $\beta_{\max}=10$ as a conservative default.

\textit{Initial penalty~$\beta_{0}$.}
When Phase~2 begins, the penalty coefficient is initialized
to~$\beta_{0}$.
Starting with a weak penalty ($\beta_{0}\le0.5$) causes a
transient feasibility dip as rays drift outside~$Q$ before the
adaptive schedule compensates, reducing HV to~$\approx2.962$.
Values $\beta_{0}\ge1.0$ prevent this drift;
$\beta_{0}=5.0$ yields the best feasibility (97.6\%) at the cost
of slightly reduced solution diversity.
The default $\beta_{0}=1.0$ balances these effects.

Overall, the method is robust across the tested ranges:
HV remains within $[2.961,\,2.985]$ and feasibility stays above
84\% in all 20 configurations.
The most sensitive parameters are the Phase-1 weight
$\varepsilon$ and the growth factor~$\rho$, both of which
directly govern the balance between optimality and constraint
enforcement in their respective phases.

\FloatBarrier

\section*{CRediT authorship contribution statement}
\textbf{Nguyen Viet Hoang:} Methodology, Software, Validation,
Writing -- Original Draft.
\textbf{Dung D. Le:} Supervision, Writing -- Review \& Editing.
\textbf{Tran Ngoc Thang:} Methodology, Conceptualization, Formal Analysis,
Supervision, Writing -- Review \& Editing.

\section*{Declaration of competing interest}
The authors declare that they have no known competing financial
interests or personal relationships that could have appeared to
influence the work reported in this paper.

\section*{Data availability}
The Multi-MNIST, Multi-Fashion, and Fashion+MNIST benchmark datasets
used in this study are publicly available at
\url{https://github.com/longhp1618/MultiSample-Hypernetworks/tree/main/Multi_MNIST/data}.
The MOP benchmark problems (CVX1--3, ZDT1--2) are fully specified by
the analytical definitions in Section~\ref{subsec:mop-benchmarks} and
require no additional data files.
Source code for reproducing the experiments will be made publicly
available in a dedicated repository upon acceptance of the manuscript.

\bibliographystyle{cas-model2-names}


\section*{Declaration of Generative AI and AI-assisted technologies in the writing process}
During the preparation of this work the authors used Claude
(Anthropic) for language editing and improving readability.
After using this tool, the authors reviewed and edited the content
as needed and take full responsibility for the content of the
publication.

\end{document}